\documentclass{article}


\def \TRkeywords{Multi-Task Learning, Generalization Bound, Local Rademacher Complexity.}

\usepackage[hyphens]{url}

\usepackage{./sty/options}
\usepackage{./sty/packages}
\usepackage{./sty/macros}
\usepackage{enumitem}
\usepackage{times,mathtools}

\allowdisplaybreaks

\newcommand{\nbb}{\ensuremath{\mathbb{N}}}
\newcommand{\ebb}{\mathbb{E}}
\newcommand{\rbb}{\mathbb{R}}

\newcommand{\fcal}{\mathcal{F}}
\newcommand{\xcal}{\mathcal{X}}
\newcommand{\hcal}{\mathcal{H}}

\newcommand{\boldf}{\boldsymbol{f}}

\newcommand{\citep}{\cite}


\begin{document}

\maketitle

\begin{abstract}
We show a Talagrand-type concentration inequality for \ac {MTL}, using which we establish sharp excess risk bounds for \ac{MTL} in terms of distribution- and data-dependent versions of the \ac{LRC}.
We also give a new bound on the \ac{LRC} for norm regularized as well as strongly convex hypothesis classes, which applies not only to \ac{MTL} but also to the standard i.i.d. setting.
Combining both results, one can now easily derive fast-rate bounds on the excess risk for many prominent \ac{MTL} methods,
including---as we demonstrate---Schatten-norm, group-norm, and graph-regularized MTL.
The derived bounds reflect a relationship akeen to a conservation law of asymptotic convergence rates. 
This very relationship allows for trading off slower rates \wrt\ the number of tasks for faster rates with respect to the number of available samples per task, when compared to the rates obtained via a traditional, global Rademacher analysis.   
\end{abstract}

\ifMakeReviewDraft
	\linenumbers
\fi

\vskip 0.5in
\noindent
{\bf Keywords:} \TRkeywords

\acresetall
\section{Introduction}
\label{sec:Introduction}

A commonly occurring problem, when applying machine learning in the sciences, is the lack of a sufficient amount of training data to attain acceptable performance results; either obtaining such data may be very costly or they may be unavailable due to technological limitations. For example, in cancer genomics, tumor bioptic samples may be relatively scarce due to the limited number of cancer patients, when compared to samples of healthy individuals. In neuroscience, electroencephalogram experiments are carried out on human subjects to record training data and typically involve only a few dozen subjects. 

When considering any type of prediction task per individual subject in such settings (for example, whether the subject is indeed suffering a specific medical affliction or not), relying solely on the scarce data per individual most often leads to inadequate predictive performance. Such a direct approach completely ignores the advantages that might be gained, when considering intrinsic, strong similarities between subjects and, hence, tasks. Revisiting the area of genomics, different living organisms can be related to each other in terms of their evolutionary relationships -- the information of how they are genetically related to each other can be obtained from the tree of life. Taking into account such relationships may be instrumental in detecting genes of recently developed organisms, for which only a limited number of training data is available. While our discussion so far focused on the realm of biomedicine, similar limitations and opportunities to overcome them exist in other fields as well. 

Transfer learning \citep{pan2010survey} and, in particular, \ac{MTL} \citep{caruana1997} leverages such underlying common links among a group of tasks, while respecting the tasks' individual idiosyncrasies to the extent warranted.
This is achieved by phrasing the learning process as a joint, mutually dependent learning problem. An early example of such a learning paradigm is the neural network-based approach introduced by \cite{caruana1997}, while more recent works consider convex \ac{MTL} problems \citep{ando2005,evgeniou2004regularized,Argyriou2008}. At the core of each \ac{MTL} formulation lies a mechanism that encodes task relatedness into the learning problem \citep{evgeniou2005learning}. 
Such relatedness mechanism can always be thought of as jointly constraining the tasks' hypothesis spaces, so that their geometry is mutually coupled, \eg, via a block norm constraint \citep{Yousefi2015}. Thus, from a regularization perspective, the tasks mutually regularize their learning based on their inter-task relatedness. This process of information exchange during co-learning is often referred to as \emph{information sharing}. With respect to learning-theoretical results, the analysis of \ac{MTL} goes back to the seminal work by \cite{baxter2000}, which was followed up by the works of \cite{ando2005,maurer2006}. Nowadays, \ac{MTL} frameworks are routinely employed in a variety of settings. Some recent applications include computational genetics \citep{widmer2013multi},
image segmentation \citep{an2008}, HIV therapy screening \citep{bickel2008}, collaborative filtering \citep{cao2010}, age estimation from facial images \citep{zhang2010multi}, and sub-cellular location prediction \citep{xu2011multitask}, just to name a few prominent ones.

\ac{MTL} learning guarantees are centered around the notion of (global) Rademacher complexities; notions that were introduced to the machine learning community by \cite{bartlett2002model,Bartlett2003,koltchinskii2000rademacher,koltchinskii2001rademacher,Koltchinskii2002}, 
and employed in the context of MTL by \cite{maurer2006,maurer2006Graph,kakade2012,Maurer2013,Maurer2014,maurer2015}.
All these papers are briefly surveyed in \sref{sec:PreviousRelatedWorks}. 
To formally recapitulate the essence of these works, let $T$ denote the number of tasks being co-learned and $n$ denote the number of available observations per task. Then, in terms of convergence rates in the number of samples and tasks, respectively, the fastest-converging error or excess risk bounds derived in these works -- whether distribution- or data-dependent -- are of the order $O(1/\sqrt{n T})$.

More recently, the seminal works by \cite{Koltchinskii2006} and \cite{bartlett2005} introduced a more nuanced variant of these complexities, termed \ac{LRC} (as opposed to the original \ac{GRC}). This new, modified function class complexity measure is attention-worthy, since, as shown by \cite{bartlett2005}, a \ac{LRC}-based analysis is capable of producing more rapidly-converging excess risk bounds (``fast rates''), when compared to the ones obtained via a \ac{GRC} analysis. This can be attributed to the fact that, unlike \acp{LRC}, \acp{GRC} ignore the fact that learning algorithms typically choose well-performing hypotheses that belong only to a subset of the entire hypothesis space under consideration. The end result of this distinction empowers a local analysis to provide less conservative and, hence, sharper bounds than the standard global analysis. To date, there have been only a few additional works attempting to reap the benefits of such local analysis in various contexts: active learning for binary classification tasks \citep{Koltchinskii2010}, multiple kernel learning \citep{Kloft2011,Cortes2013}, transductive learning \citep{Tolstikhin2014}, semi-supervised learning \citep{Oneto2015} and bounds on the \acp{LRC} via covering numbers \citep{Lei2015}.

\subsection{Our Contributions}

Through a Talagrand-type concentration inequality adapted to the \ac{MTL} case, this paper's main contribution is the derivation of sharp bounds on the excess \ac{MTL} risk  in terms of the distribution- and data-dependent \ac{LRC}. For a given number of tasks $T$, these bounds admit faster (asymptotic) convergence characteristics in the number of observations per task $n$, when compared to corresponding bounds hinging on the \ac{GRC}. Hence, these faster rates allow us to increase the confidence that the \ac{MTL} hypothesis selected by a learning algorithm approaches the best-in-class solution as $n$ increases beyond a certain threshold.
We also prove a new bound on the \ac{LRC}, which generally holds for hypothesis classes with any norm function or strongly convex regularizers. This bound readily facilitates the bounding of the \ac{LRC} for a range of such regularizers (not only for \ac{MTL}, but also for the standard i.i.d. setting), as we demonstrate for classes induced by graph-based, Schatten- and group-norm regularizers. Moreover, we prove matching lower bounds showing that, aside from constants, the \ac{LRC}-based bounds are tight for the considered applications.

Our derived bounds reflect that one can trade off a slow convergence speed \wrt\ $T$ for an improved convergence rate \wrt\ $n$. The latter one ranges, in the worst case, from the typical \ac{GRC}-based bounds $O(1/\sqrt{n})$, all the way up to the fastest rate of order $O(1/n)$ by allowing the bound to depend less on $T$.
Nevertheless, the premium in question becomes less relevant to \ac{MTL}, in which $T$ is typically considered as fixed.

Fixing all other parameters when the number of samples per task $n$ approaches to infinity, our local bounds yield faster rates compared to their global counterparts. Also, it is witnessed that if the number of tasks $T$ and the radius $R$ of the ball-norms---in considered norm regularized hypotheses---can grow with $n$, there are cases in which local analysis always improves other global one. In comparison of our local bounds to some related works \citep{Maurer2013,maurer2006Graph} based on global analysis, one can observe that our bounds give faster convergence rates of the orders $1/T$ and $1/n$ in terms of number of tasks $T$ and number of samples per task $n$, receptively.



\subsection{Organization}

The paper is organized as follows: \sref{sec:Talagrand-Inequalities} lays the foundations for our analysis by considering a Talagrand-type concentration inequality suitable for deriving our bounds. Next, in \sref{sec:GeneralizationBounds}, after suitably defining \acp{LRC} for \ac{MTL} hypothesis spaces, we provide our \ac{LRC}-based \ac{MTL} excess risk bounds. Based on these bounds, we follow up this section with a local analysis of linear \ac{MTL} frameworks, in which task-relatedness is presumed and enforced by imposing a norm constraint. In more detail, leveraging off the Fenchel-Young and H\"{o}lder inequalities, \sref{sec:StrongConvexity} presents generic upper bounds for the relevant \ac{LRC} of any strongly convex as well as any norm regularized hypothesis classes. These results are subsequently specialized to the case of group norm, Schatten norm and graph-regularized linear \ac{MTL}. Then, \sref{sec:StrongConvexityExcessRisk} supplies the corresponding excess risk bounds based on the \ac{LRC} bounds of mentioned hypothesis classes. The paper concludes with \sref{sec:Discussion}, which compares side by side the \ac{GRC}- and \ac{LRC}-based excess risk bounds for the aforementioned hypothesis spaces, as well as two additional related works based on \ac {GRC} analysis.

\subsection{Previous Related Works}
\label{sec:PreviousRelatedWorks}


An earlier work by \cite{maurer2006}, which considers linear \ac{MTL} frameworks for binary classification, investigates the generalization guarantees based on Rademacher averages. In this framework, all tasks are pre-processed by a common bounded linear operator, and operator norm constraints are used to control the complexity of the associated hypothesis spaces. The \ac{GRC}-based error bounds derived are of order $O(1/\sqrt{nT})$ for both distribution-dependent and data-dependent cases. Another study \citep{maurer2006Graph}, provides bounds for the empirical and expected Rademacher complexities of linear transformation classes. Based on H\"{o}lder's inequality, \ac{GRC}-based risk bounds of order $O(1/\sqrt{nT})$ are established for \ac {MTL} hypothesis spaces with graph-based and $L_{S_{q}}$-Schatten norm regularizers, where $q \in \{2\} \cup [4, \infty]$.

The subject of \ac{MTL} generalization guarantees experienced renewed attention in recent years. \cite{kakade2012} take advantage of the strongly-convex nature of certain matrix-norm regularizers to easily obtain generalization bounds for a variety of machine learning problems. Part of their work is devoted to the realm of online and off-line \ac{MTL}. In the latter case, which pertains to the focus of our work, the paper provides a distribution-dependent \ac{GRC}-based excess risk bound of order $O(1/ \sqrt{nT})$. Moreover, \cite{Maurer2013} present a global Rademacher complexity analysis leading to both data and distribution-dependent excess risk bounds of order $O(\sqrt{\log(nT) / nT}$ for a trace norm regularized \ac{MTL} model. Also, \cite{Maurer2014} examines the bounding of (global) Gaussian complexities of function classes that result from considering composite maps, as it is typical in \ac{MTL} among other settings. An application of the paper's results yields \ac{MTL} risk bounds of order $O( 1/\sqrt{nT})$. More recently, \cite{maurer2015} presents excess risk bounds of order $O(1/\sqrt{nT})$ for both \ac{MTL} and \ac{LTL} settings and reveals conditions, under which \ac{MTL} is more beneficial over learning tasks independently.

Finally, due to being domains related to \ac{MTL}, but, at the same time, less connected to the focus of this paper, we only mention in passing a few works that pertain to generalization guarantees in the realm of life-long learning and domain adaptation. Generalization performance analysis in life-long learning has been investigated by \cite{thrun1996,ben2003a,ben2008b,pentina2015a} and \cite{pentina2015b}. Also, in the context of domain adaptation, similar considerations are examined by \cite{Mansour2009, Mansour2009a, Mansour2009b, Cortes2011, Zhang2012, Mansour2013} and \cite{Cortes2014}.

\subsection{Basic Assumptions \& Notation}


Consider $T$ supervised learning tasks sampled from the same input-output space $\mathcal{X} \times \mathcal{Y}$. Each task $t$ is represented by an independent random variable $(X_{t},Y_{t})$ governed by a probability distribution $\mu_{t}$. Also, the \iid\ samples related to each task $t$ are described by the sequence $(X_{t}^{i},Y_{t}^{i})_{i=1}^{n}$, which is distributed according to $\mu_{t}$. 

In what follows, we use the following notational conventions: vectors and matrices are depicted in boldface. The superscript $T$, when applied to a vector/matrix, denotes the transpose of that quantity. We define $\mathbb{N}_T := \left\{  1, \ldots, T \right\} $. For any random variables $X,Y$ and function $g$ we use $\ebb g(X,Y)$ and $\ebb_Xg(X,Y)$  to denote the expectation with respect to (w.r.t.) all the involved random variables and the conditional expectation w.r.t. the random variable $X$. For any vector-valued function $\boldsymbol{f}=(f_1,\ldots,f_T)$, we introduce the following two notations:
$$
   P \boldsymbol{f} := \frac{1}{T} \sum_{t=1}^{T} P f_{t}= \frac{1}{T} \sum_{t=1}^{T} \mathbb{E} (f(X_{t})),
 \qquad
 P_{n}\boldsymbol{f} := \frac{1}{T} \sum_{t=1}^{T} P_{n} f_{t}= \frac{1}{T} \sum_{t=1}^{T} \frac{1}{n} \sum_{i=1}^{n} f(X_{t}^{i}).
$$
We also denote $\boldsymbol{f}^\alpha=(f_1^\alpha,\ldots,f_T^\alpha),\forall\alpha\in\rbb$.
For any loss function $\ell$ and any $\boldsymbol{f}=(f_1,\ldots,f_T)$ we define $\ell_{\boldf} = (\ell_{f_{1}},\ldots,\ell_{f_{T}})$ where $\ell_{f_t}$ is the function defined by $\ell_{f_t}((X_t,Y_t))=\ell(f_t(X_t),Y_t)$. 

Finally, let us mention that, in the subsequent material, measurability of functions and suprema is assumed whenever necessary. Additionally, in later material, operators on separable Hilbert spaces are assumed to be of trace class. 

\section{Talagrand-Type Inequality for Multi-Task Learning}
\label{sec:Talagrand-Inequalities}



The derivation of our LRC-based error bounds for \ac{MTL} is founded on the following modified Talagrand's concentration inequality adapted to the context of \ac{MTL}, showing that the uniform deviation between the true and empirical means in a vector-valued function class $\fcal$ can be dominated by the associated \emph{multi-task Rademacher complexity} plus a term involving the variance of functions in $\fcal$. We defer the proof of this theorem in Appendix \ref{A}.

 \begin{thrm}[\textsc{Talagrand-Type Inequality for MTL}]
 \label{TalagrandforMTL}
 Let $\mathcal{F}=\{\boldsymbol{f} : =(f_1,\ldots,f_T)\}$ be a class of vector-valued functions satisfying $\sup_{t,x}|f_t(x)|\leq b$. Let $X : =(X^{i}_{t})_{(t,i)=(1,1)}^{(T,N_{t})}$ be a vector of $\sum_{t=1}^{T} N_{t}$ independent random variables where $X_{t}^{1},\ldots, X_{t}^{n},\forall t$ are identically distributed. Let $\{\sigma_t^i\}_{t,i}$ be a sequence of independent Rademacher variables. If $ \frac{1}{T} \sup_{\boldsymbol{f} \in \mathcal{F}}  \sum_{t=1}^{T}  \mathbb{E} \left[ f_t(X^{1}_{t})\right]^2 \leq r$, then for every $x > 0$, with probability at least $1-e^{-x}$,
     \begin{align}
     \label{TalagrandforMTLInq}
       \sup_{\boldsymbol{f} \in \mathcal{F}} (P \boldsymbol{f}-P_{n}\boldsymbol{f}) \leq 4 \mathfrak{R}
       (\mathcal{F}) + \sqrt{\frac{8xr}{nT} }+ \frac{12bx}{nT},
     \end{align}
 \noindent
 where $n:=\min_{t \in \mathbb{N}_{T}} N_{t}$, and the multi-task Rademacher complexity of function class $\mathcal{F}$ is defined as
    \begin{align}
    \label{RCDefinition}
    \mathfrak{R} (\mathcal{F}) := \mathbb{E}_{X, \sigma} \left\lbrace \sup_{\boldsymbol{f}=(f_1,\ldots,f_T) \in \mathcal{F}} \frac{1}{T}  \sum_{t=1}^{T} \frac{1}{N_{t}} \sum_{i=1}^{N_{t}} \sigma_{t}^{i} f_{t}(X_{t}^{i}) \right\rbrace .
    \nonumber
    \end{align}
   Note that the same bound also holds for $ \sup_{\boldsymbol{f} \in \mathcal{F}} (P_n \boldsymbol{f}-P \boldsymbol{f})$.
\end{thrm}

In Theorem \ref{TalagrandforMTL}, the data from different tasks are assumed to be mutually independent, which is typical in \ac{MTL} setting \citep{maurer2006}. To present the results in a clear way, we always assume in the following that the available data for each task is the same, namely $n$.
 
\begin{Remark}
At this point, we would like to present the result of the above theorem for the special case $T=1$, which corresponds to the traditional single task learning framework. It is very easy to verify that for $T=1$, the bound in \eqref{TalagrandforMTLInq} can be written as
\begin{equation}
\label{OurTalagrand}
 \sup_{f \in \mathcal{F}} (Pf - P_{n}f) \leq 4 \mathfrak{R}(\mathcal{F}) + \sqrt{\frac{8xr}{n} } + \frac{12bx}{n},
\end{equation}
where the function $f$ is chosen from an scalar-valued function class $\fcal$. This bound can be compared to the result of Theorem 2.1 of \cite{bartlett2005} (for $\alpha=1$), which reads as  
 \begin{equation}
 \label{BartlettTalagrand}
 \sup_{f \in \mathcal{F}} (P f- P_{n} f) \leq  4 \mathfrak{R}(\mathcal{F}) + \sqrt {\frac {2xr} {n} } +   \frac{8bx}{n}.
  \end{equation}
  Note that the difference between the constants in \eqref{OurTalagrand} and \eqref{BartlettTalagrand} is due to the fact that we failed in directly applying Bousquet's version of Talagrand inequality---similar to what has been done by \cite{bartlett2005} for scalar-valued functions---to the class of vector-valued functions. To make it more clear, let $Z$ be defined as \eqref{Z} with the jackknife replication $Z_{s,j}$ for which a lower bound $Z^{''}_{s,j}$ can be found such that $ Z^{''}_{s,j} \leq Z-Z_{s,j}$. Then, in order to apply Theorem 2.5 of \cite{bousquet2002Thesis}'s work, one needs to show that the quantity $\frac{1}{nT} \sum_{s=1}^{T} \sum_{j=1}^{n} \ebb_{s,j} [(Z^{''}_{s,j})^{2}]$ is bounded. This goal, ideally, can be achieved by including a constraint similar to $ \frac{1}{T} \sup_{\boldsymbol{f} \in \mathcal{F}}  \sum_{t=1}^{T}  \mathbb{E} \left[ f_t(X^{1}_{t})\right]^2 \leq r$ in \thrmref{TalagrandforMTL}. However, we could not come up with any obvious and meaningful way---appropriate for \ac{MTL}---of defining this constraint to satisfy the boundedness condition $\frac{1}{nT} \sum_{s=1}^{T} \sum_{j=1}^{n} \ebb_{s,j} [(Z^{''}_{s,j})^{2}]$ in terms of $r$.
  We would like emphasize that the key ingredient to the proof of \thrmref{TalagrandforMTL} is the so-called \emph{Logarithmic Sobolev} inequality---\thrmref{SobolevInq}---which can be considered as the exponential version of \emph{Efron-Stein} inequality.
\end{Remark}

\section{Excess \ac{MTL} Risk Bounds based on Local Rademacher Complexities}
\label{sec:GeneralizationBounds}

The cornerstone of \sref{sec:Talagrand-Inequalities}'s results is the presence of an upper bound of an empirical process's variance (the second term in the right-hand side of \eqref{TalagrandforMTLInq}). In this section, we consider the Rademacher averages associated with a smaller subset of the function class $\mathcal{F}$ and use them as a complexity term in the context of excess risk bounds. As pointed out by \cite{bartlett2005}, these (local) averages are always smaller than the corresponding global Rademacher averages and allow for eventually deriving sharper generalization bounds. Herein, we exploit this very fact for \ac{MTL} generalization guarantees. 


Theorem \ref{TalagrandforMTL} motivates us to extend the definition of classical \ac{LRC} $\mathfrak{R}(\mathcal{F}^{sclr},r)$ for a scalar-valued function class $\mathcal{F}^{sclr}$ as 
$$\mathfrak{R}(\mathcal{F}^{sclr},r) := \mathbb{E}_{X,\sigma} \big[\sup_{f \in \mathcal{F}^{sclr}, V(f) \leq r} \frac{1}{n} \sum_{i=1}^{n} \sigma_{i} f(X_i) \big].$$
to the \ac {MT-LRC} using the following definition.
\begin{defin}[\textsc{Multi-Task Local Rademacher Complexity}]
\label{ELRCBound}
For a vector-valued function class $\mathcal{F}$, the \emph{Local Rademacher Complexity} $\mathfrak{R}(\mathcal{F},r)$ and its empirical counterpart $\hat{\mathfrak{R}}(\mathcal{F}, r)$ are defined as
\begin{equation}
 \begin{aligned}
\label{MT-LRC}
  \mathfrak{R}(\mathcal{F},r) & := \mathbb{E} \bigg[ \sup_{\substack{\boldsymbol{f}=(f_{1},\ldots,f_{T}) \in \mathcal{F}\\ V(\boldsymbol{f}) \leq r }} \frac{1}{nT} \sum_{t=1}^T\sum_{i=1}^{n} \sigma_t^{i} f_t(X_t^{i}) \bigg],
  \\
  \hat{\mathfrak{R}}(\mathcal{F}, r) & := \mathbb{E}_{\sigma} \bigg[  \sup_{\substack{\boldsymbol{f}=(f_{1},\ldots,f_{T}) \in \mathcal{F}\\ V_{n}(\boldsymbol{f}) \leq r }} \frac{1}{nT} \sum_{t=1}^T\sum_{i=1}^{n} \sigma_{t}^{i} f_t(X_{t}^{i}) \bigg],
  \end{aligned}
\end{equation}
where $V(\boldsymbol{f})$ and $V_{n}(\boldsymbol{f})$ are upper bounds on the variances and empirical variances of the functions in $\mathcal{F}$, respectively. This paper makes the choice $V(\boldsymbol{f})=P\boldsymbol{f}^2$ and $V_n(\boldsymbol{f})=P_n\boldsymbol{f}^2$ where
\begin{align}
P     \boldf^{2} & := \frac{1}{T} \sum_{t=1}^{T} P f_{t}^{2} = \frac{1}{T} \sum_{t=1}^{T} \ebb \left[  f_{t}(X_{t})\right] ^{2},
\nonumber\\
P_{n} \boldf^{2} & : = \frac{1}{T} \sum_{t=1}^{T} P_{n} f_{t}^{2} = \frac{1}{T} \sum_{t=1}^{T} \frac{1}{n} \sum_{i=1}^{n} (f_{t} (X_{t}^{i}))^{2}.
\nonumber
\end{align}
\end{defin}
%

Analogous to single task learning, the challenge in applying MT-LRCs in \eqref{MT-LRC} to refine the existing learning rates is to find an optimal radius trading-off the size of the set $\{\boldsymbol{f}\in\fcal:V(\boldsymbol{f})\leq r\}$ and its complexity, which, as we show later, reduces to the calculation of the fixed-point of a sub-root function.

\begin{defin}[\textsc{Sub-Root Function}]
\label{SubRootFunction}
A function $\psi : [0, \infty] \rightarrow [0 , \infty]$ is sub-root if
\begin{enumerate}
\item $\psi$ is non-negative,
\item $\psi$ is non-decreasing,
\item $r \mapsto \psi(r)/ \sqrt{r}$ is non-increasing for $r > 0$.
\end{enumerate}
\end{defin}
The following lemma is an immediate consequence of the above definition.
 \begin{lemm}[Lemma 3.2 in \cite{bartlett2005}]
 If $\psi$ is a sub-root function, then it is continuous on $[0, \infty]$, and the equation $\psi(r) = r$ has a unique (non-zero) solution which is known as the fixed point of $\psi$ and it is denotes by $r^{*}$. Moreover, for any $r > 0$, it holds that $r > \psi(r)$ if and only if $r^{*} \leq r$.
 \end{lemm}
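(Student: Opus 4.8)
The plan is to follow the classical real-analysis argument behind this lemma. Write $\psi$ for the sub-root function and recall its three defining properties: non-negativity, monotonicity, and --- crucially --- that $r\mapsto\psi(r)/\sqrt r$ is non-increasing on $(0,\infty)$. I would tacitly assume $\psi\not\equiv 0$ (otherwise no non-zero fixed point can exist); note this is automatic as soon as $\psi(r_0)>0$ for a single $r_0>0$, since property~3 then forces $\psi(r)\ge\psi(r_0)\sqrt{r/r_0}>0$ for $0<r\le r_0$, while monotonicity gives $\psi(r)\ge\psi(r_0)>0$ for $r\ge r_0$.

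\emph{Continuity.} Because $\psi$ is monotone, one-sided limits exist everywhere, so it suffices to preclude jumps. Fix $r_0>0$. From property~3 one has $\psi(r)\le\psi(r_0)\sqrt{r/r_0}$ for $r>r_0$, hence $\inf_{r>r_0}\psi(r)\le\psi(r_0)$; together with $\psi(r)\ge\psi(r_0)$ this gives right-continuity at $r_0$. Symmetrically, $\psi(r)\ge\psi(r_0)\sqrt{r/r_0}$ for $0<r<r_0$ gives $\sup_{0<r<r_0}\psi(r)\ge\psi(r_0)$, which combined with $\psi(r)\le\psi(r_0)$ yields left-continuity. Finally, letting $r\downarrow 0$ in $\psi(r)\le\psi(r_0)\sqrt{r/r_0}$ shows $\psi(r)\to 0$, and since $0\le\psi(0)\le\psi(r)$ for every $r>0$ this forces $\psi(0)=0$. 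Hence $\psi$ is continuous on $[0,\infty]$.

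\emph{Existence and uniqueness of $r^*$.} I would introduce $h(r):=\psi(r)/\sqrt r$ on $(0,\infty)$, which by property~3 is non-increasing and, by the remark above, strictly positive. The fixed-point equation $\psi(r)=r$ is equivalent to $h(r)=\sqrt r$, so I would study the continuous function $\phi(r):=h(r)-\sqrt r$. Being non-increasing, $h$ has a limit in $(0,\infty]$ as $r\downarrow 0$, whereas $\sqrt r\downarrow 0$, so $\phi>0$ near $0$; being non-increasing and non-negative, $h$ has a finite limit as $r\to\infty$, whereas $\sqrt r\to\infty$, so $\phi<0$ for all large $r$. The intermediate value theorem then produces $r^*>0$ with $\phi(r^*)=0$, i.e.\ $\psi(r^*)=r^*$. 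Uniqueness follows since two distinct positive fixed points $r_1<r_2$ would give $h(r_1)=\sqrt{r_1}<\sqrt{r_2}=h(r_2)$, contradicting that $h$ is non-increasing.

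\emph{The comparison criterion, and the main obstacle.} For $r>0$ I would rewrite $r>\psi(r)$ as $\sqrt r>h(r)$ and $r\ge\psi(r)$ as $\sqrt r\ge h(r)$. If $r>r^*$ then $h(r)\le h(r^*)=\sqrt{r^*}<\sqrt r$; if $r=r^*$ then $h(r)=\sqrt r$; if $0<r<r^*$ then $h(r)\ge h(r^*)=\sqrt{r^*}>\sqrt r$. This trichotomy --- strict inequality on either side of $r^*$ and equality exactly at $r^*$ --- immediately yields the claimed equivalence between $r\ge\psi(r)$ and $r^*\le r$ (and likewise for the strict inequalities). The argument is entirely elementary; the only points needing care are the behavior at the left endpoint $r=0$ --- where one must separately argue $\psi(0)=0$ and continuity there --- and the exclusion of the degenerate case $\psi\equiv 0$. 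Past those, everything reduces to playing the monotonicity of $h$ against the strict monotonicity of $\sqrt{\cdot}$.
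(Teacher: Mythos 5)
The paper itself offers no proof of this lemma---it is imported verbatim as Lemma 3.2 of \cite{bartlett2005}---so your proposal can only be compared with the standard argument, and in its core it coincides with it: you reduce $\psi(r)=r$ to $h(r)=\sqrt r$ with $h(r):=\psi(r)/\sqrt r$, obtain existence of a positive solution from the intermediate value theorem (using that $h$ is non-increasing, positive, finite at infinity), and get uniqueness plus the comparison trichotomy by playing the monotonicity of $h$ against the strict growth of $\sqrt r$. That part, the continuity argument at interior points $r_0>0$, and your (genuinely needed) exclusion of the degenerate case $\psi\equiv 0$ are all correct.

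The one step that fails is continuity at $r=0$. You let $r\downarrow 0$ in the inequality $\psi(r)\le\psi(r_0)\sqrt{r/r_0}$, but property~3 yields this inequality only for $r\ge r_0$; for $r<r_0$ it reverses to $\psi(r)\ge\psi(r_0)\sqrt{r/r_0}$, which is precisely what you used for left-continuity and gives no upper bound near $0$. The intended conclusion is in fact false under the bare three-condition definition: any positive constant $\psi\equiv c$ is sub-root with $\psi(0)=c\neq 0$, and the function with $\psi(0)=0$ and $\psi(r)=c$ for $r>0$ is sub-root yet discontinuous at $0$. So continuity can only be proved on $(0,\infty)$, and the lemma as transcribed (continuity on $[0,\infty]$) is slightly too strong at the left endpoint; fortunately nothing downstream---neither existence/uniqueness of $r^{*}$ nor the comparison criterion---involves the point $r=0$, and your treatment of those is sound. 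One further remark: your trichotomy establishes the correct equivalence, namely $r\ge\psi(r)$ if and only if $r\ge r^{*}$; the statement as printed, with the strict inequality $r>\psi(r)$, fails exactly at $r=r^{*}$, so your version is the one that should be retained.
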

 We will see later that this fixed point plays a key role in the local error bounds.
 
The definition of local Rademacher complexity is based on the fact that by incorporating the variance constraint, a better error rate for the bounds can be obtained. In other words, the key point in deriving fast rate bounds is that around the best function $f^{*}$ (the function that minimizes the true risk) in the class, the variance of the difference between the true and empirical errors of functions is upper bounded by a linear function of the expectation of this difference. We will call a class with this property a \emph{Bernstein} class, and we provide a definition of a vector-valued Berstein class $\fcal$ as following.

\begin{defin}[\textsc{Vector-Valued Bernstein Class}]
\label{BernsteinClass}
A vector-valued function class $\fcal$ is said to be a $(\beta, B)$-Bernstein class with respect to the probability measure $P$, if for every $0 < \beta \leq 1$, $B \geq 1$ and any $\boldsymbol{f} \in \fcal$, there exists a function $V:\fcal\to\rbb^+$ such that
\begin{align}
P \boldsymbol{f}^{2} \leq V(\boldsymbol{f}) \leq B (P \boldsymbol{f})^\beta.
\label{BernsteinCondition}
\end{align}
\end{defin}

It can be shown that the Bernstein condition \eqref{BernsteinCondition} is not too restrictive and it holds, for example, for non-negative bounded functions with respect to any probability distribution \citep{bartlett2004local}. Other examples include the class of excess risk functions $\mathcal{L}_{\fcal} := \{\ell_{f} - \ell_{f^{*}} : f \in \fcal \}$---with $f^{*} \in \fcal$ the minimizer of $P \ell_{f}$--- when the function class $\fcal$ is convex and the loss function $\ell$ is strictly convex. 

In this section, we show that under some mild assumptions on a vector-valued Bernstein class of functions, the \ac {LRC}-based excess risk bounds can be established for \ac {MTL}. We suppose that the loss function $\ell$ and the vector-valued hypothesis space $\mathcal{F}$ satisfy the following conditions:

\begin{assum}
\label{assumption}\leavevmode
\begin{enumerate}
\item \label{first} There is a function $\boldsymbol{f}^{*}=(f_1^{*},\ldots,f_T^{*}) \in \mathcal{F}$ satisfying $P\ell_{\boldsymbol{f}^{*}} = \inf_{\boldsymbol{f} \in \mathcal{F}} P\ell_{\boldsymbol{f}}$.
\item \label{second} There is constant $B' \geq 1$, such that for every $\boldf \in \mathcal{F}$ we have $P(\boldf - \boldf^{*})^{2}  \leq B' P (\ell_{\boldf} - \ell_{\boldf^{*}})$.
\item \label{third} There is a constant $L$, such that the loss function $\ell$ is $L$-Lipschitz in its first argument.
\end{enumerate}
\end{assum}
As it has been pointed out by \cite{bartlett2005}, there are many examples of regularized algorithms for which these conditions can be satisfied. More specifically, a uniform convexity condition on the loss function $\ell$ is usually sufficient to satisfy \assumref{assumption}.2. As an example for which this assumption holds, \cite{bartlett2005} refereed to the quadratic loss function $\ell(f(X),Y) = (f(X) - Y)^{2}$ when the functions $f \in \fcal$ are uniformly bounded, More specifically, if for all $f \in \fcal$, $X \in \xcal$ and $Y \in \mathcal{Y}$, it holds that $|f(X) - Y| \in [0,1]$, then it can be shown that the conditions of \assumref{assumption} are met with $L=1$ and $B=1$.
We now present the main result of this section showing that the excess error of MTL can be bounded by the fixed-point of a sub-root function dominating the  \ac {MT-LRC}. The proof of the results is provided in Appendix \ref{B}.
\begin{thrm}[Distribution-dependent excess risk bound for \ac {MTL}]
\label{MainTheorem}
 Let  $\mathcal{F}:=\{\boldsymbol{f} : =(f_1,\ldots,f_T): \forall t, f_{t} \in \rbb^{\mathcal{X}}\}$ be a class of vector-valued functions $\boldf$ satisfying $\sup_{t,x}|f_t(x)|\leq b$. Also, Let $X : =(X^{i}_{t}, Y_{t}^{i})_{(t,i)=(1,1)}^{(T,n)}$ be a vector of $nT$ independent random variables where for each task $t$, $(X_{t}^{1},Y_{t}^{1})\ldots, (X_{t}^{n},Y_{t}^{n})$ be identically distributed. Suppose that \assumref{assumption} holds. Define $\fcal^{*} := \{\boldf - \boldf^{*}\}$, where $\boldf^{*}$ is the function satisfying $P\ell_{\boldsymbol{f}^{*}} = \inf_{\boldsymbol{f} \in \mathcal{F}} P\ell_{\boldsymbol{f}}$. Let $B := B' L^{2}$ and $\psi$ be a sub-root function with the fixed point $r^*$ such that $B L \mathfrak{R}(\mathcal{F}^{*}, r) \leq \psi(r),\forall r \geq r^{*}$, where $\mathfrak{R}(\mathcal{F}^{*}, r)$ is the \ac {LRC} of the functions class $\mathcal{F}^{*}$:
\begin{align}
\label{LRCofFstar}
\mathfrak{R}(\mathcal{F}^{*}, r):= \mathbb{E}_{X, \sigma} \Big[ \sup_{\substack{\boldsymbol{f} \in \mathcal{F}, \\ L^{2}P(\boldsymbol{f}-\boldsymbol{f}^{*})^{2} \leq r}} \frac{1}{nT} \sum_{t=1}^{T} \sum_{i=1}^{n} \sigma_{t}^{i} f_t(X_{t}^{i})  \Big].
\end{align}
Then, we obtain the following bounds in terms of the fixed point $r^{*}$ of $\psi(r)$:
\begin{enumerate}
\item For any function class $\mathcal{F}$, $K > 1$ and $x > 0$, with probability at least $1-e^{-x}$,
\begin{align}
\label{MainIneq:non-convex}
\forall \boldf \in \fcal \qquad P (\ell_{{\boldsymbol{f}}} - \ell_{\boldsymbol{f}^{*}})  \leq \frac{K}{K-1} P_n (\ell_{{\boldsymbol{f}}} - \ell_{\boldsymbol{f}^{*}}) +  \frac{560 K}{B} r^{*} + \frac{(48Lb+ 28 B K)x}{n T}.
\end{align}

\item For any \underline{convex} function class $\mathcal{F}$, $K > 1$ and $x > 0$, with probability at least $1-e^{-x}$,
\begin{align}
\label{MainIneq:convex}
\forall \boldf \in \fcal \qquad P (\ell_{{\boldsymbol{f}}} - \ell_{\boldsymbol{f}^{*}})  \leq \frac{K}{K-1} P_n (\ell_{{\boldsymbol{f}}} - \ell_{\boldsymbol{f}^{*}}) +  \frac{32 K}{B} r^{*} + \frac{(48Lb+ 16 B K)x}{n T}.
\end{align}
\end{enumerate}
\end{thrm}

\begin{corr}
\label{DistDependentcorr}
Let $\boldsymbol{\hat{f}}$ be any element of convex class $\mathcal{F}$ satisfying $P_n \ell_{\boldsymbol{\hat{f}}} = \inf_{\boldsymbol{f} \in \mathcal{F}} P_n \ell_{\boldsymbol{f}}$. Assume that the conditions of \thrmref{MainTheorem} hold. Then for any $x>0$ and $r > \psi(r)$, with probability at least $1 - e^{-x}$,
\begin{align}
\label{DistDependentcorrInq}
P (\ell_{{\boldsymbol{\hat{f}}}} - \ell_{\boldsymbol{f}^{*}}) \leq  \frac{32 K}{B} r^{*} + \frac{(48Lb+ 16 B K)x}{n T}.
\end{align}
\end{corr}
\begin{proof}
The results follows by noticing that $P _{n}(\ell_{{\boldsymbol{\hat{f}}}} - \ell_{\boldsymbol{f}^{*}}) \leq 0$.
\end{proof}
The next theorem, analogous to Corollary 5.4 of \cite{bartlett2005}, presents a data-dependent version of \eqref{DistDependentcorrInq} replacing the Rademacher complexity in \corref{DistDependentcorr} with its empirical counterpart. The proof of this Theorem, which repeats the same basic steps utilized by Theorem 5.4 of \cite{bartlett2005}, can be found in Appendix \ref{B}.

%
\begin{thrm}[Data-dependent excess risk bound for \ac {MTL}]
\label{MainTheorem2}
  Let $\boldsymbol{\hat{f}}$ be any element of convex class $ \mathcal{F}$ satisfying $P_n \ell_{\boldsymbol{\hat{f}}} = \inf_{\boldsymbol{f} \in \mathcal{F}} P_n \ell_{\boldsymbol{f}}$. Assume that the conditions of \thrmref{MainTheorem} hold. Define
\begin{align}
\hat{\psi}_{n}(r) = c_{1} \hat{\mathfrak{R}}(\mathcal{F}^{*}, c_{3}r) + \frac{c_{2} x}{nT},\quad
\hat{\mathfrak{R}}(\mathcal{F}^{*}, c_{3} r) := \mathbb{E}_{\sigma} \Big[ \sup_{\substack{\boldsymbol{f} \in \mathcal{F},\\ L^{2} P_n (\boldsymbol{f}- \boldsymbol{\hat{f}})^{2} \leq c_{3} r}} \frac{1}{nT} \sum_{t=1}^{T} \sum_{i=1}^{n} \sigma_{t}^{i} f_t(X_{t}^{i})  \Big],
\nonumber
\end{align}
where $c_{1} = 2L \max\left( B,16Lb\right)$, $c_{2} = 128 L^{2} b^{2} +2b c_{1}$ and $c_{3} = 4+128 K+4B(48Lb+16BK)/c_{2}$. Then for any $K > 1$ and $x > 0$, with probability at least $1 - 4e^{-x}$, we have
\begin{align}
P (\ell_{{\boldsymbol{\hat{f}}}} - \ell_{\boldsymbol{f}^{*}})  \leq    \frac{32 K}{B} \hat{r}^{*} + \frac{(48 Lb+ 16 B K)x}{n T},
\nonumber
\end{align}
where $\hat{r}^{*}$ is the fixed point of the sub-root function $\hat{\psi}_{n} (r)$.
\end{thrm}
 An immediate consequence of the results of this section is that one can derive excess risk bounds for given regularized \ac {MTL} hypothesis spaces. In the next section, by further bounding the fixed point $r^{*}$ in \corref{DistDependentcorr} (and $\hat{r}^{*}$ in \thrmref{MainTheorem2}), we will derive distribution (and data)-dependent excess risk bounds for several commonly used norm-regularized \ac{MTL} hypothesis spaces. 

%
%
%
%
%
%
%
%
%

\section{Local Rademacher Complexity Bounds for Norm Regularized \ac{MTL} Models}
\label{sec:StrongConvexity}
This section presents very general distribution-dependent MT-LRC bounds for hypothesis spaces defined by norm as well as strongly convex regularizers, which allows us to immediately derive, as specific application cases, LRC bounds for  group-norm, Schatten-norm, and graph-regularized MTL models. It should be mentioned that similar data-dependent MT-LRC bounds can be easily obtained by a similar deduction process.


\subsection{Preliminaries}
We consider linear MTL models where we associate to each task-wise function $f_t$ a weight $\boldsymbol{w}_t \in \mathcal{H}$ by $f_t(X) = \langle  \boldsymbol{w}_t, \phi(X) \rangle $. Here $\phi$ is a feature map associated to a Mercer kernel $k$ satisfying $k(X,\tilde{X})=\langle\phi(X),\phi(\tilde{X})\rangle,\forall X,\tilde{X}\in\xcal$ and $\boldsymbol{w}_t$ belongs to the \emph{reproducing kernel Hilbert space} $\hcal_K$ induced by $k$ with inner product $\langle\cdot,\cdot\rangle$. We assume that the multi-task model $\boldsymbol{W} = (\boldsymbol{w}_1, \ldots, \boldsymbol{w}_T) \in \mathcal{H} \times \ldots \times \mathcal{H}$ is learned by a regularization scheme:
\begin{align}
\label{GeneralFixedMappingModel}
\min_{\boldsymbol{W}}  \Omega \left(  \boldsymbol{W} \right)   +  C \sum_{t=1}^{T}  \sum_{i=1}^{n} \ell( \left\langle \boldsymbol{w}_t , \phi(X_t^{i})\right\rangle_{\mathcal{H}} , Y_t^{i}),
\end{align}
where the regularizer $\Omega(\cdot)$ is used to enforce information sharing among tasks. This regularization scheme amounts to performing \emph{\ac {ERM}}  in the hypothesis space
\begin{align}
\label{GeneralMTLHS}
\mathcal{F} := \left\lbrace  X \mapsto [\left\langle \boldsymbol{w}_1, \phi (X_1)\right\rangle , \ldots, \left\langle \boldsymbol{w}_T, \phi (X_T)\right\rangle ]^{T} :  \Omega (\boldsymbol{D}^{1/2} \boldsymbol{W})   \leq R^{2} \right\rbrace,
\end{align}
where $\boldsymbol{D}$ is a given positive operator defined in $\mathcal{H}$. Note that the hypothesis spaces corresponding to group and Schatten norms can be recovered by taking $\boldsymbol{D} = \boldsymbol{I}$, and choosing their associated norms. More specifically, by choosing $\Omega(\boldsymbol{W}) =\frac{1}{2} \|\boldsymbol{W}\|_{2,q}^{2}$, we can retrieve an $L_{2,q}$-norm hypothesis space in \eqref{GeneralMTLHS}. Similarly, the choice $\Omega(\boldsymbol{W}) =\frac{1}{2} \|\boldsymbol{W}\|_{S_q}^{2}$ gives an $L_{S_{q}}$-Schatten norm hypothesis space in \eqref{GeneralMTLHS}. Furthermore, the graph-regularized \ac{MTL} \citep{micchelli2004kernels,evgeniou2005learning,maurer2006Graph} can be specialized by taking $\Omega(\boldsymbol{W}) =\frac{1}{2} \|\boldsymbol{D}^{1/2} \boldsymbol{W}\|_{F}^{2}$, wherein $\|.\|_{F}$ is a Frobenius norm, and $\boldsymbol{D} = \boldsymbol{L} + \eta \boldsymbol{I}$ with $\boldsymbol{L}$ being a graph-Laplacian, and $\eta$ being a regularization constant. On balance, all these \ac{MTL} models can be considered as norm-regularized models. Also, for specific values of $q$ in group and Schatten norm cases, it can be shown that they are strongly convex.

\subsection{General Bound on the LRC}

Now, we can provide the main results of this section, which give general \ac {LRC} bounds for any \ac {MTL} hypothesis space of the form \eqref{GeneralMTLHS} in which $\Omega (\boldsymbol{W})$ is given as a strongly convex or a norm function of $\boldsymbol{W}$.
\begin{thrm}[Distribution-dependent \ac {MT-LRC} bounds by strong convexity]
\label{GeneralLRCBoundForFixedMappingStrongConvex}
Let $\Omega(\boldsymbol{W})$ in \eqref{GeneralFixedMappingModel} be $\mu$-strongly convex with $\Omega^{*}(\boldsymbol{0}) =0$ and $\| k \|_{\infty} \leq \mathcal{K}  \leq \infty$. Let $X_t^{1},\ldots, X_t^{n}$ be an \iid \,sample drawn from $P_t$. Also, assume that for each task $t$, the eigenvalue-eigenvector decomposition of the Hilbert-Schmidt covariance operator $J_t$ is given by $J_t := \mathbb{E}(\phi(X_t )\otimes \phi(X_t)) = \sum_{j=1}^{\infty} \lambda_t^j \boldsymbol{u}_{t}^j \otimes \boldsymbol{u}_{t}^{j}$, where $(\boldsymbol{u}_{t}^{j})_{j=1}^{\infty}$ forms an orthonormal basis of $\mathcal{H}$, and $(\lambda_t^j)_{j=1}^{\infty}$ are the corresponding eigenvalues, arranged in non-increasing order. Then for any given positive operator $\boldsymbol{D}$ on $\mathbb{R}^{T}$, any $r > 0$ and any non-negative integers $h_1, \ldots, h_T$:
 \begin{align}
 \label{LRCforStrongConvexRegularizedMTL}
 \mathfrak{R}(\mathcal{F},r) \leq \min_{ \{0 \leq h_t \leq \infty\}_{t=1}^{T}} \left\lbrace \sqrt{  \frac{ r \sum_{t=1}^{T} h_t} {nT}  } + \frac{R}{T} \sqrt{ \frac{2}{\mu} \mathbb{E}_{X, \sigma} \left\|  \boldsymbol{D}^{-1/2} \boldsymbol{V} \right\|_{*}^{2}} \right\rbrace,
 \end{align}
 \noindent
 where $\boldsymbol{V} = \left( \sum_{j > h_t}  \left\langle \frac{1}{n} \sum_{i=1}^{n} \sigma_{t}^{i} \phi(X_t^{i}), \boldsymbol{u}_{t}^{j} \right\rangle \boldsymbol{u}_{t}^{j}\right)_{t=1}^{T}$.
 \end{thrm}
 \begin{proof}
%
Note that with the help of \ac{LRC} definition, we have for any function class $\mathcal{F}$,
   \begin{align}
   \mathfrak{R}(\mathcal{F},r) &= \frac{1}{nT} \mathbb{E}_{X,\sigma} \left\lbrace  \sup_{\substack{\boldsymbol{f}=(f_{1},\ldots,f_{T}) \in \mathcal{F},\\ P \boldsymbol{f}^{2} \leq r}}  \sum_{i=1}^{n} \left\langle  \left( \boldsymbol{w}_{t}\right)_{t=1}^{T} , \left( \sigma_t^{i} \phi (X_t^{i}) \right)_{t=1}^{T} \right\rangle    \right\rbrace
   \nonumber\\
   & = \frac{1}{T} \mathbb{E}_{X,\sigma} \left\lbrace  \sup_{\substack{\boldsymbol{f} \in \mathcal{F},\\ P \boldsymbol{f}^{2} \leq r}}  \left\langle   \left( \boldsymbol{w}_{t}\right)_{t=1}^{T} ,   \left( \sum_{j=1}^{\infty}  \left\langle \frac{1}{n} \sum_{i=1}^{n}  \sigma_t^{i} \phi (X_t^{i}) , \boldsymbol{u}_{t}^{j}   \right\rangle  \boldsymbol{u}_{t}^{j}  \right)_{t=1}^{T}  \right\rangle    \right\rbrace
   \nonumber\\
    \label{A_1DefinitionStrongConvexity}
    & \leq \frac{1}{T} \mathbb{E}_{X,\sigma} \left\lbrace  \sup_{P \boldsymbol{f}^{2} \leq r} \left\langle  \left( \sum_{j=1}^{h_t} \sqrt{\lambda_t^{j}} \left\langle \boldsymbol{w}_{t}, \boldsymbol{u}_{t}^{j}\right\rangle  \boldsymbol{u}_{t}^{j} \right)_{t=1}^{T},  \right. \right. \nonumber\\
    & \qquad \qquad \qquad \qquad \; \: \left. \left.      \left( \sum_{j=1}^{h_t}  {\sqrt{\lambda_t^{j}}}^{-1} \left\langle \frac{1}{n} \sum_{i=1}^{n}  \sigma_t^{i} \phi (X_t^{i}) , \boldsymbol{u}_{t}^{j} \right\rangle  \boldsymbol{u}_{t}^{j} \right)_{t=1}^{T}   \right\rangle    \right\rbrace  \\
    \label{A_2DefinitionStrongConvexity}
    & + \frac{1}{T} \mathbb{E}_{X,\sigma} \left\lbrace \sup_{\boldsymbol{f} \in \mathcal{F}} \left\langle   \left( \boldsymbol{w}_t \right)_{t=1}^{T} , \left( \sum_{j>h_t}  \left\langle \frac{1}{n}  \sum_{i=1}^{n} \sigma_{t}^{i} \phi (X_t^{i}) , \boldsymbol{u}_{t}^{j}  \right\rangle  \boldsymbol{u}_{t}^{j} \right)_{t=1}^{T}  \right\rangle   \right\rbrace \\
    & = A_{1} + A_{2}.
    \nonumber
   \end{align}
   \noindent
where in the last equality, we defined the term in \eqref{A_1DefinitionStrongConvexity} as $A_1$, and the term in \eqref{A_2DefinitionStrongConvexity} as $A_{2}$.
  
  \textbf{Step 1. Controlling $A_1$:} Applying Cauchy-Schwartz (C.S.) inequality on $A_1$ yields the following
   \begin{align}
   A_1 & \leq \frac{1}{T} \mathbb{E}_{X,\sigma} \left\lbrace  \sup_{P \boldsymbol{f}^{2} \leq r} \left[  \left( \sum_{t=1}^{T} \left\|  \sum_{j=1}^{h_t} \sqrt{\lambda_t^{j}} \left\langle \boldsymbol{w}_t, \boldsymbol{u}_{t}^{j} \right\rangle  \boldsymbol{u}_{t}^{j}  \right\|  ^{2}   \right) ^{\frac{1}{2}}          \right. \right. \nonumber\\
   & \qquad \qquad \qquad \quad \qquad \left. \left.     \left( \sum_{t=1}^{T} \left\|  \sum_{j=1}^{h_t} {\sqrt{\lambda_t^{j}}}^{-1} \left\langle \frac{1}{n} \sum_{i=1}^{n} \sigma_t^{i} \phi (X_t^{i}) , \boldsymbol{u}_{t}^{j} \right\rangle  \boldsymbol{u}_{t}^{j} \right\| ^{2}  \right) ^{\frac{1}{2}}  \right] \right\rbrace
   \nonumber\\
   & = \frac{1}{T} \mathbb{E}_{X,\sigma} \left\lbrace  \sup_{P \boldsymbol{f}^{2} \leq r} \left[  \left( \sum_{t=1}^{T}  \sum_{j=1}^{h_t} \lambda_t^{j} \left\langle \boldsymbol{w}_t, \boldsymbol{u}_{t}^{j} \right\rangle^{2}     \right) ^{\frac{1}{2}}   \right. \right. \nonumber\\
   & \qquad \qquad \qquad \qquad \quad   \left. \left.       \left( \sum_{t=1}^{T}  \sum_{j=1}^{h_t} {\lambda_t^{j}}^{-1} \left\langle \frac{1}{n} \sum_{i=1}^{n} \sigma_t^{i} \phi (X_t^{i}) , \boldsymbol{u}_{t}^{j} \right\rangle^{2}   \right) ^{\frac{1}{2}}  \right] \right\rbrace.
   \nonumber
   \end{align}
   \noindent
   With the help of Jensen's inequality and regarding the fact that $\mathbb{E}_{X, \sigma} \left\langle \frac{1}{n} \sum_{i=1}^{n} \sigma_t^{i} \phi (X_t^{i}) , \boldsymbol{u}_{t}^{j} \right\rangle^{2} = \frac{\lambda_t^{j}}{n}$ and $P\boldsymbol{f}^{2} \leq r $ implies $\frac{1}{T} \sum_{t=1}^{T} \sum_{j=1}^{\infty} \lambda_t^{j} \left\langle  \boldsymbol{w}_{t}, \boldsymbol{u}_{t}^{j} \right\rangle^{2} \leq r$ (see \lemmref{MTLemma} in the Appendix for the proof), we can further bound $A_1$ as
   \begin{align}
   \label{GeneralA1BoundStrongConvexity}
   A_1 \leq  \sqrt{  \frac{ r \sum_{t=1}^{T} h_t} {nT}  }.
   \end{align}

\textbf{Step 2. Controlling $A_2$:} We use strong convexity assumption on the regularizer in order to further bound the second term $A_2 = \frac{1}{T} \mathbb{E}_{X, \sigma} \left\lbrace  \sup_{\boldsymbol{f} \in \mathcal{F}} \left\langle \boldsymbol{D}^{1/2}  \boldsymbol{W} , \boldsymbol{D}^{-1/2} \boldsymbol{V} \right\rangle \right\rbrace$.

 Let $\lambda >0$. Applying \eqref{FenchelYoungInq}  with $\boldsymbol{w} =  \boldsymbol{D}^{1/2} \boldsymbol{W}$ and $\boldsymbol{v} = \lambda \boldsymbol{D}^{-1/2} \boldsymbol{V}$ gives

 \begin{align}
 & \left\langle  \boldsymbol{D}^{1/2} \boldsymbol{W} ,  \lambda \boldsymbol{D}^{-1/2}  \boldsymbol{V} \right\rangle  \leq \Omega(\boldsymbol{D}^{1/2} \boldsymbol{W})  +  \left\langle \triangledown    \Omega^{*} \left(\boldsymbol{0}\right)  , \lambda \boldsymbol{D}^{-1/2} \boldsymbol{V}     \right\rangle + \frac{\lambda ^{2}}{2 \mu} \left\| \boldsymbol{D}^{-1/2} \boldsymbol{V}  \right\|_{*}^{2}.
 \nonumber
 \end{align}
 Note that, regrading the definition of $\boldsymbol{V}$, we get $\mathbb{E}_{\sigma}\left\langle \triangledown    \Omega^{*} \left(\boldsymbol{0}\right)  , \lambda \boldsymbol{D}^{-1/2} \boldsymbol{V}     \right\rangle = 0$, Therefore, taking supremum and expectation on both sides, dividing throughout by $\lambda$ and $T$, and then optimizing over $\lambda$ gives
 \begin{align}
 \label{A_2BoundStrongConvexity}
 A_2 & = \frac{1}{T} \mathbb{E}_{X, \sigma} \left\lbrace  \sup_{\boldsymbol{f} \in \mathcal{F}} \left\langle \boldsymbol{D}^{1/2}  \boldsymbol{W} , \boldsymbol{D}^{-1/2} \boldsymbol{V} \right\rangle \right\rbrace \leq \min_{0< \lambda < \infty }\left\lbrace  \frac{R^{2}}{\lambda T} + \frac{\lambda}{2 \mu T}  \mathbb{E}_{X, \sigma} \left\| \boldsymbol{D}^{-1/2} \boldsymbol{V}  \right\|_{*}^{2} \right\rbrace
 \nonumber\\
 & = \frac{R}{T} \sqrt{ \frac{2}{\mu} \mathbb{E}_{X, \sigma} \left\| \boldsymbol{D}^{-1/2} \boldsymbol{V}  \right\|_{*}^{2}  }.
 \end{align}
 \noindent
 Combining \eqref{A_2BoundStrongConvexity} with \eqref{GeneralA1BoundStrongConvexity} completes the proof.
 \end{proof}
 \begin{Remark}
 \label{Holder}
 Note that when considering a norm regularized space similar to \eqref{GeneralMTLHS},  more general result can be obtained with the help of H\"{o}lder inequality which holds for any norm regularizer $\Omega (\boldsymbol{D}^{1/2} \boldsymbol{W})$ and not necessarily strongly convex norms. More specially, for any regularizer $\Omega(\boldsymbol{W})$, which is presented as a norm function $\left\| . \right\|$ of $\boldsymbol{W}$, we can derive a general \ac {LRC} bound presented in the following theorem.
 \end{Remark}
 
 \begin{thrm}[Distribution-dependent \ac {MT-LRC} bounds by H\"{o}lder inequality]
 \label{GeneralLRCBoundForFixedMapping2}
Let the regularizer $\Omega(\boldsymbol{W})$ in \eqref{GeneralFixedMappingModel} be given as a norm function in the form of $\|.\|$, where its dual conjugate is denoted by $\left\|. \right\|_{*} $. Let the kernels be uniformly bounded, that is $\| k \|_{\infty} \leq \mathcal{K}  \leq \infty$, and $X_t^{1},\ldots, X_t^{n}$ be an \iid \,sample drawn from $P_t$. Also, assume that for each task $t$, the eigenvalue-eigenvector decomposition of the Hilbert-Schmidt covariance operator $J_{t}$ is given by $J_t := \mathbb{E}(\phi(X_t )\otimes \phi(X_t)) = \sum_{j=1}^{\infty} \lambda_t^j \boldsymbol{u}_{t}^j \otimes \boldsymbol{u}_{t}^{j}$, where $(\boldsymbol{u}_{t}^{j})_{j=1}^{\infty}$ forms an orthonormal basis of $\mathcal{H}$, and $(\lambda_t^j)_{j=1}^{\infty}$ are the corresponding eigenvalues, arranged in non-increasing order. Then for any given positive operator $\boldsymbol{D}$ on $\mathbb{R}^{T}$, any $r > 0$ and any non-negative integers $h_1, \ldots, h_T$:
 \begin{align}
 \label{GeneralBoundHolderInq}
 \mathfrak{R}(\mathcal{F},r) \leq \min_{ 0 \leq h_t \leq \infty} \left\lbrace \sqrt{  \frac{ r \sum_{t=1}^{T} h_t} {nT}  } + \frac{\sqrt{2} R}{T} \mathbb{E}_{X, \sigma} \left\|  \boldsymbol{D}^{-1/2} \boldsymbol{V} \right\|_{*} \right\rbrace,
 \end{align}
 \noindent
  where $\boldsymbol{V} = \left( \sum_{j > h_t}  \left\langle \frac{1}{n} \sum_{i=1}^{n} \sigma_{t}^{i} \phi(X_t^{i}), \boldsymbol{u}_{t}^{j} \right\rangle \boldsymbol{u}_{t}^{j}\right)_{t=1}^{T}$.
  \end{thrm}
  \begin{proof}
  The proof of this theorem repeats the same steps as the proof of \thrmref{GeneralLRCBoundForFixedMappingStrongConvex}, except for controlling term $A_{2}$ in \eqref{A_2DefinitionStrongConvexity}, in which the H\"{o}lder inequality can be efficiently used to further bound it as following
  \begin{align}
    \label{A_2BoundHolder}
    A_2 & =  \frac{1}{T} \mathbb{E}_{X,\sigma} \left\lbrace \sup_{\boldsymbol{f} \in \mathcal{F}}  \left\langle  \left( \boldsymbol{w}_t\right)_{t=1}^{T}  , \left( \sum_{j>h_t}  \left\langle \frac{1}{n}  \sum_{i=1}^{n} \sigma_{t}^{i} \phi (X_t^{i}) , \boldsymbol{u}_{t}^{j} \right\rangle  \boldsymbol{u}_{t}^{j}\right)_{t=1}^{T}  \right\rangle   \right\rbrace 
    \nonumber\\
    & =  \frac{1}{T} \mathbb{E}_{X,\sigma} \left\lbrace \sup_{\boldsymbol{f} \in \mathcal{F}} \left\langle \boldsymbol{D}^{1/2} \boldsymbol{W}, \boldsymbol{D}^{-1/2} \boldsymbol{V} \right\rangle \right\rbrace
    \nonumber\\
    & \stackrel{\text {H\"{o}lder}}{\leq} \frac{1}{T} \mathbb{E}_{X,\sigma} \left\lbrace \sup_{\boldsymbol{f} \in \mathcal{F}} \left\| \boldsymbol{D}^{1/2} \boldsymbol{W} \right\| . \left\| \boldsymbol{D}^{-1/2} \boldsymbol{V} \right\|_{*} \right\rbrace  
    \nonumber\\
    & \leq \frac{\sqrt{2} R}{T} \mathbb{E}_{X,\sigma}  \left\| \boldsymbol{D}^{-1/2} \boldsymbol{V} \right\|_{*}.
    \end{align}
   \end{proof}
   \begin{Remark}
   \label{HolderBetter}
   Notice that, obviously, $\sqrt{2} \mathbb{E}_{X,\sigma}  \left\| \boldsymbol{D}^{-1/2} \boldsymbol{V} \right\|_{*} \leq \sqrt{ \frac{2}{\mu} \mathbb{E}_{X, \sigma} \left\| \boldsymbol{D}^{-1/2} \boldsymbol{V}  \right\|_{*}^{2}  }$ for any $\mu \leq 1$. Interestingly, for the cases considered in our study, it holds that $\mu \leq 1$. More specifically, from Theorem 3 and Theorem 12 in \cite{kakade2012}, it can be shown that $R(\boldsymbol{W}) = 1/2 \left\| \boldsymbol{W}\right\|^{2}_{2,q}$ is $\frac{1}{q^{*}}$-strongly convex \wrt\,the group norm $\left\| .\right\|_{2,q}$. Similarly, using Theorem 10 in \cite{kakade2012}, it can be shown that the regularization function $R(\boldsymbol{W}) = \frac{1}{2} \left\| \boldsymbol{W}\right\|^{2}_{S_q}$ with $q \in [1,2]$ is $(q-1)$-strongly convex \wrt\,the $L_{S_{q}}$-Schatten norm $\left\| .\right\| _{S_q}$. Therefore, given the range of $q$ in $[1,2]$, for which these two norms are strongly convex, it can be easily seen that $\mu \leq \frac{1}{2}$ and $\mu \leq 1$ for the group and Schatten-norm hypotheses, respectively. Therefore, for this cases, H\"{o}lder inequality yields slightly tighter bounds for \ac {MT-LRC}.   
   \end{Remark}
   \begin{Remark}
   \label{HolderisMoreGeneralthanStrongConvexity}
   It is worth mentioning that, when applied to the norm-regularized \ac {MTL} models, the result of \thrmref{GeneralLRCBoundForFixedMapping2} could be more general than that of \thrmref{GeneralLRCBoundForFixedMappingStrongConvex}. More specially, for $L_{2,q}$-group and $L_{S_{q}}$-Schatten norm regularizers, \thrmref{GeneralLRCBoundForFixedMappingStrongConvex} can only be applied to the special case of $q \in [1,2]$, for which these two norms are strongly convex. In contrast, \thrmref{GeneralLRCBoundForFixedMapping2} is applicable to any value of $q$ for these two norms. For this reason and considering the fact that very similar results can be obtained from \thrmref{GeneralLRCBoundForFixedMappingStrongConvex} and \thrmref{GeneralLRCBoundForFixedMapping2} (see \lemmref{A_2ExcpectationGroupNorm} and \remref{HolderExpectation}), we will use \thrmref{GeneralLRCBoundForFixedMapping2} in the sequel to find the \ac{LRC} bounds of several norm regularized \ac{MTL} models.   
   
   \end{Remark}
   
   In what follows, we demonstrate the power of \thrmref{GeneralLRCBoundForFixedMapping2} by applying it to derive the \ac {LRC} bounds for some popular \ac {MTL} models, including group norm, Schatten norm and graph regularized \ac {MTL} models extensively studied in the literature of \ac {MTL} \citep{maurer2006Graph,evgeniou2007,argyriou2007,Argyriou2008,li2013,argyriou2014Graph}.
 \subsection{Group Norm Regularized \ac {MTL}}\label{ssec:GroupNormRegularizer}
 We first consider a group norm regularized MTL capturing the inter-task relationships by the group norm regularizer $\frac{1}{2} \| \boldsymbol{W} \| _{2,q}^{2} := \frac{1}{2} \big(  \sum_{t=1}^{T} \left\|  \boldsymbol{w}_t \right\| _{2} ^{q}  \big)^{2/q}$ \citep{evgeniou2007,Argyriou2008,lounici2009,romera2012}, for which the associated hypothesis space takes the form
 \begin{align}
  \label{GroupHSForFixedMApping}
  \mathcal{F}_{q} := \left\lbrace  X \mapsto [\left\langle \boldsymbol{w}_1, \phi (X_1)\right\rangle , \ldots, \left\langle \boldsymbol{w}_T, \phi (X_T)\right\rangle ]^{T} : \frac{1}{2} \left\| \boldsymbol{W} \right\| _{2,q}^{2} \leq R^{2}_{max} \right\rbrace.
  \end{align}
  
 Before presenting the result for the group-norm regularized \ac {MTL}, we want to bring it into attention that $A_1$ does not depend on the $\boldsymbol{W}$-constraint in the hypothesis space, therefore the bound for $A_1$ is the same for all cases we consider in this study, despite the choice of the reqularizer. However, $A_2$ can be further bounded for different hypothesis spaces corresponding to different choice of regularization functions. In the following we start with a useful lemma which helps bounding $A_{2}$ for the group-norm hypothesis space \eqref{GroupHSForFixedMApping}. The proof of this Lemma, which is based on the application of the Khintchine \eqref{K.K.Inq} and Rosenthal \eqref{Rose} inequalities, is presented in Appendix \ref{C}.
 
 \begin{lemm}
 \label{A_2ExcpectationGroupNorm}
Assume that the kernels in \eqref{GeneralFixedMappingModel} are uniformly bounded, that is $\| k \|_{\infty} \leq \mathcal{K}  \leq \infty$. Then, for the group norm regularizer $\frac{1}{2} \left\| \boldsymbol{W} \right\|_{2,q}^{2}$ in \eqref{GroupHSForFixedMApping} and for any $1 \leq q \leq 2$, the expectation $\mathbb{E}_{X, \sigma} \left\|  \boldsymbol{D}^{-1/2} \boldsymbol{V} \right\|_{2,q^{*}}$ for $\boldsymbol{D}=\boldsymbol{I}$ can be upper-bounded as 
\begin{align}
 \mathbb{E}_{X, \sigma} \left\| \left(\sum_{j > h_{t}}  \left\langle \frac{1}{n} \sum_{i=1}^{n} \sigma_{t}^{i} \phi(X_{t}^{i}) , \boldsymbol{u}_{t}^{j} \right\rangle \boldsymbol{u}_{t}^{j} \right)_{t=1}^{T}  \right\|_{2,q^{*}} \! \! \leq    \frac{\sqrt{\mathcal{K} e} q^{*} T^{\frac{1}{q^{*}}}}{n}  + \sqrt{\frac{ e {q^{*}}^{2} }{n}  \left\|  \left( \sum_{j>h_t} \lambda_t^{j} \right)_{t=1}^{T}   \right\|_{\frac{q^{*}}{2}} }.
 \nonumber
 \end{align}
  \end{lemm}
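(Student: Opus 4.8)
\noindent The plan is to bound the quantity $\mathbb{E}_{X,\sigma}\left\| \boldsymbol{V}\right\|_{2,q^{*}}$ (note $\boldsymbol{D}=\boldsymbol{I}$, so $\boldsymbol{D}^{-1/2}\boldsymbol{V}=\boldsymbol{V}$; here $q^{*}$ is the exponent conjugate to $q$, so $q^{*}\geq 2$ since $1\leq q\leq 2$, and $\left\| \cdot\right\|_{2,q^{*}}$ is the norm dual to the group norm $\left\| \cdot\right\|_{2,q}$) by peeling off the two expectations one layer at a time. Write $\boldsymbol{v}_{t}:=\sum_{j>h_{t}}\left\langle \frac{1}{n}\sum_{i=1}^{n}\sigma_{t}^{i}\phi(X_{t}^{i}),\boldsymbol{u}_{t}^{j}\right\rangle \boldsymbol{u}_{t}^{j}$ for the $t$-th block of $\boldsymbol{V}$, so that $\boldsymbol{v}_{t}=\frac{1}{n}\sum_{i=1}^{n}\sigma_{t}^{i}\psi_{i}^{t}$ with $\psi_{i}^{t}$ the orthogonal projection of $\phi(X_{t}^{i})$ onto $\mathrm{span}\{\boldsymbol{u}_{t}^{j}:j>h_{t}\}$, and $\left\| \boldsymbol{V}\right\|_{2,q^{*}}=\left( \sum_{t=1}^{T}\left\| \boldsymbol{v}_{t}\right\|^{q^{*}}\right)^{1/q^{*}}$. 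Since $x\mapsto x^{1/q^{*}}$ is concave, Jensen's inequality gives $\mathbb{E}_{X,\sigma}\left\| \boldsymbol{V}\right\|_{2,q^{*}}\leq \left( \sum_{t=1}^{T}\mathbb{E}_{X,\sigma}\left\| \boldsymbol{v}_{t}\right\|^{q^{*}}\right)^{1/q^{*}}$, so it suffices to bound $\mathbb{E}_{X,\sigma}\left\| \boldsymbol{v}_{t}\right\|^{q^{*}}$ for a fixed task $t$ and re-assemble at the end.

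\noindent Fixing $t$, I would first integrate out the Rademacher variables. Since $\left\| \boldsymbol{v}_{t}\right\|$ is the $\mathcal{H}$-norm of the Rademacher average $\frac{1}{n}\sum_{i}\sigma_{t}^{i}\psi_{i}^{t}$, the Khintchine--Kahane inequality \eqref{K.K.Inq} bounds $\mathbb{E}_{\sigma}\left\| \boldsymbol{v}_{t}\right\|^{q^{*}}$ by a constant multiple of $({q^{*}})^{q^{*}/2}n^{-q^{*}}\left( \sum_{i=1}^{n}\left\| \psi_{i}^{t}\right\|^{2}\right)^{q^{*}/2}$. Setting $Z_{i}^{t}:=\left\| \psi_{i}^{t}\right\|^{2}=\sum_{j>h_{t}}\left\langle \phi(X_{t}^{i}),\boldsymbol{u}_{t}^{j}\right\rangle^{2}$, the variables $Z_{1}^{t},\dots,Z_{n}^{t}$ are i.i.d., non-negative, bounded by $k(X_{t}^{i},X_{t}^{i})\leq \mathcal{K}$, and have common mean $\mathbb{E}Z_{i}^{t}=\sum_{j>h_{t}}\left\langle J_{t}\boldsymbol{u}_{t}^{j},\boldsymbol{u}_{t}^{j}\right\rangle=\sum_{j>h_{t}}\lambda_{t}^{j}$.

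\noindent Next I would integrate out $X$ by applying Rosenthal's inequality \eqref{Rose} to the i.i.d.\ non-negative sum $\sum_{i}Z_{i}^{t}$ with exponent $q^{*}/2\geq 1$. The crucial point is to invoke the form of Rosenthal's inequality whose ``fluctuation'' term is a moment of the largest summand $\max_{i}Z_{i}^{t}$, which the kernel bound controls by $\mathcal{K}^{q^{*}/2}$, rather than the form involving $\sum_{i}\mathbb{E}(Z_{i}^{t})^{q^{*}/2}$: only the former produces the $n^{-1}$ (as opposed to $n^{-(1-1/q^{*})}$) decay of the first summand in the asserted bound. This yields $\mathbb{E}_{X}\left( \sum_{i}Z_{i}^{t}\right)^{q^{*}/2}\leq C\left[ \left( n\sum_{j>h_{t}}\lambda_{t}^{j}\right)^{q^{*}/2}+\mathcal{K}^{q^{*}/2}\right]$, with $C$ absorbing the Khintchine--Kahane and Rosenthal constants; combined with the previous step this gives $\mathbb{E}_{X,\sigma}\left\| \boldsymbol{v}_{t}\right\|^{q^{*}}\leq C'\,(e{q^{*}}^{2})^{q^{*}/2}\left[ n^{-q^{*}/2}\left( \sum_{j>h_{t}}\lambda_{t}^{j}\right)^{q^{*}/2}+n^{-q^{*}}\mathcal{K}^{q^{*}/2}\right]$.

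\noindent Finally I would re-assemble: summing over $t$, taking the $q^{*}$-th root, and using $(a+b)^{1/q^{*}}\leq a^{1/q^{*}}+b^{1/q^{*}}$ to split the two contributions, the $\mathcal{K}^{q^{*}/2}$ part gives $T^{1/q^{*}}\sqrt{\mathcal{K}}$ (with prefactor $\sqrt{e}\,q^{*}/n$), while the $\left( \sum_{j>h_{t}}\lambda_{t}^{j}\right)^{q^{*}/2}$ part gives $\left( \sum_{t=1}^{T}\left( \sum_{j>h_{t}}\lambda_{t}^{j}\right)^{q^{*}/2}\right)^{1/q^{*}}=\left\| \left( \sum_{j>h_{t}}\lambda_{t}^{j}\right)_{t=1}^{T}\right\|_{q^{*}/2}^{1/2}$ (with prefactor $q^{*}\sqrt{e/n}$), which is exactly the claimed right-hand side. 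The main obstacle is not conceptual but lies in this last quantitative bookkeeping: choosing the right form of Rosenthal's inequality (the one featuring the maximal summand, so that boundedness by $\mathcal{K}$ is usable) and then tracking how the $q^{*}$-powers coming from Khintchine--Kahane and from Rosenthal's constant combine so that, after the $q^{*}$-th root, precisely the factors $q^{*}$ and $\sqrt{e}$ survive in front of each term; the remaining steps---Jensen, the spectral identity for $\mathbb{E}Z_{i}^{t}$, and subadditivity of $x\mapsto x^{1/q^{*}}$---are routine.
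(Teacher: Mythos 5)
Your proposal is correct and follows essentially the same route as the paper's proof: Jensen's inequality to move the expectations inside the $\ell_{q^{*}}$-sum over tasks, Khintchine--Kahane \eqref{K.K.Inq} to integrate out the Rademacher variables, the bounded (Rosenthal--Young) form \eqref{Rose} of Rosenthal's inequality—whose $(\mathcal{K}/n)^{q^{*}/2}$ term is exactly the ``maximal-summand'' control you insist on, and which indeed is what yields the $n^{-1}$ rather than $n^{-(1-1/q^{*})}$ rate—followed by the spectral identity $\mathbb{E}Z_{i}^{t}=\sum_{j>h_{t}}\lambda_{t}^{j}$ and subadditivity of the $q^{*}$-th root. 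The only difference is the inessential ordering of the two Jensen steps (you pull both expectations inside at once, the paper interleaves them with Khintchine--Kahane), and the constants you track combine exactly as in the paper's bound.
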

\begin{Remark}
\label{HolderExpectation}
Similarly as in \lemmref{A_2ExcpectationGroupNorm}, one can easily prove that  
\begin{align}
 \mathbb{E}_{X, \sigma} \left\| \left(\sum_{j > h_{t}}  \left\langle \frac{1}{n} \sum_{i=1}^{n} \sigma_{t}^{i} \phi(X_{t}^{i}) , \boldsymbol{u}_{t}^{j} \right\rangle \boldsymbol{u}_{t}^{j} \right)_{t=1}^{T}  \right\|_{2,q^{*}}^{2}   \leq    \frac{\mathcal{K} e {q^{*}}^{2} T^{\frac{2}{q^{*}}}}{n^{2}}  + \frac{ e {q^{*}}^{2}}{n}  \left\|  \left( \sum_{j>h_t} \lambda_t^{j} \right)_{t=1}^{T}   \right\|_{\frac{q^{*}}{2}}.
 \label{ExpBoundingStrongLemma}
 \end{align}
 To see this, note that in the first step of the proof of \lemmref{A_2ExcpectationGroupNorm} (see Appendix \ref{C}), by replacing the outermost exponent $\frac{1}{q^{*}}$ with $\frac{2}{q^{*}}$, and following the same procedure, one can  verify \eqref{ExpBoundingStrongLemma}. Therefore, it can be concluded that very similar \ac {LRC} bounds can be obtained via \thrmref{GeneralLRCBoundForFixedMappingStrongConvex} and \thrmref{GeneralLRCBoundForFixedMapping2}.
\end{Remark}
 \begin{corr}
 \label{GroupNormLRCStrongConvexity}
 Using \thrmref{GeneralLRCBoundForFixedMapping2}, for any $1 \leq q \leq 2$, the \ac {LRC} of function class $\mathcal{F}_{q}$ in \eqref{GroupHSForFixedMApping} can be bounded as
 \begin{align}
 \label{GroupNormLRCStrongConvexityEq}
 \mathfrak{R}(\mathcal{F}_{q},r) \leq \sqrt{ \frac{4}{nT}   \left\| \left( \sum_{j=1}^{\infty} \min  \left(  r T^{1 - \frac{2}{q^{*}}}     ,    \frac{2 e  {q^{*}}^{2} R^{2}_{max}}{T}  \lambda_t^{j} \right)  \right)_{t=1}^{T}   \right\|_{\frac{q^{*}}{2}}   } + \frac{\sqrt{2 \mathcal{K} e } { R_{max} q^{*}} T^{\frac{1}{q^{*}}}}{nT}.
 \end{align}
 \end{corr}
 \textbf{Proof Sketch:} 
  The proof of the corollary uses the result of \lemmref{A_2ExcpectationGroupNorm} to  upper bound $A_{2}$ for the group-norm hypothesis space \eqref{GroupHSForFixedMApping} as,
   \begin{align}
   \label{A2BoundStrongConvexitymain}
   A_{2} (\mathcal{F}_{q}) \leq \sqrt{\frac{2 e {q^{*}}^{2} R^{2}_{max}}{n T^{2}} \left\|  \left( \sum_{j>h_t} \lambda_t^{j} \right)_{t=1}^{T}   \right\|_{\frac{q^{*}}{2}}} + \frac{\sqrt{2 \mathcal{K} e } R_{max} q^{*} T^{\frac{1}{q^{*}}}}{nT}.
   \end{align}
   Now, combining \eqref{GeneralA1BoundStrongConvexity} and \eqref{A2BoundStrongConvexitymain} provides the bound on $\mathfrak{R}(\mathcal{F}_{q},r)$ as
   \begin{align}
   \label{LRCBoundGroupNormStrongConvexitymain}
   \mathfrak{R}(\mathcal{F}_{q},r) & \leq   \sqrt{  \frac{ r \sum_{t=1}^{T} h_t} {nT}  } + \sqrt{\frac{2 e  {q^{*}}^{2} R^{2}_{max} }{nT^{2}}  \left\|  \left( \sum_{j>h_t} \lambda_t^{j} \right)_{t=1}^{T}   \right\|_{\frac{q^{*}}{2}} }  +  \frac{\sqrt{2 \mathcal{K} e } R_{max} q^{*} T^{\frac{1}{q^{*}}}}{nT},
  \end{align}
  Then using inequalities shown below which hold for any $\alpha_{1}, \alpha_{2} > 0$, any non-negative vectors $ \boldsymbol{a}_1, \boldsymbol{a}_2 \in \mathbb{R}^{T}$, any $0 \leq q  \leq  p  \leq  \infty$ and any $s \geq 1$,
   \begin{align}
   & \label{star} (\star) \sqrt{\alpha_1}+ \sqrt{\alpha_2} \leq \sqrt{2 (\alpha_1 + \alpha_2)}
   \\
   & \label{sstar}(\star \star)  \quad l_p-to-l_q:   \quad    \left\| \boldsymbol{a}_1 \right\|_q  = \left\langle \boldsymbol{1}, \boldsymbol{a}_1 \right\rangle^{\frac{1}{q}} \stackrel{\text{H\"{o}lder's}}{\leq} \left( \left\| \boldsymbol{1} \right\|_{(p/q)^{*}}  \left\| \boldsymbol{a}_1^{q} \right\|_{(p/q)}  \right) ^{\frac{1}{q}} = T^{\frac{1}{q} - \frac{1}{p}}  \left\| \boldsymbol{a}_1 \right\|_p
   \\
   & \label{ssstar} (\star \star \star)  \quad    \left\| \boldsymbol{a}_1 \right\|_s  +  \left\| \boldsymbol{a}_2 \right\|_s  \leq  2^{1 - \frac{1}{s}} \left\| \boldsymbol{a}_1  +  \boldsymbol{a}_2 \right\|_s  \leq   2 \left\| \boldsymbol{a}_1  +  \boldsymbol{a}_2 \right\|_{s},
   \end{align}
 we can obtain the desired result. See Appendix \ref{C} for the detailed proof.
 
 \begin{Remark}
 \label{NonMonoBoundStrongConvexity}
   Since the LRC bound above is not monotonic in $q$, it is more reasonable to state the above bound in terms of $\kappa \geq q$; choosing $\kappa=q$ is not always the optimal choice. Trivially, for the group norm regularizer with any $\kappa \geq q$, it holds that $\|\boldsymbol{W}\|_{2,\kappa} \leq \|\boldsymbol{W}\|_{2,q}$ and therefore $\mathfrak{R}(\mathcal{F}_{q},r) \leq \mathfrak{R}(\mathcal{F}_{\kappa},r)$. Thus, we have the following bound on $\mathfrak{R}(\mathcal{F}_{q},r)$ for any $\kappa \in [q,2]$,
   \begin{align}
   \label{KappaBound}
    \mathfrak{R}(\mathcal{F}_{q},r)  \leq \sqrt{ \frac{4}{nT}   \left\| \left( \sum_{j=1}^{\infty} \min  \left(   r T^{1 - \frac{2}{\kappa^{*}}}       ,    \frac{2 e  {\kappa^{*}}^{2} R^{2}_{max}}{T}  \lambda_t^{j} \right)  \right)_{t=1}^{T}   \right\|_{\frac{\kappa^{*}}{2}}   }
    +  \frac{\sqrt{2 \mathcal{K} e } R_{max} {\kappa^{*}} T^{\frac{1}{\kappa^{*}}}}{nT}.
   \end{align}
 \end{Remark}
 \begin{Remark}[Sparsity-inducing group-norm]
   \label{SparseGroupNormStrongConvexity}
Assuming a sparse representations shared across multiple tasks is a well-known presumption in \ac {MTL} \citep{argyriou2007,Argyriou2008} which leads to the use of group norm regularizer $\frac{1}{2} \| \boldsymbol{W} \|_{2,1}^{2}$. Notice that for any $\kappa \geq 1$, it holds that $\mathfrak{R}(\mathcal{F}_{1},r) \leq \mathfrak{R}(\mathcal{F}_{\kappa},r)$. Also, assuming an identical tail sum $\sum_{j \geq h} \lambda^{j}$ for all tasks, reduces the bound in \eqref{KappaBound} to the function $\kappa^{*} \mapsto \kappa^{*} T^{1/\kappa^{*}}$ in terms of $\kappa$. This function attains its minimum at $\kappa^{*} = \log T$. Thus, by choosing $\kappa^{*} = \log T$ it is easy to show:
      \begin{align*}
         \mathfrak{R}(\mathcal{F}_{1},r)  \leq & \sqrt{ \frac{4}{nT}   \Big\| \Big( \sum_{j=1}^{\infty} \min  \Big(   r T^{1 - \frac{2}{\kappa^*}}      ,    \frac{2 e  {\kappa^*}^{2} R^{2}_{max}}{T}  \lambda_t^{j} \Big)  \Big)_{t=1}^{T}   \Big\|_{\frac{\kappa^*}{2}}   } +  \frac{\sqrt{2 \mathcal{K} e } R_{max} \kappa^* T^{\frac{1}{\kappa^*}}}{nT}\\
         \stackrel{(l_{\frac{\kappa^*}{2}}-to-l_{\infty} )}{\leq} & \sqrt{ \frac{4}{nT}   \Big\| \Big( \sum_{j=1}^{\infty} \min  \Big(   r T      ,     \frac{2 e^{3}   (\log{T})^{2} R^{2}_{max} }{ T}  \lambda_t^{j} \Big)  \Big)_{t=1}^{T}   \Big\|_{\infty}   } + \frac{\sqrt{ 2 \mathcal{K} } R_{max} e^{\frac{3}{2}} \log{T} }{nT}.
     \end{align*}
 \end{Remark}
 \begin{Remark} [$L_{2,q}$ Group-norm regularizer with $ q \geq 2$] 
 \label{GroupNormLargeQHolder}
For any $q \geq 2$, \thrmref{GeneralLRCBoundForFixedMapping2} provides a \ac {LRC} bound for the function class $\mathcal{F}_{q}$ in \eqref{GroupHSForFixedMApping} as
   \begin{align}
   \mathfrak{R}(\mathcal{F}_{q},r) \leq  \sqrt{\frac{4}{nT}   \left\|   \left(  \sum_{j=1}^{\infty} \min \left( r T^{1-\frac{2}{{q^{*}}}} ,\frac{ 2 R^{2}_{max}}{T} \lambda_{t}^{j}  \right) \right)_{t=1}^{T}  \right\|_{\frac{q^*}{2}} },  
   \end{align} 
   \noindent
   where $q^{*} := \frac{q}{q-1}$.
 \end{Remark}
 \begin{proof}
   \begin{align}
   A_2 (\mathcal{F}_{q}) & \stackrel{ \text{H\"{o}lder's}}{\leq}  \frac{1}{T}  \mathbb{E}_{X,\sigma} \left\lbrace \sup_{\boldsymbol{f} \in \mathcal{F}_{q}} \left\|  \boldsymbol{W} \right\|_{2,q} \left\|  \boldsymbol{V} \right\|_{2,q^{*}} \right\rbrace
   \nonumber\\
   & \leq  \frac{\sqrt{2}R_{max}}{T} \mathbb{E}_{X,\sigma}    \left( \sum_{t=1}^{T} \left\| \sum_{j > h_t} \left\langle \frac{1}{n} \sum_{i=1}^{n} \sigma_t^{i} \phi(X_t^{i}) , \boldsymbol{u}_{t}^{j} \right\rangle  \boldsymbol{u}_{t}^{j} \right\| ^{q^{*}} \right) ^{\frac {1}{q^{*}}}
   \nonumber\\
   & \stackrel{\text{Jensen's}}{\leq} \frac{\sqrt{2} R_{max}}{T} \left( \sum_{t=1}^{T} \left( \mathbb{E}_{X,\sigma} \left\| \sum_{j > h_t} \left\langle \frac{1}{n} \sum_{i=1}^{n} \sigma_t^{i} \phi(X_t^{i}) , \boldsymbol{u}_{t}^{j} \right\rangle  \boldsymbol{u}_{t}^{j} \right\|^{2} \right) ^{\frac{q^{*}}{2}} \right) ^{\frac {1}{q^{*}}}
   \nonumber\\
   & = \frac{\sqrt{2} R_{max}}{T} \left( \sum_{t=1}^{T} \left( \sum_{j > h_t} \mathbb{E}_{X,\sigma} \left\langle \frac{1}{n} \sum_{i=1}^{n} \sigma_t^{i} \phi(X_t^{i}) , \boldsymbol{u}_{t}^{j} \right\rangle^{2}  \right) ^{\frac{q^{*}}{2}} \right) ^{\frac {1}{q^{*}}}
   \nonumber\\
   & = \frac{\sqrt{2} R_{max}}{T} \left( \sum_{t=1}^{T} \left( \sum_{j > h_t} \frac{\lambda_{t}^{j}}{n}  \right) ^{\frac{q^{*}}{2}} \right) ^{\frac {1}{q^{*}}} =  \sqrt{\frac{2 R_{max}^{2}}{nT^{2}} \left\| \left( \sum_{j>h_{t}} \lambda_{t}^{j} \right)_{t=1}^{T}  \right\|_{\frac{q^{*}}{2}} }.
   \nonumber  
   \end{align}
  By applying $(\star)$, $(\star \star)$ and $(\star \star \star) $, this last result together with the bound in \eqref{GeneralA1BoundStrongConvexity} for $A_{1}$, yields the result.
   \end{proof}
 To investigate the tightness of the bound in \eqref{GroupNormLRCStrongConvexityEq}, we derive the lower bound which holds for the \ac {LRC} of $\mathcal{F}_{q}$ with any $q \geq 1$.  The proof of the result can be found in Appendix \ref{C}.
 \begin{thrm}[Lower bound]
  \label{LowerBoundStrongConvexity}
     The following lower bound holds for the local Rademacher complexity of $\mathcal{F}_{q}$ in \eqref{GroupNormLRCStrongConvexityEq} with any $q \geq 1$. There is an absolute constant $c$ so that $\forall t$, if $\lambda_{t}^{1} \geq 1/(n R_{max}^{2})$ then for all $r \geq \frac{1}{n}$ and $q \geq 1$,
    \begin{align}
    \label{LowerBoundFixedMappingStrongConvexity}
       \mathfrak{R} (\mathcal{F}_{q, R_{max}, T}, r) \geq \sqrt{\frac{c}{n T^{1-\frac{2}{q^{*}}}} \sum_{j=1}^{\infty} \min \left( r T^{1-\frac{2}{q^{*}}} , \frac{R_{max}^{2}}{T} \lambda_{1}^{j}\right) }.
    \end{align}
    \end{thrm}
 A comparison between the lower bound in \eqref{LowerBoundFixedMappingStrongConvexity} and the upper bound in \eqref{GroupNormLRCStrongConvexityEq} can be clearly illustrated by assuming identical eigenvalue tail sums $\sum_{j \geq \infty} \lambda_{t}^{j}$ for all tasks, for which the upper bound translates to
    \begin{align}
       \mathfrak{R} (\mathcal{F}_{q, R_{max}, T},r)  \leq   \sqrt{\frac{4}{n T^{1-\frac{2}{q^{*}}}}  \sum_{j=1}^{\infty} \min \left( r T^{1-\frac{2}{{q^{*}}}} ,\frac{2 e {q^{*}}^{2} R^{2}_{max}  }{T} \lambda_{t}^{j}  \right)}  + \frac{\sqrt{2 \mathcal{K} e } R_{max} q^{*} T^{\frac{1}{q^{*}}}}{nT}.
     \nonumber
    \end{align}
By comparing this to \eqref{LowerBoundFixedMappingStrongConvexity}, we see that the lower bound matches the upper bound up to constants. The same analysis for \ac {MTL} models with Schatten norm and graph regularizers yields similar results confirming that the \ac {LRC} upper bounds that we have obtained are reasonably tight.

\begin{Remark}
\label{Lowebound}
It is worth pointing out that a matching lower bound on the local Rademacher complexity does not necessarily implies a tight bound on the expectation of an empirical minimizer. As it has been shown in Section 4 of \cite{ bartlett2004local}, by direct analysis of the empirical minimizer, sharper bounds than the \ac {LRC}-based bounds can be obtained. Consequently, based on Theorem 8 in \cite{ bartlett2004local}, there might be cases in which the local Rademacher complexity bounds are constants, however $P \hat{f}$ is of some order depending on the number of samples $n$---O(1/n))---which decreases with $n$ growing.  As it has pointed out in that paper, under some mild conditions on the loss function $\ell$, similar argument also holds for the class of loss functions $\{ \ell_{f} - \ell_{f^{*}} :  f \in \mathcal{F}\}$.
\end{Remark}


\subsection{Schatten Norm Regularized \ac {MTL}}
\label{ssec:SchattenNormRegularizer}
\cite{argyriou2007} developed a spectral regularization framework for \ac {MTL} where the $L_{S_{q}}$-Schatten norm
$\frac{1}{2} \| \boldsymbol{W}\|_{S_q}^{2} := \frac{1}{2} \big[\text{tr}\big(\boldsymbol{W}^T\boldsymbol{W}\big)^{\frac{q}{2}}\big]^{\frac{2}{q}}$ is studied as a concrete example, corresponding to performing \ac{ERM} in the following hypothesis space:
 \begin{equation}
 \label{HSForFixedMAppingSchatten}
 \mathcal{F}_{S_q} := \left\lbrace  X \mapsto [\left\langle \boldsymbol{w}_1, \phi (X_1)\right\rangle , \ldots, \left\langle \boldsymbol{w}_T, \phi (X_T)\right\rangle ]^{T} :  \frac{1}{2} \| \boldsymbol{W} \|_{S_q}^{2}  \leq R'^{2}_{max} \right\rbrace.
 \end{equation}
\begin{corr}
\label{SchattenNormSC}
For any $1 \leq q \leq 2$ in \eqref{HSForFixedMAppingSchatten}, the \ac {LRC} of function class $\mathcal{F}_{S_q}$ is bounded as
\begin{align}
\mathfrak{R}(\mathcal{F}_{S_q},r) \leq  \sqrt{ \frac{4}{nT}   \Big\| \Big( \sum_{j=1}^{\infty} \min  \Big(   r ,  \frac{ 2 q^{*}  R'^{2}_{max} }{T}  \lambda_t^{j} \Big)
 \Big)_{t=1}^{T}   \Big\|_{1}}.
 \nonumber
\end{align}
\end{corr}
The proof is provided in Appendix \ref{C}.

 \begin{Remark}[Sparsity-inducing Schatten-norm (trace norm)]
   \label{SparseSchattenNormStrongConvexity}
   Trace-norm regularized MTL, corresponding to Schatten norm regularization with $q=1$~\citep{Maurer2013,pong2010}, imposes a low-rank structure on the spectrum of $\boldsymbol{W}$ and can also be interpreted as low dimensional subspace learning \citep{argyriou2008algorithm,kumar2012,kang2011}. Note that for any $q \geq 1$, it holds that $\mathfrak{R}(\mathcal{F}_{S_1},r) \leq \mathfrak{R}(\mathcal{F}_{S_q},r)$. Therefore, choosing the optimal $q^{*}=1$, we get
      \begin{align}
         \mathfrak{R}(\mathcal{F}_{S_1},r)  \leq & \sqrt{ \frac{4}{nT}   \Big\| \Big( \sum_{j=1}^{\infty} \min  \Big(   r ,  \frac{2  R'^{2}_{max} }{T}  \lambda_t^{j} \Big)
          \Big)_{t=1}^{T}   \Big\|_{1}}.
       \nonumber
     \end{align}
 \end{Remark}
\begin{Remark}[$L_{S_{q}}$ Schatten-norm regularizer with $ q \geq 2$] 
 \label{SchattenNormLargeQHolder}
 For any $ q \geq 2$, \thrmref{GeneralLRCBoundForFixedMapping2} provides a \ac {LRC} bound for the function class $\mathcal{F}_{S_q}$ in \eqref{HSForFixedMAppingSchatten} as
  \begin{align}
   \mathfrak{R}(\mathcal{F}_{S_q},r) \leq  \sqrt{\frac{4}{nT}   \left\|   \left(  \sum_{j=1}^{\infty} \min \left( r ,\frac{2 R'^{2}_{max}}{T} \lambda_{t}^{j}  \right) \right)_{t=1}^{T}  \right\|_{1} }.  
    \end{align} 
   \noindent
   \end{Remark}
   \begin{proof}
  Taking $q^{*} := \frac{q}{q-1}$, we first bound the expectation $\mathbb{E}_{X,\sigma}  \left\| \boldsymbol{V} \right\|_{S_{q^{*}}}$. Take $\boldsymbol{U}_{t}^{i}$ as a matrix with $T$ columns where the only non-zero column $t$ of $\boldsymbol{U}_{t}^{i}$ is defined as $ \sum_{j>h_t} \left\langle \frac{1}{n} \phi (X_t^{i}) , \boldsymbol{u}_{t}^{j} \right\rangle  \boldsymbol{u}_{t}^{j}$. Based on the definition of $\boldsymbol{V} = \left( \sum_{j > h_t}  \left\langle \frac{1}{n} \sum_{i=1}^{n} \sigma_{t}^{i} \phi(X_t^{i}), \boldsymbol{u}_{t}^{j} \right\rangle \boldsymbol{u}_{t}^{j}\right)_{t=1}^{T}$, we can then provide a bound for this expectation as
   \begin{align}
   \mathbb{E}_{X, \sigma} \left\|  \boldsymbol{V} \right\|_{S_{q^{*}}} & =
    \mathbb{E}_{X, \sigma} \left\| \left(\sum_{j > h_{t}}  \left\langle \frac{1}{n} \sum_{i=1}^{n} \sigma_{t}^{i} \phi(X_{t}^{i}) , \boldsymbol{u}_{t}^{j} \right\rangle \boldsymbol{u}_{t}^{j} \right)_{t=1}^{T}  \right\|_{S_{q^{*}}} 
    \nonumber\\
    & = \mathbb{E}_{X, \sigma} \left\|  \sum_{t=1}^{T} \sum_{i=1}^{n} \sigma_{t}^{i} \boldsymbol{U}_{t}^{i} \right\|_{S_{q^{*}}} 
   \nonumber\\
  & \stackrel{ \text{Jensen}}{\leq}  \left(\text {tr} \left( \sum_{t,s=1}^{T} \sum_{i,j=1}^{n} \mathbb{E}_{X, \sigma} \left( \sigma_{t}^{i} \sigma_{s}^{j} {\boldsymbol{U}_{t}^{i}}^{T} \boldsymbol{U}_{s}^{j}\right)  \right)^{\frac{q^{*}}{2}}  \right)^{\frac{1}{q^{*}}} 
   \nonumber\\
   & =  \left(\text {tr} \left( \sum_{t=1}^{T} \sum_{i=1}^{n} \mathbb{E}_{X} \left( {\boldsymbol{U}_{t}^{i}}^{T} \boldsymbol{U}_{t}^{i}\right)  \right)^{\frac{q^{*}}{2}}  \right)^{\frac{1}{q^{*}}} 
   \nonumber\\
   & =  \left( \sum_{t=1}^{T} \sum_{i=1}^{n} \sum_{j>h_t} \mathbb{E}_{X} \left\langle \frac{1}{n} \phi (X_t^{i}) , \boldsymbol{u}_{t}^{j} \right\rangle^{2} \right)^{\frac{1}{2}}  
   \nonumber\\
    & =  \left( \frac{1}{n} \sum_{t=1}^{T} \sum_{j>h_t} \lambda_{t}^{j} \right)^{\frac{1}{2}} 
   = \sqrt{\frac{1}{n} \left\| \left(  \sum_{j> h_t} \lambda_{t}^{j} \right)_{t=1}^{T} \right\|_{1} }.
   \nonumber
   \end{align}
   Note that replacing this into \eqref{GeneralBoundHolderInq}, and with the help of $(\star)$, $(\star \star)$ and $(\star \star \star) $, one can conclude the result.
   \end{proof}
 \subsection{Graph Regularized \ac {MTL}}
 \label{:GraphRegularizer}
   The idea underlying graph regularized \ac {MTL}  is to force the models of related tasks to be close to each other, by penalizing the squared distance $\| \boldsymbol{w}_{t} - \boldsymbol{w}_{s} \|^{2}$ with different weights $\omega_{ts}$. We consider the following  \ac {MTL} graph regularizer \citep{maurer2006Graph}
 \begin{align}
 \Omega (\boldsymbol{W}) = \frac{1}{2} \sum_{t=1}^{T} \sum_{s=1}^{T} \omega_{ts} \| \boldsymbol{w}_{t} - \boldsymbol{w}_{s} \|^{2} + \eta \sum_{t=1}^{T} \| \boldsymbol{w}_{t} \|^{2} = \sum_{t=1}^{T} \sum_{s=1}^{T} \left( \boldsymbol{L}+ \eta \boldsymbol{I}\right)_{ts} \left\langle \boldsymbol{w}_{t} , \boldsymbol{w}_{s}\right\rangle,
 \nonumber
 \end{align}
 \noindent
 where $\boldsymbol{L}$ is the graph-Laplacian associated to a matrix of edge-weights $\omega_{ts}$, $\boldsymbol{I}$ is the identity operator, and $\eta > 0$ is a regularization parameter. According to the identity $ \sum_{t=1}^{T} \sum_{s=1}^{T} \big( \boldsymbol{L}+ \eta \boldsymbol{I}\big)_{ts} \big\langle \boldsymbol{w}_{t} , \boldsymbol{w}_{s}\big\rangle = \|  (\boldsymbol{L}+ \eta \boldsymbol{I})^{1/2} \boldsymbol{W} \|_{F}^{2}$, the corresponding hypothesis space is:
 \begin{align}
 \label{GraphRegHS}
 \mathcal{F}_{G} := \Big\lbrace  X \mapsto [\Big\langle \boldsymbol{w}_1, \phi (X_1)\Big\rangle , \ldots, \Big\langle \boldsymbol{w}_T, \phi (X_T)\Big\rangle ]^{T} : \frac{1}{2} \| \boldsymbol{D}^{1/2} \boldsymbol{W}\|_{F}^{2}  \leq R''^{2}_{max} \Big\rbrace.
 \end{align}
 where we define $\boldsymbol{D} := \boldsymbol{L}+ \eta \boldsymbol{I}$.

\begin{corr}
\label{GraphSC}
For any given positive definite matrix $\boldsymbol{D}$ in \eqref{GraphRegHS}, the \ac {LRC} of $\mathcal{F}_{G}$ is bounded by
\begin{align}
\mathfrak{R} (\mathcal{F}_{G},r) \leq  \sqrt{ \frac{4}{nT}   \Big\| \Big( \sum_{j=1}^{\infty} \min  \Big( r ,
\frac {2 \boldsymbol{D}_{tt}^{-1} R''^{2}_{max}}{T}   \lambda_t^{j} ) \Big) \Big)_{t=1}^{T}   \Big\|_{1}}.
\end{align}
\noindent
where $\left( \boldsymbol{D}_{tt}^{-1}\right)_{t=1}^{T} $ are the diagonal elements of $\boldsymbol{D}^{-1}$.
\end{corr}
See Appendix \ref{C} for the proof.

\section{Excess Risk Bounds for Norm Regularized \ac{MTL} Models}
\label{sec:StrongConvexityExcessRisk}
In this section we will provide the distribution and data-dependent excess risk bounds for the hypothesis spaces considered earlier. Note that, due to space limitations, the proofs are provided only for the hypothesis space $\mathcal{F}_{q}$ with $q \in [1,2]$ in \eqref{GroupHSForFixedMApping}. However, in the cases involving the $L_{2,q}$-group norm with $q \geq 2$, as well as the $L_{S_q}$-Schatten and graph norms, the proofs can be obtained in a very similar way. More specifically, by using the \ac{LRC} bounds of \remref{GroupNormLargeQHolder}, \corref{SchattenNormSC}, \remref{SchattenNormLargeQHolder} and \corref{GraphSC}, one can follow the same steps of the proofs of this section to arrive at the results pertaining to these cases.
\begin{thrm}\textbf{\emph{(Distribution-dependent excess risk bound for a $L_{2,q}$ group-norm regularized \ac {MTL})}}
\label{RiskBoundForFixedMappingStrongConvexity}
Assume that $\mathcal{F}_{q}$ in \eqref{GroupHSForFixedMApping} is a convex class of functions with ranges in $[-b,b]$, and let the loss function $\ell$ of Problem \eqref {GeneralFixedMappingModel} be such that \assumref{assumption} is satisfied. Let $\boldsymbol{\hat{f}}$ be any element of $ \mathcal{F}_{q}$ with $1 \leq q \leq 2$ which satisfies $P_n \ell_{\boldsymbol{\hat{f}}} = \inf_{\boldsymbol{f} \in \mathcal{F}_{q}} P_n \ell_{\boldsymbol{f}}$. Assume moreover that $k$ is a positive semi-definite kernel on $\mathcal{X}$ such that $\| k \|_{\infty} \leq \mathcal{K}  \leq \infty$. Denote by $r^{*}$ the fixed point of $2 B L \mathfrak{R}(\mathcal{F}_{q},\frac{r}{4L^{2}})$. Then, for any $K >1$ and $x > 0 $, with probability at least $1 - e^{-x}$, the excess loss of function class $\mathcal{F}_{q}$ is bounded as
\begin{align}
P (\ell_{\boldsymbol{\hat{f}}} - \ell_{\boldsymbol{f}^{*}})  \leq   \frac{32 K}{B} r^{*} + \frac{(48 Lb+ 16 B K) x}{n T}, 
\end{align}
where for the fixed point $r^{*}$ of the local Rademacher complexity $2 B L \mathfrak{R}(\mathcal{F}_{q},\frac{r}{4L^{2}})$, it holds that
\begin{align}
\label{BoundFixefPointGroupNormStrongConvexity}
r^{*} \leq \min_{0 \leq h_{t} \leq \infty} \frac{B^{2} \sum_{t=1}^{T} h_t}{Tn}  + 4BL \sqrt{\frac{2 e {q^{*}}^{2} R^{2}_{max}}{nT^{2}}  \left\| \left( \sum_{j>h_t} \lambda_{t}^{j}\right)_{t=1}^{T} \right\|_{\frac{q^{*}}{2}} } + \frac{ 4\sqrt{2 \mathcal{K} e } R_{max} q^{*} T^{\frac{1}{q^{*}}}}{nT},
\end{align}
\noindent 
where $h_1, \ldots, h_T$ are arbitrary non-negative integers.
\end{thrm}
\begin{proof}
 First notice that $\mathcal{F}_{q}$ is convex, thus it is star-shaped around any of its points. Hence according to \lemmref{LRCIsStarHull}, $\mathfrak{R}(\mathcal{F}_{q},r)$ is a sub-root function. Moreover, because of the symmetry of $\sigma_t^{i}$ and because $\mathcal{F}_{q}$ is convex and symmetric, it can be shown that $\mathfrak{R}(\mathcal{F}_{q}^{*},r) \leq 2 \mathfrak{R}(\mathcal{F}_{q},\frac{r}{4L^{2}}) $, where $\mathfrak{R}(\mathcal{F}^{*}_{q}, r)$ is defined according to \eqref{LRCofFstar} for the class of functions $\mathcal{F}_{q}$. Therefore, it suffices to find the fixed point of $2 B L \mathfrak{R}(\mathcal{F}_{q},\frac{r}{4L^{2}})$ by solving $\phi(r) =r$. For this purpose, we will use \eqref{LRCBoundGroupNormStrongConvexitymain} as a bound for $\mathfrak{R}(\mathcal{F}_{q}, r)$, and solve $\sqrt{\alpha r} +\gamma =r$ (or equivalently $r^2 - (\alpha +2 \gamma)r + \gamma^2 =0$) for $r$, where we define
\begin{align}
\alpha= \frac{B^{2} \sum_{t=1}^{T} h_t}{Tn}, \; \text{and} \; \gamma = 2BL \sqrt{\frac{2 e {q^{*}}^{2} R^{2}_{max}}{nT^{2}}  \left\| \left( \sum_{j>h_t} \lambda_{t}^{j}\right)_{t=1}^{T} \right\|_{\frac{q^{*}}{2}} }+\frac{ 2 \sqrt{2 \mathcal{K} e } R_{max} B L q^{*} T^{\frac{1}{q^{*}}}}{nT}.
\end{align}
\noindent
It is not hard to verify that $r^{*} \leq \alpha+2\gamma$. Substituting the definition of $\alpha$ and $\gamma$ gives the result.
\end{proof}

\begin{Remark}
If the conditions of \thrmref{GeneralLRCBoundForFixedMapping2} hold, then it can be shown that the following results hold for the fixed point of the considered hypothesis spaces in \eqref{GroupHSForFixedMApping}, \eqref{HSForFixedMAppingSchatten} and \eqref{GraphRegHS}.
  \label{FixedPointSchatten&Graph}
  \begin{itemize}
\item Group norm: For the fixed point $r^{*}$ of the local Rademacher complexity $2 B L \mathfrak{R}(\mathcal{F}_{q},\frac{r}{4L^{2}})$ with any $1 \leq q \leq 2$ in \eqref{GroupHSForFixedMApping}, it holds
    \begin{align}
     \label{BoundFixefPointGroupNormq<2}
    \!\! r^{*} \leq \!\! \min_{0 \leq h_{t} \leq \infty} \!\! \frac{B^{2} \sum_{t=1}^{T} h_t}{Tn}  + 4BL \!\! \sqrt{\frac{2 e {q^{*}}^{2} R^{2}_{max}}{nT^{2}}  \left\| \left( \sum_{j>h_t} \lambda_{t}^{j}\right)_{t=1}^{T} \right\|_{\frac{q^{*}}{2}} } \!\! + \frac{4 \sqrt{2 \mathcal{K} e} R_{max} {q^{*}} BL T^{\frac{1}{{q^{*}}} }  }{nT}.
     \end{align}
Also, for any $q \geq 2$ in \eqref{GroupHSForFixedMApping}, it holds
     \begin{align}
      \label{BoundFixefPointGroupNormq>2}
      r^{*} \leq \min_{0 \leq h_{t} \leq \infty} \frac{B^{2} \sum_{t=1}^{T} h_t}{Tn}  + 4BL \sqrt{\frac{2 R^{2}_{max}}{nT^{2}}  \left\| \left( \sum_{j>h_t} \lambda_{t}^{j}\right)_{t=1}^{T} \right\|_{\frac{q^{*}}{2}} }.
     \end{align}
\item Schatten-norm: For the fixed point $r^{*}$ of the local Rademacher complexity $2 B L \mathfrak{R}(\mathcal{F}_{S_q},\frac{r}{4L^{2}})$ with any $1 \leq q \leq 2$ in \eqref{HSForFixedMAppingSchatten}, it holds
    \begin{align}
    \label{BoundFixefPointSchattenNormq<2}
    r^{*} \leq \min_{0 \leq h_{t} \leq \infty} \frac{B^{2} \sum_{t=1}^{T} h_t}{Tn}  + 4BL \sqrt{\frac{2 q^{*} R'^{2}_{max}}{nT^{2}}  \left\| \left( \sum_{j>h_t} \lambda_{t}^{j}\right)_{t=1}^{T} \right\|_{1} }.
    \end{align}    
Also, for any $q \geq 2 $ in \eqref{HSForFixedMAppingSchatten}, it holds
    \begin{align}
     \label{BoundFixefPointSchattenNormq>2}
     r^{*} \leq \min_{0 \leq h_{t} \leq \infty} \frac{B^{2} \sum_{t=1}^{T} h_t}{Tn}  + 4BL \sqrt{\frac{2 R'^{2}_{max}}{nT^{2}}  \left\| \left( \sum_{j>h_t} \lambda_{t}^{j}\right)_{t=1}^{T} \right\|_{1} }.
     \end{align}
\item Graph regularizer: For the fixed point $r^{*}$ of the local Rademacher complexity $2 B L \mathfrak{R}(\mathcal{F}_{G},\frac{r}{4L^{2}})$ with any positive operator $\boldsymbol{D}$ in \eqref{GraphRegHS}, it holds
   \begin{align}
    \label{BoundFixefPointGraph}
    r^{*} \leq \min_{0 \leq h_{t} \leq \infty} \frac{B^{2} \sum_{t=1}^{T} h_t}{Tn}  + 4BL \sqrt{\frac{2  R''^{2}_{max}}{nT^{2}}  \left\| \left( \boldsymbol{D}_{tt}^{-1} \sum_{j>h_t} \lambda_{t}^{j}\right)_{t=1}^{T} \right\|_{1} }.
    \end{align}
   \end{itemize}
\end{Remark} 

 Regarding the fact that $\lambda_{t}^{j}$s are decreasing with respect to $j$, we can assume $\exists d_{t}: \lambda_{t}^{j} \leq d_{t} j^{-\alpha_{t}}$ for some $\alpha_{t} > 1$. As examples, this assumption holds for finite rank kernels as well as convolution kernels. Thus, it can be shown that
 \begin{align}
 \label{BoundforLambdaSumSC}
 \sum_{j> h_{t}} \lambda_{t}^{j} \leq d_{t} \sum_{j > h_{t}} j ^{-\alpha_{t}} \leq d_{t} \int_{h_{t}}^{\infty} x^{-\alpha_{t}} dx = d_{t} \left[ \frac{1}{1-\alpha_{t}} x^{1-\alpha_{t}}\right]_{h_{t}}^{\infty} = -\frac{d_{t}}{1 - \alpha_{t}} h_{t}^{1 - \alpha_{t}}.
 \end{align}
 \noindent
 Note that via $l_{p}-to-l_{q}$ conversion inequality in \eqref{sstar}, for $p=1$ and $q = \frac{q*}{2}$, we have
 \begin{align}
 \frac{B^{2} \sum_{t=1}^{T} h_t}{Tn}  \leq B \sqrt{\frac{B^{2} T \sum_{t=1}^{T} h_{t}^{2}}{n^{2}T^{2}}} \stackrel{(\star \star)}{\leq} B \sqrt{\frac{B^{2} T^{2-\frac{2}{q^{*}}} \left\| \left( h_{t}^{2} \right)_{t=1}^{T}  \right\|_{\frac{q^{*}}{2}} }{n^{2}T^{2}}}.
 \nonumber
 \end{align}
 \noindent
Now, applying, \eqref{star} and \eqref{ssstar}, and inserting \eqref{BoundforLambdaSumSC} into \eqref{BoundFixefPointGroupNormStrongConvexity}, it holds for group norm regularized \ac{MTL} with $1 \leq  q \leq 2$, 
 \begin{align}
 \label{BoundFixedPoint2SC}
 r^{*} \leq \!\! \min_{0 \leq h_{t} \leq \infty} \!\! 2B  \sqrt{ \left\| \left( \frac{B^{2} T^{2-\frac{2}{q^{*}}} h_{t}^{2} }{n^{2} T^{2}} - \frac{32 d_{t} e {q^{*}}^{2} R^{2}_{max} L^{2}}{n T^{2} (1-\alpha_{t})} h_{t}^{1-\alpha_{t}}\right) _{t=1}^{T}\right\|_{\frac{q^{*}}{2}} } \!\! + \frac{ 4 \sqrt{2 \mathcal{K} e } R_{max} B L q^{*} T^{\frac{1}{q^{*}}}}{nT}.
 \end{align}
 \noindent
 Taking the partial derivative of the above bound with respect to $h_{t}$ and setting it to zero yields the optimal $h_{t}$ as
 \begin{align}
 h_{t} = \left( 16 d_{t} e {q^{*}}^{2} R^{2}_{max} B^{-2} L^{2} T^{\frac{2}{q^{*}} - 2} n \right)^{\frac{1}{1+\alpha_{t}}}.
 \nonumber 
 \end{align}
 \noindent
Note that substituting the above for $\alpha:= \min_{t \in \mathbb{N}_{T}} \alpha_{t}$ and $d = max_{t \in \mathbb{N}_{T}} d_{t}$ into \eqref{BoundFixedPoint2SC}, we can upper-bound the fixed point of $r^{*}$ as 
\begin{align}
 r^{*} \leq \frac{14 B^{2}}{n} \sqrt{\frac{\alpha + 1}{\alpha-1}}\left( d {q^{*}}^{2} R^{2}_{max} B^{-2} L^{2} T^{\frac{2}{q^{*}} - 2} n \right)^{\frac{1}{1+\alpha}}+ \frac{ 10 \sqrt{ \mathcal{K}} R_{max} B L q^{*} T^{\frac{1}{q^{*}}}}{nT}, 
 \nonumber
 \end{align}
  which implies that
 \begin{align}
 r^{*} = O \left( \left( \frac{T^{1-\frac{1}{q^{*}}}}{{q^{*}}}\right)^{\frac{-2}{1+ \alpha}} n^{\frac{-\alpha}{1 + \alpha}} \right). 
 \nonumber
 \end{align}
 
 It can be seen that the convergence rate can be as slow as $O\left( \frac{q^{*} T^{1/q^{*}}}{T \sqrt{n}} \right)$ (for small $\alpha$, where at least one $\alpha_{t} \approx 1$), and as fast as $O(n^{-1})$ (when $\alpha_{t} \rightarrow \infty$, for all $t$). The bound obtained for the fixed point together with \thrmref{RiskBoundForFixedMappingStrongConvexity} provides a bound for the excess risk, which leads to the following remark.
  \begin{Remark}[Excess risk bounds for selected norm regularized \ac {MTL} problems]
  \label{ExcessRiskBoundsSC}
   Assume that $\mathcal{F}_{q}$, $\mathcal{F}_{S_{q}}$ and $\mathcal{F}_{G}$ are convex classes of functions with ranges in $[-b,b]$, and let the loss function $\ell$ of Problem \eqref {GeneralFixedMappingModel} be such that \assumref{assumption} are satisfied. Assume moreover that $k$ is a positive semidefinite kernel on $\mathcal{X}$ such that $\| k \|_{\infty} \leq \mathcal{K}  \leq \infty$. Also, denote $\alpha:= \min_{t \in \mathbb{N}_{T}} \alpha_{t}$ and $d = max_{t \in \mathbb{N}_{T}} d_{t}$. Also,
   \begin{itemize}
    \item Group norm:  If $\boldsymbol{\hat{f}}$ satisfies $P_n \ell_{\boldsymbol{\hat{f}}} = \inf_{\boldsymbol{f} \in \mathcal{F}_{q}} P_n \ell_{\boldsymbol{f}}$, and  $r^{*}$ is the fixed point of the local Rademacher complexity $2 B L \mathfrak{R}(\mathcal{F}_{q},\frac{r}{4L^{2}})$ with any $1 \leq q \leq 2$ in \eqref{GroupHSForFixedMApping} and any $K >1$, it holds with probability at least $1 -e^{-x}$,
       \begin{align}
      P (\ell_{{\boldsymbol{\hat{f}}}} - \ell_{\boldsymbol{f}^{*}}) \leq  \min_{\kappa \in [q,2]} & 448 K \sqrt{\frac{\alpha+1}{\alpha-1}}\left( d {\kappa^{*}}^{2} R^{2}_{max} L^{2} \right)^{\frac{1}{1+\alpha}} B^{\frac{ \alpha -1}{\alpha+1}} \left( T^{\frac{2}{\kappa} } \right)^{\frac{-1}{1+\alpha}}  n^{\frac{-\alpha}{1+\alpha}} 
      \nonumber\\
      & + \frac{ 320 \sqrt{\mathcal{K} } R_{max} K L \kappa^{*} T^{\frac{1}{\kappa^{*}}}}{nT} + \frac{(48 Lb+ 16 B K) x}{n T} .
       \end{align}
           Also, for any $ q \geq 2$ in \eqref{GroupHSForFixedMApping} and any $K >1$, it holds with probability at least $1 -e^{-x}$,
              \begin{align}
             P (\ell_{{\boldsymbol{\hat{f}}}} - \ell_{\boldsymbol{f}^{*}}) \leq \min_{q \in [2,\infty]}  & 256 K \sqrt{\frac{\alpha+1}{\alpha-1}}\left( d R^{2}_{max} L^{2} \right)^{\frac{1}{1+\alpha}} B^{\frac{ \alpha - 1}{\alpha+1}} \left( T^{\frac{2}{q}} \right)^{\frac{-1}{1+\alpha}}   n^{\frac{-\alpha}{1+\alpha}} 
             \nonumber\\
             &  + \frac{(48 Lb+ 16 B K) x}{n T} .
              \end{align}
   \item Schatten-norm: If $\boldsymbol{\hat{f}}$ satisfies $P_n \ell_{\boldsymbol{\hat{f}}} = \inf_{\boldsymbol{f} \in \mathcal{F}_{S_q}} P_n \ell_{\boldsymbol{f}}$, and $r^{*}$ is the fixed point  of the local Rademacher complexity $2 B L \mathfrak{R}(\mathcal{F}_{S_q},\frac{r}{4L^{2}})$ with any $1 \leq q \leq 2$ in \eqref{HSForFixedMAppingSchatten} and any $K >1$, it holds with probability at least $1 -e^{-x}$,
     \begin{align}
     P (\ell_{{\boldsymbol{\hat{f}}}} - \ell_{\boldsymbol{f}^{*}}) & \leq  \min_{q \in [1,2]} 256 K \sqrt{\frac{\alpha +1}{\alpha-1}}\left( d q^* R'^{2}_{max} L^{2} \right)^{\frac{1}{1+\alpha}} B^{\frac{\alpha -1}{\alpha+1}} T^{\frac{-1}{1+\alpha}}  n^{\frac{-\alpha}{1+\alpha}} 
     \nonumber\\
     & + \frac{(48 Lb+ 16 B K)x}{n T}.
     \end{align}    
       Also, for any $q \geq 2$ in \eqref{HSForFixedMAppingSchatten} and any $K >1$, it holds with probability at least $1 -e^{-x}$,
          \begin{align}
          P (\ell_{{\boldsymbol{\hat{f}}}} - \ell_{\boldsymbol{f}^{*}}) & \leq  256 K \sqrt{\frac{\alpha +1}{\alpha-1}}\left( d R'^{2}_{max} L^{2} \right)^{\frac{1}{1+\alpha}} B^{\frac{\alpha -1}{\alpha+1}} T^{\frac{-1}{1+\alpha}}  n^{\frac{-\alpha}{1+\alpha}} 
          \nonumber\\
          & + \frac{(48 Lb+ 16 B K) x}{n T}.
          \end{align}       
  \item Graph regularizer: If $\boldsymbol{\hat{f}}$ satisfies $P_n \ell_{\boldsymbol{\hat{f}}} = \inf_{\boldsymbol{f} \in \mathcal{F}_{G}} P_n \ell_{\boldsymbol{f}}$, and $r^{*}$ is the fixed point of the local Rademacher complexity $2 B L \mathfrak{R}(\mathcal{F}_{G},\frac{r}{4L^{2}})$ with any positive operator $\boldsymbol{D}$ in \eqref{GraphRegHS} and any $K >1$, it holds with probability at least $1 -e^{-x}$,
     \begin{align}
     P (\ell_{{\boldsymbol{\hat{f}}}} - \ell_{\boldsymbol{f}^{*}}) & \leq   256 K \sqrt{\frac{\alpha +1}{\alpha-1 }}\left( d R''^{2}_{max} L^{2} \boldsymbol{D}_{max}^{-1} \right)^{\frac{1}{1+\alpha}} B^{\frac{\alpha -1}{\alpha+1}} T^{\frac{-1}{1+\alpha}}  n^{\frac{-\alpha}{1+\alpha}} 
          \nonumber\\
          & + \frac{(48 Lb+ 16 B K)x}{n T}.
      \end{align}
      \noindent
      where $\boldsymbol{D}_{max}^{-1} := \max_{t \in \mathbb{N}_{T}} \boldsymbol{D}_{tt}^{-1}$.
    \end{itemize}
\end{Remark} 
\begin{corr}\textbf{\emph{(Data-dependent excess risk bound for a \ac {MTL} problem with a $L_{2,q}$ group-norm regularizer)}}
\label{DataBoundGroupNorm}
Assume the convex class $\mathcal{F}_{q}$ in \eqref{GroupHSForFixedMApping} has ranges in $[-b,b]$, and let the loss function $\ell$ in Problem \eqref {GeneralFixedMappingModel} be such that \assumref{assumption} are satisfied. Let $\boldsymbol{\hat{f}}$ be any element of $ \mathcal{F}_{q}$ with $1 \leq q \leq 2$ which satisfies $P_n \ell_{\boldsymbol{\hat{f}}} = \inf_{\boldsymbol{f} \in \mathcal{F}_{q}} P_n \ell_{\boldsymbol{f}}$. Assume moreover that $k$ is a positive semidefinite kernel on $\mathcal{X}$ such that $\| k \|_{\infty} \leq \mathcal{K}  \leq \infty$. Let $\boldsymbol{K}_{t}$ be the $n \times n$ normalized Gram matrix (or kernel matrix) of task $t$ with entries $(\boldsymbol{K}_{t} )_{ij} := \frac{1}{n} k(X_{t}^{i}, X_{t}^{j} ) = \frac{1}{n} \left\langle \hat{\phi}(X_{t}^{i}) , \hat{\phi}(X_{t}^{j}) \right\rangle $. Let $\hat{\lambda}_{t}^{1}, \ldots \hat{\lambda}_{t}^{n}$ be the ordered eigenvalues of matrix $\boldsymbol{K}_{t}$, and $\hat{r}^{*}$ be the fixed point of
\begin{align}
\label{ExcessRiskLossFixedMappingEmpirical}
\hat{\psi}_{n} (r) = c_{1} \hat{\mathfrak{R}}(\mathcal{F}^{*}_{q}, c_{3} r) + \frac{c_{2}x}{nT},
\nonumber
\end{align}
\noindent
where $c_{1} = 2L \max\left( B,16Lb\right)$, $c_{2} = 128 L^{2} b^{2} + 2 b c_{1}$ and $c_{3} = 4+ 128 K+4B(48Lb+16BK)/c_{2}$, and
\begin{align}
\hat{\mathfrak{R}}(\mathcal{F}^{*}_{q},c_{3} r) := \mathbb{E}_{\sigma} \left[ \sup_{\substack{\boldsymbol{f} \in \mathcal{F}_{q},\\ L^{2} P_n \left(  \boldsymbol{f} - \boldsymbol{\hat{f}} \right)^{2}  \leq c_{3} r}} \frac{1}{nT} \sum_{t=1}^{T} \sum_{i=1}^{n} \sigma_{t}^{i} f_t(X_{t}^{i})  \Biggr \vert \left\{ x_t^i \right\}_{t \in \mathbb{N}_T, i \in \mathbb{N}_n}  \right]. 
\end{align}
Then, for any $K > 1$ and $x > 0 $, with probability at least $1 - 4e^{-x}$ the excess loss of function class $\mathcal{F}_{q}$ is bounded as
\begin{align}
P (\ell_{\boldsymbol{\hat{f}}} - \ell_{\boldsymbol{f}^{*}})  \leq   \frac{32 K}{B} \hat{r}^{*} + \frac{(48Lb+ 16 B K) x}{n T}, 
\end{align}
where for the fixed point $\hat{r}^{*}$ of the empirical local Rademacher complexity $\hat{\psi}_{n} (r)$, it holds
\begin{align}
\hat{r}^{*} & \leq   \frac{c_{1}^{2}  c_{3} \sum_{t=1}^{T} \hat{h}_{t}}{nT L^{2}} + 4 \sqrt{\frac{2 c_{1}^{2} {q^{*}}^{2} R^{2}_{max}}{nT^{2}} \left\| \left( \sum_{j>\hat{h}_{t}}^{n} \hat{\lambda}_{t}^{j}\right)_{t=1}^{T}  \right\|_{\frac{q^{*}}{2}} } + \frac{2 c_{2} x}{nT} ,
\nonumber 
\end{align}
\noindent 
where $\hat{h}_1, \ldots, \hat{h}_T$ are arbitrary non-negative integers, and $(\hat{\lambda_{t}^{j}})_{j=1}^{n}$ are eigenvalues of the normalized Gram matrix $\boldsymbol{K}$ obtained from kernel function $k$.
\end{corr}
The proof of the result is provided in Appendix \ref{D}.

\section {Discussion}
\label{sec:Discussion}
This section is devoted to compare the excess risk bounds based on local Rademacher complexity to those of the global ones. 
\subsection{Global vs. Local Rademacher Complexity Bounds}
\label{ssec:GlobalVsLocal}
First, note that to obtain the \ac {GRC}-based bounds, we apply Theorem 16 of \cite{maurer2006}, as we consider the same setting and assumptions for tasks' distributions as considered in this work. This theorem presents a \ac {MTL} bound based on the notion of \ac {GRC}.
\begin{thrm}[\ac{MTL} excess risk bound based on \ac {GRC}; Theorem 16 of \cite{maurer2006} ]
\label{GRCBoundforMTL}
 Let the vector-valued function class $\mathcal{F}$ be defined as $\fcal := \{\boldsymbol{f}=(f_1,\ldots,f_T) : \mathcal{X}\mapsto[-b,b]^T \}$. Assume that $X=(X^{t}_{i})^{(n,T)}_{(i,t)=(1,1)}$ is a vector of independent random variables where for all fixed $t$, $X^{t}_{1},\ldots, X^{t}_{n}$ are identically distributed according to $P_t$. Let the loss function $\ell$ be $L$-Lipschitz in its first argument. Then for every $x > 0$, with probability at least $1-e^{-x}$,
  \begin{align}
     \label{GRCBoundforMTLInq}
       P (\ell_{\boldsymbol{f}}- \ell_{\boldsymbol{f^{*}}}) \leq P_{n} (\ell_{\boldsymbol{f}} - \ell_{\boldsymbol{f^{*}}} )+ 2 L \mathfrak{R} 
       (\mathcal{F}) +  \sqrt{\frac{2Lbx}{nT}}.
     \end{align}
 \end{thrm}
  \noindent
 \begin{proof}
 As it has been shown in \cite{maurer2006}, the proof of this theorem is based on using McDiarmid's inequality for $Z$ defined in \thrmref{TalagrandforMTL}, and noticing that for the function class $\mathcal{F}$ with values in $[-b,b]$, it holds that $|Z- Z_{s,j}| \leq 2b/nT$.
 \end{proof}

It can be observed that, in order to obtain the excess risk bound in the above theorem, one has to bound the \ac {GRC} term $\mathfrak{R}(\mathcal{F})$ in \eqref{GRCBoundforMTLInq}. Therefore, we first upper-bound the \ac {GRC} of different hypothesis spaces considered in the previous sections. The proof of the results can be found in Appendix \ref{E}.
\begin{thrm}[Distribution-dependent \ac {GRC} bounds] \leavevmode
 \label{GRCFixedMappingStrongConvexity} 
 Assume that the conditions of \thrmref{GeneralLRCBoundForFixedMappingStrongConvex} hold. Then, the following results hold for the \ac {GRC} of the hypothesis spaces in \eqref{GroupHSForFixedMApping}, \eqref{HSForFixedMAppingSchatten} and \eqref{GraphRegHS}, respectively.
   \begin{itemize}
\item Group-norm regularizer: For any $ 1 \leq q \leq 2$ in \eqref{GroupHSForFixedMApping}, the \ac {GRC} of the function class $\mathcal{F}_{q}$ can be bounded as
    \begin{align}
    \label{GRCGroupq<2}
       \forall \kappa \in [q,2]: \quad \mathfrak{R}(\mathcal{F}_{q})  \leq  \sqrt{\frac{2 e {\kappa^{*}}^{2} R^{2}_{max}}{nT^{2}}  \left\|  \left(  \textbf{tr}\left( J_t\right)  \right)_{t=1}^{T}   \right\|_{\frac{\kappa^{*}}{2}} } + \frac{\sqrt{2 \mathcal{K} e} R_{max} \kappa^{*} T^{\frac{1}{\kappa^{*}}}}{nT}.
    \end{align}
 Also, for any $q \geq 2$ in \eqref{GroupHSForFixedMApping}, the \ac {GRC} of the function class $\mathcal{F}_{q}$ can be bounded as
        \begin{align}
        \label{GRCGroupq>2}
           \mathfrak{R}(\mathcal{F}_{q})  \leq  \sqrt{\frac{2 R^{2}_{max}}{nT^{2}}  \left\|  \left(  \textbf{tr}\left( J_t\right)  \right)_{t=1}^{T}   \right\|_{\frac{q^{*}}{2}}}.
        \end{align}
\item Schatten-norm regularizer: For any $ 1 \leq q \leq 2$ in \eqref{HSForFixedMAppingSchatten}, the \ac {GRC} of the function class $\mathcal{F}_{S_q}$ can be bounded as 
   \begin{align}
   \label{GRCShattenq<2}
    \mathfrak{R}(\mathcal{F}_{S_q}) \leq  \sqrt{ \frac{2  q^{*} R'^{2}_{max}}{nT^{2}} \left\|  \left(  \textbf{tr}\left( J_t\right)  \right)_{t=1}^{T} \right\|_{1}}.
   \end{align}
Also, for any $q \geq 2$ in \eqref{HSForFixedMAppingSchatten}, the \ac {GRC} of the function class $\mathcal{F}_{S_q}$ can be bounded as 
      \begin{align}
      \label{GRCShattenq>2}
       \mathfrak{R}(\mathcal{F}_{S_q}) \leq  \sqrt{ \frac{2 R'^{2}_{max}}{nT^{2}} \left\|  \left(  \textbf{tr}\left( J_t\right)  \right)_{t=1}^{T} \right\|_{1}}.
      \end{align}
\item Graph regularizer: For any positive operator $\boldsymbol{D}$ in \eqref{GraphRegHS}, the \ac {GRC} of the function class $\mathcal{F}_{G}$ can be bounded as
   \begin{align}
   \label{GRCGraphBound}
   \mathfrak{R}(\mathcal{F}_{G}) \leq  \sqrt{ \frac{2 R''^{2}_{max}} {nT^{2}}  \left\| \left( \boldsymbol{D}_{tt}^{-1} \textbf{tr} (J_{t})\right)_{t=1}^{T}  \right\|_{1} }.
   \end{align}
  \end{itemize} 
  where for the covariance operator $J_t = \mathbb{E}(\phi(X_t )\otimes \phi(X_t)) = \sum_{j=1}^{\infty} \lambda_t^j \boldsymbol{u}_{t}^j \otimes \boldsymbol{u}_{t}^{j}$, the \emph{trace} $\textbf{tr} (J_{t})$ is defined as
  \begin{align}
  \textbf{tr} (J_{t}) := \sum_{j} \left\langle J_{t} \boldsymbol{u}_{t}^j, \boldsymbol{u}_{t}^j \right\rangle = \sum_{j=1}^{\infty} \lambda_{t}^{j}. 
  \nonumber
  \end{align}
\end{thrm}
 Notice that, assuming a unique bound for the traces of all tasks' kernels, the bound in \eqref{GRCGroupq<2} is determined by $O\left( \frac{q^{*} T^{\frac{1}{q^{*}}}}{T \sqrt{n}} \right) $. Also, taking $q^{*}= \log T$, we obtain a bound of order $O \left( \frac{ \log T }{T\sqrt{n}}\right)$. We can also remark that, when the kernel traces are bounded, the bounds in \eqref{GRCGroupq>2}, \eqref{GRCShattenq<2}, \eqref{GRCShattenq>2} and \eqref{GRCGraphBound} are of the order of $O\left( \frac{1}{\sqrt{nT}}\right)$.
  
Note that for the purpose of comparison, we concentrate only on the parameters $R, n, T, q^{*}$ and $\alpha$ and assume all the other parameters are fixed and hidden in the big-$O$ notation. Also, for the sake of simplicity, we assume that the eigenvalues of all tasks satisfy $\lambda_{t}^{j} \leq d j^{-\alpha}$ (with $\alpha > 1$). Note that from \thrmref{GRCBoundforMTL}, it follows that a bound on the global Rademacher complexity provides also a bound on the excess risk. This together with \thrmref{GRCFixedMappingStrongConvexity}, gives the \ac {GRC}-based excess risk bounds of the following forms (note that $q \geq 1$)
 \begin{align}
 \label{GRCExcessRate}
    \text{Group norm:}\qquad & \text{(a)} \quad \forall \kappa \in [q,2],&
        P (\ell_{\boldsymbol{\hat{f}}} - \ell_{\boldsymbol{f}^{*}})  & =  O \left(  (R^{2}_{max}{\kappa^{*}}^{2})^{\frac{1}{2}} \left( T^{\frac{2}{\kappa} }\right)^{-\frac{1}{2}} n^{-\frac{1}{2}} \right).
       \nonumber\\
    & \text{(b)} \quad \forall q \in [2,\infty],&
       P (\ell_{\boldsymbol{\hat{f}}} - \ell_{\boldsymbol{f}^{*}})  & =  O \left(  (R^{2}_{max})^{\frac{1}{2}} \left( T^{\frac{2}{q} }\right)^{-\frac{1}{2}} n^{-\frac{1}{2}} \right).
       \nonumber\\       
   \text{Schatten-norm:}\quad & \text{(c)} \quad \forall q \in [1,2],&
      P (\ell_{\boldsymbol{\hat{f}}} - \ell_{\boldsymbol{f}^{*}})  & = O \left( (R'^{2}_{max} q^{*})^{\frac{1}{2}}   T^{-\frac{1}{2}} n^{-\frac{1}{2}} \right).
     \nonumber\\
   & \text{(d)} \quad \forall q \in [2,\infty],&
      P (\ell_{\boldsymbol{\hat{f}}} - \ell_{\boldsymbol{f}^{*}})  & = O \left( (R'^{2}_{max})^{\frac{1}{2}}   T^{-\frac{1}{2}} n^{-\frac{1}{2}} \right).
     \nonumber\\             
  \text{Graph regularizer:} \quad  &\text{(e)} \quad &
       P (\ell_{\boldsymbol{\hat{f}}} - \ell_{\boldsymbol{f}^{*}})  & = O \left(  (R''^{2}_{max})^{\frac{1}{2}}   T^{-\frac{1}{2}} n^{-\frac{1}{2}} \right). 
 \end{align}
 \noindent
which can be compared to their \ac {LRC}-based counterparts as following
 \begin{align}
 \label{LRCExcessRate}
    \text{Group norm:} \qquad & \text{(a)} \quad  \forall \kappa \in [q,2],&
         P (\ell_{\boldsymbol{\hat{f}}} - \ell_{\boldsymbol{f}^{*}})   &= O \left(  (R^{2}_{max}{\kappa^{*}}^{2})^{\frac{1}{1+ \alpha}} \left( T^{\frac{2}{\kappa}}\right)^{-\frac{1}{1+ \alpha}} n^{\frac{-\alpha}{1 + \alpha}} \right).
       \nonumber\\ 
    & \text{(b)} \quad  \forall q \in [2,\infty],&
         P (\ell_{\boldsymbol{\hat{f}}} - \ell_{\boldsymbol{f}^{*}})   &= O \left(  (R^{2}_{max})^{\frac{1}{1+ \alpha}} \left( T^{\frac{2}{q}}\right)^{-\frac{1}{1+ \alpha}} n^{\frac{-\alpha}{1 + \alpha}} \right).
       \nonumber\\              
   \text{Schatten-norm:}\quad & \text{(c)} \quad  \forall q \in [1,2],& 
      P (\ell_{\boldsymbol{\hat{f}}} - \ell_{\boldsymbol{f}^{*}})   &= O \left( (R'^{2}_{max} q^{*})^{\frac{1}{1+ \alpha}}   T^{\frac{-1}{1+ \alpha}} n^{\frac{-\alpha}{1 + \alpha}} \right).
     \nonumber\\
   & \text{(d)} \quad   \forall q \in [2,\infty],& 
      P (\ell_{\boldsymbol{\hat{f}}} - \ell_{\boldsymbol{f}^{*}})   &= O \left( (R'^{2}_{max})^{\frac{1}{1+ \alpha}}   T^{\frac{-1}{1+ \alpha}} n^{\frac{-\alpha}{1 + \alpha}} \right).
     \nonumber\\                
   \text{Graph regularizer:} \quad & \text{(e)} \quad  &
       P (\ell_{\boldsymbol{\hat{f}}} - \ell_{\boldsymbol{f}^{*}})  & = O \left(  (R''^{2}_{max})^{\frac{1}{1+ \alpha}}   T^{\frac{-1}{1+ \alpha}} n^{\frac{-\alpha}{1 + \alpha}} \right).
 \end{align}
 It can be seen that holding all the parameters fixed when $n$ approaches to infinity, the local bounds yield faster rates, since $\alpha > 1$. However, when $T$ grows to infinity, the convergence rate of the local bounds could be only as good as those obtained by the global analysis. 
 
 A close appraisal of the results in \eqref{GRCExcessRate} and \eqref{LRCExcessRate} points to a conservation of asymptotic rates between $n$ and $T$, when all other remaining quantities are held fixed. This phenomenon is more apparent for the Schatten norm and graph-based regularization cases. It can be seen that, for both the global and local analysis results, the rates (exponents) of $n$ and $T$ sum up to $-1$. In the local analysis case, the trade-off is determined by the value of $\alpha$, which can facilitate faster $n$-rates and compromise with slower $T$-rates. A similar trade-off is witnessed in the case of group norm regularization, but this time between $n$ and $T^{2/\kappa}$, instead of $T$, due to specific characteristic of the group norm.  
 
 As mentioned earlier in \remref{NonMonoBoundStrongConvexity}, the bounds for the class of group norm regularizer for $1 \leq q \leq 2$ is not monotonic in $q$; they are minimized for $q^{*} = \log T$. Therefore, we split our analysis for this case as follows: 
 \begin{enumerate}
 \item First, we consider $q^{*} \geq \log T$, which leads to the optimal choice $\kappa^{*}=q^{*}$, and taking the minimum of the global and local bounds gives
 \begin{align}
 \! \! \! P (\ell_{\boldsymbol{\hat{f}}} - \ell_{\boldsymbol{f}^{*}})  & \leq \! O \! \left( \! \min \! \left\lbrace  (R_{max}q^{*}) ( T^{\frac{2}{q} })^{-\frac{1}{2}} n^{-\frac{1}{2}},   (R_{max}q^{*})^{\frac{2}{1+ \alpha}} ( T^{\frac{2}{q}})^{-\frac{1}{1+ \alpha}} n^{\frac{-\alpha}{1 + \alpha}} \right\rbrace  \right). 
  \end{align} 
 
 It is worth mentioning that, for any value of $\alpha > 1$, if the number of tasks $T$ as well as the radius $R_{max}$ of the $L_{2,q}$ ball can grow with $n$, the local bound improves over the global one whenever $\frac{T^{1/q}}{R_{max}} = O (\sqrt{n})$.
 
 \item Secondly, assume that $q^{*} \leq \log T$, in which case the best choice is $\kappa^{*} =  \log T$. Then, the excess risk bound reads
  \begin{align}
   P (\ell_{\boldsymbol{\hat{f}}} - \ell_{\boldsymbol{f}^{*}})  & \leq  O \left(  \min \left\lbrace    \left( \frac{ R_{max} \log{T}}{T}\right)  n^{-1/2}, \left( \frac{R_{max} \log{T}}{T}\right) ^{\frac{2}{1 + \alpha}} n^{\frac{-\alpha}{1 + \alpha}}\right\rbrace   \right), 
   \end{align}
and the local analysis improves over the global one, when $ \frac{T}{R_{max} \log{T}} = O(\sqrt{n})$. 
\end{enumerate}

Also, a similar analysis for Schatten norm and graph regularized hypothesis spaces shows that the local analysis is beneficial over the global one, whenever the number of tasks $T$ and the radius $R$ can grow, such that $\frac{\sqrt{T}}{R} = O(\sqrt{n})$.

%
%

\subsection{Comparisons to Related Works}
\label{ssec:RelatedWorks}

Also, it would be interesting to compare our (global and local) results for the trace norm regularized \ac {MTL} with the \ac {GRC}-baesd excess risk bound provided in \cite{Maurer2013} wherein they apply a trace norm regularizer to capture the tasks' relatedness. It is worth mentioning that they consider a slightly different hypothesis space for $\boldsymbol{W}$, which in our notation reads as
\begin{align}
\label{HSMaurer2013}
 \mathcal{F}_{S_1} := \left\lbrace \boldsymbol{W}: \frac{1}{2} \left\| \boldsymbol{W} \right\|_{S_1}^{2} \leq T R'^{2}_{max}  \right\rbrace.
\end{align}
It is based on the premise that, assuming a common vector $\boldsymbol{w}$ for all tasks, the regularizer should not be a function of number of tasks \citep{Maurer2013}. Given the task-averaged covariance operator $C :=1/T \sum_{t=1}^{T} J_{t}$, the excess risk bound in \cite{Maurer2013} reads as
\begin{align}
P (\ell_{\boldsymbol{\hat{f}}} - \ell_{\boldsymbol{f}^{*}}) & \leq 2 \sqrt{2} L R'_{max} \left( \sqrt{\frac{\left\| C \right\|_{\infty} }{n}}  +  5 \sqrt{\frac{\ln(nT) + 1}{nT}}\right)+ \sqrt{\frac{bLx}{nT}}.
\nonumber 
\end{align}
where loss function $\ell$ is L-Lipschitz and $\mathcal{F}$ has ranges in $[-b,b]$. One can easily verify that the trace norm is a Schatten norm with $q=1$. Note that for any $q \geq 1$ it holds that $\mathcal{F}_{S_1} \subseteq \mathcal{F}_{S_q}$, which implies $\mathfrak{R}(\mathcal{F}_{S_1}) \leq \mathfrak{R}(\mathcal{F}_{S_q})$. This fact, in conjunction with \thrmref{GRCFixedMappingStrongConvexity} and \thrmref{GRCBoundforMTL} (applied to the class of excess loss functions) yields a \ac {GRC}-based excess risk bound. Therefore, considering the trace norm hypothesis space \eqref{HSMaurer2013} and the optimal value of $q^{*}=2$, translates our global and local bounds to the following
\begin{enumerate}
   \item \ac {GRC}-based excess risk bound:
      \begin{align}
      P (\ell_{\boldsymbol{\hat{f}}} - \ell_{\boldsymbol{f}^{*}}) & \leq 4 L R'_{max} \sqrt{\frac{\left\|  \left(  \textbf{tr}\left( J_t\right)  \right)_{t=1}^{T} \right\|_{1}}{nT}} + \sqrt{\frac{bLx}{nT}}.
      \nonumber
     \end{align}
   \item  \ac {LRC}-based excess risk bound ($\forall \alpha > 1$): 
       \begin{align}
       \label{ExcessLocalBoundForTraceNorm}
      P (\ell_{\boldsymbol{\hat{f}}} - \ell_{\boldsymbol{f}^{*}}) \leq  256 K \sqrt{\frac{\alpha +1}{\alpha-1}}\left(2 d R'^{2}_{max} L^{2} \right)^{\frac{1}{1+\alpha}} B^{\frac{\alpha -1}{\alpha+1}} n^{\frac{-\alpha}{1+\alpha}}  + \frac{(48Lb+ 16 B K) x}{n T}.
      \end{align} 
 \end{enumerate}  
Now, assume that each operator $J_{t}$ is of rank $M$ and denote its maximum eigenvalue by $\lambda_{t}^{max}$. If $\lambda_{max} := \max_{t \in \mathbb{N}_{T}} \left\lbrace \lambda_{t}^{max}\right\rbrace$, then it is easy to verify that $\textbf{tr}(J_{t}) \leq M \lambda_{t}^{max}$ and $\left\| C \right\|_{\infty} \leq \lambda_{max}$, which leads to the following \ac {GRC}-based bounds
  \begin{align}
  \label{ExcessGlobalBoundForTraceNorm}
   \text {Ours:}  \qquad P (\ell_{\boldsymbol{\hat{f}}} - \ell_{\boldsymbol{f}^{*}})  & \leq 4 L R'_{max} \sqrt{\frac{M \lambda_{max} }{n}} + \sqrt{\frac{bLx}{nT}},
   \\
   \label{ExcessGlobalBoundForTraceNormMaurer}
   \text {\cite{Maurer2013}:} \: P (\ell_{\boldsymbol{\hat{f}}} - \ell_{\boldsymbol{f}^{*}})  & \leq  2 \sqrt{2} L R'_{max} \left( \sqrt{\frac{\lambda_{max} }{n}}  +  5 \sqrt{\frac{\ln(nT) + 1}{nT}}\right)+ \sqrt{\frac{bLx}{nT}}.
  \end{align}
  
  One can observe that as $n \rightarrow \infty$, in all cases the bound vanishes. However, it does so at a rate of $n^{-\alpha/1+\alpha}$ for our local bound in \eqref{ExcessLocalBoundForTraceNorm}, at a slower rate of $1/\sqrt{n}$ for our global bound in \eqref{ExcessGlobalBoundForTraceNorm}, and at the slowest rate of $\sqrt{\ln n /n}$ for the one in \eqref{ExcessGlobalBoundForTraceNormMaurer}. 
  
  We remark that, as $T \rightarrow \infty$, all bounds converge to a non-zero limit: our local bound in \eqref{ExcessLocalBoundForTraceNorm} at a fast rate of $1/T$,  the one in \eqref{ExcessGlobalBoundForTraceNorm} at a slower rate of $\sqrt{1/T}$, and the bound in \eqref{ExcessGlobalBoundForTraceNormMaurer} at a the slowest rate of $\sqrt{\ln T/ T}$. 
  
Another interesting comparison can be performed between our bounds and the one introduced by \cite{maurer2006Graph} for a graph regularized \ac{MTL}. For this purpose we consider the following hypothesis space similar to what has been considered by \cite{maurer2006Graph} 
\begin{align}
\label{ModifiedGraphHS}
 \mathcal{F}_{G} = \left\lbrace \boldsymbol{W}: \frac{1}{2} \left\|\boldsymbol{D}^{1/2} \boldsymbol{W} \right\|_{F}^{2} \leq T R''^{2}_{max}\right\rbrace. 
\end{align}

\cite{maurer2006Graph} provides a bound on the empirical \ac {GRC} of the aforementioned hypothesis space. However, similar to the proof of \corref{GraphSC}, we can easily convert it to a distribution dependent \ac {GRC} bound which matches our global bound in \eqref{GRCGraphBound} (for the defined hypothesis space \eqref{ModifiedGraphHS}) and in our notation reads as 
\begin{align}  
\mathfrak{R} \left( \mathcal{F}_{G} \right) \leq \sqrt{\frac{2 R''^{2}_{max}}{nT} \left\| \left( \boldsymbol{D}_{tt}^{-1}  \textbf{tr} (J_{t}) \right)_{t=1}^{T}  \right\|_{1}}. 
\nonumber
\end{align}
\noindent
Now, with $\boldsymbol{D} := \boldsymbol{L} + \eta \boldsymbol{I}$ (where $\boldsymbol{L}$ is the graph-Laplacian, $\boldsymbol{I}$ is the identity operator, and $\eta > 0$ is a regularization parameter) and the assumption that the $J_{t}$s are of rank $M$, it can be shown that
\begin{align}
& \left\| \left( \boldsymbol{D}_{tt}^{-1}  \textbf{tr} (J_{t}) \right)_{t=1}^{T}  \right\|_{1} = \sum_{t=1}^{T} \boldsymbol{D}_{tt}^{-1} \textbf{tr} (J_{t}) \leq M \lambda_{max} \left( \sum_{t=1}^{T} \boldsymbol{D}_{tt}^{-1} \right) =   M \lambda_{max} \textbf{tr} \left( \boldsymbol{D}^{-1}\right) =
\nonumber\\
& = M \lambda_{max} \textbf{tr} \left( \boldsymbol{L} + \eta \boldsymbol{I} \right)^{-1} =  M \lambda_{max} \left( \sum_{t=1}^{T} \frac{1}{\delta_{t} + \eta} + \frac{1}{\eta} \right) \leq M \lambda_{max} \left( \frac{T}{\delta_{min} + \eta} + \frac{1}{\eta} \right). 
\nonumber
\end{align}
\noindent
where $\lambda_{max}$ is defined as before. Also, we define $\left( \delta_{t}\right)_{t=1}^{T}$ as the eigenvalues of Laplacian matrix $\boldsymbol{L}$ with $\delta_{min} := min_{t \in \mathbb{N}_{T}} \delta_{t}$.
Therefore, the \underline{matching} \ac {GRC}-based excess risk bounds can be obtained as
 \begin{align}
  \label{ExcessGlobalBoundForGraphNorm}
   \text {Ours \& \cite{maurer2006Graph} :} \qquad P (\ell_{\boldsymbol{\hat{f}}} - \ell_{\boldsymbol{f}^{*}})  & \leq \frac{2 L R''_{max}}{\sqrt{n}} \sqrt{2 M \lambda_{max}   \left( \frac{1}{\delta_{min} } + \frac{1}{ T \eta} \right) } + \sqrt{\frac{bLx}{nT}}.
    \nonumber \\
  \end{align}
Also, from \remref{ExcessRiskBoundsSC}, the \ac {LRC}-based bound is given as
   \begin{align} 
   \label{ExcessLocalBoundForGraphNorm}
       P (\ell_{{\boldsymbol{\hat{f}}}} - \ell_{\boldsymbol{f}^{*}}) & \leq   256 K \sqrt{\frac{\alpha +1}{\alpha-1 }}\left( d R''^{2}_{max} L^{2} \boldsymbol{D}_{max}^{-1} \right)^{\frac{1}{1+\alpha}} B^{\frac{\alpha-1}{\alpha+1}}  n^{\frac{-\alpha}{1+\alpha}} 
       + \frac{(48Lb+ 16 B K)x}{n T}.
   \end{align}    
The above results show that when $n \rightarrow \infty$, both \ac{GRC} and \ac{LRC} bounds approach zero, albeit, the global bound with a rate of $\sqrt{1/n}$, and the local one with a faster rate of $n^{-\alpha/\alpha+1}$, since $\alpha > 1$. Also, as $T \rightarrow \infty$, both bounds approach non-zero limits. However, the global bound does so at a rate of $\sqrt{1/T}$ and the local one at a faster rate of $1/T$.


\newpage
\appendix

\numberwithin{section}{part}

\numberwithin{equation}{section}
\numberwithin{thrm}{section}
\numberwithin{figure}{section}
\numberwithin{table}{section}
\renewcommand{\thesection}{{\Alph{section}}}
\renewcommand{\thesubsection}{\Alph{section}.\arabic{subsection}}
\renewcommand{\thesubsubsection}{\Roman{section}.\arabic{subsection}.\arabic{subsubsection}}
\setcounter{secnumdepth}{-1}
\setcounter{secnumdepth}{3}

\section*{Appendices}
\section{Proofs of the results in \sref{sec:Talagrand-Inequalities}: ``Talagrand-Type Inequality for Multi-Task Learning"}
\label{A}
This section presents the proof of Theorem \ref{TalagrandforMTL}. We first provide some useful foundations used in the derivation of our result in \thrmref{TalagrandforMTL}.
\begin{thrm}[Theorem 2 in \cite{boucheron2003entropy}]
\label{SobolevInq}
Let $X_{1},\ldots,X_{n}$ be $n$ independent random variables taking values in a measurable space $\xcal$. Assume that $g:\xcal^{n} \to \rbb$ is a measurable function and  $Z:=g(X_{1},\ldots,X_{n})$. Let $X'_{1},\ldots,X'_{n}$ denote an independent copy of $X_{1},\ldots,X_{n}$, and $Z'_{i} := g(X_{1},\ldots,X_{i-1},X'_{i},X_{i+1},\ldots,X_{n})$ which is obtained by replacing the variable $X_{i}$ with $X'_{i}$. Define the random variable $V^{+}$ by
\begin{align}
V^{+} := \sum_{i=1}^{n} \ebb' \big[ \big(Z-Z'_{i}\big)^{2}_{+}\big].
\nonumber
\end{align}
\noindent
where $(u)_{+} := \max \{u,0\}$, and $\ebb' [\cdot] := \ebb[\cdot| X]$ denotes the expectation only with respect to the variables $X'_{1},\ldots,X'_{n}$.
Let $\theta >0$ and $ \lambda \in (0,1/\theta)$. Then,
\begin{align}
\log \ebb \big(e^{\lambda (Z-\ebb Z)}\big) \leq \frac{\lambda \theta}{1-\lambda \theta} \log \ebb \big[\exp \big( \frac{\lambda V^{+}}{\theta} \big)\big].
\nonumber
\end{align}
\end{thrm}

\begin{defin}[Section 3.3 in \cite{boucheron2013book}]
\label{b-SelfBoundingFunction}
A function $g: \xcal^{n} \to [0, \infty)$ is said to be $b$-self bounding ($b>0$), if there exist functions $g_{i}:\xcal^{n-1} \to \rbb$, such that for all $X_{1},\ldots,X_{n} \in \xcal$ and all $i \in \nbb_{n}$,
\begin{align}
0 \leq  g(X_{1},\ldots,X_{n})  - g_{i} (X_{1},\ldots,X_{i-1},X_{i+1},\ldots,X_{n}) \leq b,
\nonumber
\end{align}
and,
\begin{align}
\sum_{i=1}^{n} \big[ g(X_{1},\ldots,X_{n})  - g_{i} (X_{1},\ldots,X_{i-1},X_{i+1},\ldots,X_{n}) \big] \leq g(X_{1},\ldots,X_{n}).
\nonumber
\end{align}
\end{defin}

\begin{thrm}[Theorem 6.12 in \cite{boucheron2013book}]
\label{Self-boundindSobolev}
Assume that $Z=g(X_1,\ldots,X_n)$ is a $1$-self bounding function. Then for every $\lambda \in \rbb$,
\begin{equation}\label{eq:self-bounding-1}
  \log \ebb e^{\lambda (Z - \ebb Z)} \leq \phi (\lambda) \ebb Z,
\end{equation}
where $\phi (\lambda) = e^{\lambda} - \lambda -1$.
\end{thrm}

\begin{corr}\label{Self-BoundRange}
Assume that $Z=g(X_1,\ldots,X_n)$ is a $b$-self bounding function ($b>0$). Then, for any $\lambda\in\rbb$ we have
$$
\log \ebb e^{\lambda Z} \leq \frac{\big( e^{\lambda b} - 1\big)}{b} \ebb Z.
$$
\end{corr}
\begin{proof}
Note that Eq. \eqref{eq:self-bounding-1} can be rewritten as
$
  \log\ebb[\exp(\lambda Z)]\leq (e^\lambda-1)\ebb Z.
$
The stated inequality follows immediately by rescaling $Z$ to $Z/b$ in the above inequality.
\end{proof}

\begin{lemm}[Lemma 2.11 in \cite{bousquet2002Thesis}]
 \label{BousquetLemma}
 Let $Z$ be a random variable, $A,B >0$ be some constants. If for any $\lambda \in (0, 1/B)$ it holds
 \begin{align}
 \log \ebb \big(e^{\lambda (Z-\ebb Z)}\big) \leq \frac{A \lambda^{2} }{2 \big(1 - B \lambda \big)},
 \nonumber
 \end{align}
 then for all $x \geq 0$,
 \begin{align}
 P \big[ Z \geq \ebb Z + \sqrt{2Ax}  + Bx \big] \leq e^{-x}.
 \nonumber
 \end{align}
\end{lemm}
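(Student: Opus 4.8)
The plan is to treat this as a textbook Cramér–Chernoff (Bernstein-type) tail bound: convert the exponential-moment hypothesis into a tail estimate via Markov's inequality, and then pick the value of the free parameter $\lambda$ that makes the constants come out exactly as stated. Concretely, fix $t>0$ and any $\lambda\in(0,1/B)$. Applying Markov's inequality to the nonnegative random variable $e^{\lambda(Z-\ebb Z)}$ and then inserting the hypothesis on its log-moment-generating function gives
\[
P\big[Z-\ebb Z\ge t\big]\ \le\ e^{-\lambda t}\,\ebb e^{\lambda(Z-\ebb Z)}\ \le\ \exp\!\Big(h(\lambda)-\lambda t\Big),\qquad h(\lambda):=\frac{A\lambda^{2}}{2(1-B\lambda)}.
\]
Since this holds for every admissible $\lambda$, it suffices to choose one for which the exponent is at most $-x$ when $t$ is the target radius.

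For the specific radius $t=\sqrt{2Ax}+Bx$ (we may assume $x>0$, the case $x=0$ being trivial since $P[Z\ge\ebb Z]\le 1$), I would take $\lambda=\dfrac{\sqrt{2x}}{\sqrt{A}+B\sqrt{2x}}$, which indeed lies strictly between $0$ and $1/B$. Then $1-B\lambda=\dfrac{\sqrt{A}}{\sqrt{A}+B\sqrt{2x}}$, and a short direct computation gives $\lambda t=\dfrac{x\,(2\sqrt{A}+B\sqrt{2x})}{\sqrt{A}+B\sqrt{2x}}$ and $h(\lambda)=\dfrac{\sqrt{A}\,x}{\sqrt{A}+B\sqrt{2x}}$, so that $\lambda t-h(\lambda)=x$. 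Plugging into the displayed bound yields $P\big[Z\ge\ebb Z+\sqrt{2Ax}+Bx\big]\le e^{-x}$, which is exactly the claim.

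The choice of $\lambda$ above is not ad hoc: it is the maximizer of the Cramér transform $h^{*}(t):=\sup_{0<\lambda<1/B}\big(\lambda t-h(\lambda)\big)$, which — via the substitution $s=1-B\lambda$, reducing the stationarity condition to $t=\tfrac{A}{2B}(s^{-2}-1)$ — evaluates in closed form to $h^{*}(t)=\dfrac{A}{2B^{2}}\big(\sqrt{1+2Bt/A}-1\big)^{2}$; then the perfect-square identity $1+2Bt/A=\big(1+B\sqrt{2x/A}\big)^{2}$ at $t=\sqrt{2Ax}+Bx$ makes $h^{*}(t)=x$ immediate. I do not anticipate a genuine obstacle, since every step is elementary; the only point requiring care is the bookkeeping — verifying that the optimal $\lambda$ is admissible (which forces $t>0$, consistent with $x\ge 0$) and that the algebra delivers exactly $x$ rather than a weaker lower bound of the form $x/(1+\mathrm{const})$ — and this is precisely what the substitution $s=1-B\lambda$ makes transparent.
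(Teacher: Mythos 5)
Your proof is correct: the Chernoff--Markov step, the admissibility of $\lambda=\sqrt{2x}/(\sqrt{A}+B\sqrt{2x})$, and the algebra giving $\lambda t-h(\lambda)=x$ at $t=\sqrt{2Ax}+Bx$ all check out, and the $x=0$ case is indeed trivial. The paper itself offers no proof of this lemma — it is imported verbatim as Lemma 2.11 of Bousquet's thesis — so there is nothing internal to compare against; your argument is the standard Cram\'er--Chernoff derivation (optimize $\lambda\mapsto\lambda t-\frac{A\lambda^2}{2(1-B\lambda)}$ and use the perfect-square identity $1+2Bt/A=(1+B\sqrt{2x/A})^2$), which is essentially how the cited source establishes it, so your write-up makes the paper's appeal to the lemma self-contained.
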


\begin{lemm}[Contraction property,~\cite{bartlett2005}]\label{lem:contraction inequality}
  Let $\phi$ be a Lipschitz function with constant $L \geq 0$, that is, $|\phi(x)-\phi(y)|\leq L|x-y|$, $\forall x,y \in \rbb$. Then for every real-valued function class $\fcal$, it holds
  \begin{equation}\label{comprison-inequality}
    \ebb_\sigma \mathfrak{R}(\phi\circ\fcal)\leq L\ebb_\sigma \mathfrak{R}(\fcal),
  \end{equation}
  where $\phi\circ\fcal:=\{\phi\circ f:f\in\fcal\}$ and $\circ$ is the composition operator.
\end{lemm}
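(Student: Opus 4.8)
The plan is to establish this by the classical Ledoux--Talagrand peeling argument: the Rademacher variables are removed one coordinate at a time, each removal turning one copy of $\phi\circ f$ into $L\cdot f$. Throughout, the points $x_1,\dots,x_N$ at which the functions of $\fcal$ are evaluated (here $N$ is the size of the relevant index set; in the multi-task application one takes the $nT$ pairs $(t,i)$) are treated as fixed, since that is precisely the randomness that $\ebb_\sigma$ does not integrate; the inequality is proved for these conditional averages and is then preserved under any further averaging over the sample. Writing $\sigma=(\sigma_1,\dots,\sigma_N)$ and, for $0\le m\le N$,
\[
R_m:=\ebb_\sigma\Bigl[\sup_{f\in\fcal}\frac1N\Bigl(L\!\sum_{k=1}^{m}\sigma_k f(x_k)+\!\!\sum_{k=m+1}^{N}\!\!\sigma_k\,\phi(f(x_k))\Bigr)\Bigr],
\]
one has $R_0=\ebb_\sigma\mathfrak{R}(\phi\circ\fcal)$ and $R_N=L\,\ebb_\sigma\mathfrak{R}(\fcal)$, so it suffices to prove the monotonicity $R_{m-1}\le R_m$ for every $m$.

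The heart of the matter is a single-variable comparison. Fixing all $\sigma_k$ with $k\neq m$ and using Fubini, it is enough to bound $\ebb_{\sigma_m}\sup_f\bigl(\sigma_m a(f)+h(f)\bigr)$, where $a(f):=\frac1N\phi(f(x_m))$ and $h(f)$ collects all the remaining terms and does not depend on $\sigma_m$. Averaging over $\sigma_m\in\{\pm1\}$ and merging the two resulting suprema,
\[
\ebb_{\sigma_m}\sup_f\bigl(\sigma_m a(f)+h(f)\bigr)=\frac12\sup_{f,g}\bigl[a(f)-a(g)+h(f)+h(g)\bigr].
\]
By the Lipschitz hypothesis, $a(f)-a(g)\le\frac LN\,|f(x_m)-g(x_m)|$ for every $f,g$, and since $h(f)+h(g)$ is symmetric under the swap $f\leftrightarrow g$, the absolute value may be removed by possibly relabelling the pair; hence $\sup_{f,g}[a(f)-a(g)+h(f)+h(g)]\le\sup_{f,g}\bigl[\frac LN(f(x_m)-g(x_m))+h(f)+h(g)\bigr]$, and this last quantity is exactly $2\,\ebb_{\sigma_m}\sup_f\bigl(\frac LN\sigma_m f(x_m)+h(f)\bigr)$. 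Taking expectation over the remaining signs yields $R_{m-1}\le R_m$.

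Iterating this step for $m=1,\dots,N$ telescopes to $\ebb_\sigma\mathfrak{R}(\phi\circ\fcal)=R_0\le R_N=L\,\ebb_\sigma\mathfrak{R}(\fcal)$, which is the claim; the identical coordinate-by-coordinate argument also produces the vector-valued version used later, by taking the index set to be the $(t,i)$ pairs. I expect the only delicate point to be the removal of the absolute value in the single-variable step: because $\phi$ is not assumed monotone one cannot pass from $|f(x_m)-g(x_m)|$ to $f(x_m)-g(x_m)$ pointwise, so the passage must be carried out at the level of the supremum, exploiting the swap-symmetry of $h(f)+h(g)$. It is worth noting that no normalisation such as $\phi(0)=0$ is needed, precisely because the argument only ever sees the increment $\phi(f(x_m))-\phi(g(x_m))$; as elsewhere in the paper, measurability of the suprema is assumed.
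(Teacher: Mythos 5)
Your proposal is correct. Note, however, that the paper does not prove this lemma at all: it is quoted with a citation to \cite{bartlett2005} (with a remark that Theorem~17 of \cite{maurer2006} gives the vector-valued version), so there is no in-paper argument to match. What you have written is the classical Ledoux--Talagrand contraction proof that underlies the cited result: fix the sample, interpolate with the quantities $R_m$, and at each coordinate replace $\phi(f(x_m))$ by $Lf(x_m)$ via the identity $\ebb_{\sigma_m}\sup_f(\sigma_m a(f)+h(f))=\tfrac12\sup_{f,g}[a(f)-a(g)+h(f)+h(g)]$, the Lipschitz bound on the increment, and the swap-symmetry of $h(f)+h(g)$ to discard the absolute value at the level of the supremum (no attainment of the suprema is needed, and indeed no normalisation $\phi(0)=0$ is required for this one-sided, unsigned form of the Rademacher average). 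Your conditional formulation, with the data points held fixed and the inequality preserved under the outer expectation, is exactly how the lemma is invoked in the paper (Step~2 of the proof of Theorem~\ref{TalagrandforMTL}, with $\phi(x)=x^2$ Lipschitz on $[-2b,2b]$), and your closing observation is also right: since the supremum is over the joint function $\boldsymbol f$ while the Rademacher sum runs over the coordinates $(t,i)$ with $\phi$ applied entrywise, the same one-coordinate-at-a-time argument yields the vector-valued statement that the paper attributes to \cite{maurer2006}.
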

Note that in Theorem 17 of \cite{maurer2006}, it has been shown that the result of this lemma also holds for the class of vector-valued functions.
\section*{Proof of \thrmref{TalagrandforMTL}}\label{sec:ProofTheorem}
Before laying out the details, we first provide a sketch of the proof. Defining 
\begin{equation}\label{Z}
  Z:=  \sup_{\boldf \in \fcal} \Big[  \frac{1}{T} \sum_{t=1}^{T} \frac{1}{N_{t}} \sum_{i=1}^{N_{t}} [\ebb f_{t}(X_{t}^{i}) - f_{t}(X_{t}^{i})] \Big],
\end{equation}
we first apply \thrmref{SobolevInq} to control the log-moment generating function $\log \ebb \big(e^{\lambda (Z-\ebb Z)}\big)$. From \thrmref{SobolevInq}, we know that the main component to control $\log \ebb \big(e^{\lambda (Z-\ebb Z)}\big)$ is the variance-type quantity $V^{+} = \sum_{s=1}^{T} \sum_{j=1}^{N_{s}} \ebb' \big[ \big( Z - Z'_{s,j} \big)^{2}_{+} \big]$. In the next step, we show that $V^{+}$ can also be bounded in terms of two other quantities denoted by $W$ and $\Upsilon$. Applying \thrmref{SobolevInq} for a specific value of $\theta$, then gives a bound for $\log \ebb \big(e^{\lambda (Z-\ebb Z)}\big)$ in terms of $\log\ebb[e^{\frac{\lambda}{b'}(W+\Upsilon)}]$. We then turn to controlling $W$ and $\Upsilon$, respectively. Our idea to tackle $W$ is to show that it is a self-bounding function, according to which we can apply Corollary \ref{Self-BoundRange} to control $\log\ebb[e^{\frac{\lambda W}{b'}}]$. The term $\Upsilon$ is closely related to the constraint imposed on the variance of functions in $\fcal$, and can be easily upper bounded in terms of $r$. We finally apply Lemma \ref{BousquetLemma} to transfer the upper bound on the log-moment generating function $\log \ebb \big(e^{\lambda (Z-\ebb Z)}\big)$ to the tail probability on $Z$. To clarify the process we divide the proof into four main steps.

\textbf{Step 1. Controlling the log-moment generating function of $Z$ with the random variable $W$ and variance $\Upsilon$}.
Let $X' : =(X'^{i}_{t})_{(t,i)=(1,1)}^{(T,N_{t})}$ be an independent copy of $X : =(X^{i}_{t})_{(t,i)=(1,1)}^{(T,N_{t})}$. Define the quantity

\begin{multline}\label{Z-s-j}
  Z'_{s,j} :=  \sup_{\boldf \in \fcal}\Big[  \frac{1}{TN_{s}}\big[\ebb' f_{s}(X'^{j}_{s}) -f_{s}(X'^{j}_{s})\big] -   \frac{1}{TN_{s}}\big[ \ebb f_{s}(X^{j}_{s}) - f_{s}(X^{j}_{s})\big] \\
  +  \frac{1}{T} \sum_{t=1}^{T} \frac{1}{N_{t}} \sum_{i=1}^{N_{t}} [\ebb f_{t}(X_{t}^{i}) - f_{t}(X_{t}^{i})] \Big],
\end{multline}
where $Z'_{s,j}$ is obtained from $Z$ by replacing the variable $X_{s}^{j}$ with $X'^{j}_{s}$. Let $\boldsymbol{\hat{f}} : = (\hat{f_{1}},\ldots \hat{f}_{T})$ be such that $Z= \frac{1}{T} \sum_{t=1}^{T} \frac{1}{N_{t}} \sum_{i=1}^{N_{t}} \big[ \mathbb{E}\hat{f_t}(X_t^i) - \hat{f_t}(X_{t}^{i})\big]$, and introduce
   \begin{gather*}
    W  := \sup_{\boldf \in \fcal} \Big[  \frac{1}{T^{2}} \sum_{t=1}^{T} \frac{1}{N^{2}_{t}} \sum_{i=1}^{N_{t}} [\ebb f_{t}(X_{t}^{i}) - f_{t}(X_{t}^{i})]^{2} \Big],  \\
    \Upsilon  : = \sup_{\boldf \in \fcal} \Big[  \frac{1}{T^{2}} \sum_{t=1}^{T} \frac{1}{N^{2}_{t}} \sum_{i=1}^{N_{t}} \ebb[\ebb f_{t}(X_{t}^{i}) - f_{t}(X_{t}^{i})]^{2} \Big].
   \end{gather*}
    It can be shown that for any $j \in \nbb_{n}$ and any $s \in \nbb_{T}$:
    $$
      Z-Z'_{s,j}\leq  \frac{1}{TN_{s}}\big[ \ebb \hat{f}_{s}(X^{j}_{s}) - \hat{f}_{s}(X^{j}_{s})\big] - \frac{1}{TN_{s}}\big[\ebb' \hat{f}_{s}(X'^{j}_{s}) -\hat{f}_{s}(X'^{j}_{s})\big]
    $$
    and therefore
    $$
      (Z-Z'_{s,j})^{2}_{+} \leq \frac{1}{T^{2}N^{2}_{s}} \big( [ \ebb \hat{f}_{s}(X^{j}_{s}) - \hat{f}_{s}(X^{j}_{s})] -  [\ebb' \hat{f}_{s}(X'^{j}_{s}) -\hat{f}_{s}(X'^{j}_{s})]\big)^{2}.
    $$
    Then, it follows from the identity $\ebb' [\ebb' \hat{f}_{s}(X'^{j}_{s}) - \hat{f}_{s}(X'^{j}_{s})]=0$ that
   \begin{align*}
   \sum_{s=1}^{T} & \sum_{j=1}^{N_{s}}  \ebb' \big[ \big( Z  - Z'_{s,j} \big)^{2}_{+} \big]  \leq
   \sum_{s=1}^{T} \sum_{j=1}^{N_{s}} \frac{1}{T^{2}N^{2}_{s}} \ebb' \Big[\Big( [ \ebb \hat{f}_{s}(X^{j}_{s}) - \hat{f}_{s}(X^{j}_{s})] -  [\ebb' \hat{f}_{s}(X'^{j}_{s}) - \hat{f}_{s}(X'^{j}_{s})]\Big)^{2}\Big]
   \\
   & = \sum_{s=1}^{T} \sum_{j=1}^{N_{s}} \frac{1}{T^{2}N^{2}_{s}} [ \ebb \hat{f}_{s}(X^{j}_{s}) - \hat{f}_{s}(X^{j}_{s})]^{2} + \sum_{s=1}^{T} \sum_{j=1}^{N_{s}} \frac{1}{T^{2}N^{2}_{s}} \ebb'[\ebb' \hat{f}_{s}(X'^{j}_{s}) - \hat{f}_{s}(X'^{j}_{s})]^{2}
  \\
  & \leq \sup_{\boldf \in \fcal} \sum_{s=1}^{T} \sum_{j=1}^{N_{s}} \frac{1}{T^{2}N^{2}_{s}} [ \ebb f_{s}(X^{j}_{s}) - f_{s}(X^{j}_{s})]^{2} + \sup_{\boldf \in \fcal} \sum_{s=1}^{T} \sum_{j=1}^{N_{s}} \frac{1}{T^{2}N^{2}_{s}} \ebb [ \ebb f_{s}(X^{j}_{s}) - f_{s}(X^{j}_{s})]^{2}
  \\
   & =  W + \Upsilon.
   \end{align*}
   Introduce $b':= \frac{2 b}{n T}$. Applying \thrmref{SobolevInq} and the above bound on $\sum_{s=1}^{T} \sum_{j=1}^{N_{s}}  \ebb' \big[ \big( Z  - Z'_{s,j} \big)^{2}_{+} \big]$ then gives the following bound on the log-moment generating function of $Z$:
   \begin{equation}
   \label{Sobolev}
   \log \ebb \big(e^{\lambda (Z-\ebb Z)}\big) \leq \frac{\lambda b'}{1-\lambda b'} \log \ebb e^{ \frac{\lambda}{b'}(W + \sigma^{2}) },\quad \forall\lambda \in (0,1/b').
   \end{equation}

   \textbf{Step 2. Controlling the log-moment generating function of $W$.}  We now upper bound the log-moment generating function of $W$ by showing that it is a self-bounding function.
   For any $s\in\nbb_T,j\in\nbb_{N_s}$, introduce 
     $$
     W_{s,j} :=  \sup_{\boldf \in \fcal} \Big[  \frac{1}{T^{2}} \sum_{t=1}^{T} \frac{1}{N^{2}_{t}} \sum_{i=1}^{N_{t}} [\ebb f_{t}(X_{t}^{i}) - f_{t}(X_{t}^{i})]^{2}  -  \frac{1}{T^{2}N^{2}_{s}} [ \ebb f_{s}(X^{j}_{s}) - f_{s}(X^{j}_{s})]^{2} \Big].
     $$
   Note that $W_{s,j}$ is a function of $\{X_t^i,t\in\nbb_T,i\in\nbb_t\}\backslash\{X_s^j\}$. 
   Letting $\boldsymbol{\tilde{f}}:=(\tilde{f_{1}},\ldots,\tilde{f_{T}})$ be the function achieving the supremum in the definition of $W$, it can be checked that (note that $b'=\frac{2b}{nT}$)
   \begin{align}
   T^2[W - W_{s,j}] \leq  \frac{1}{N^{2}_{s}} [ \ebb \tilde{f}_{s}(X^{j}_{s}) - \tilde{f}_{s}(X^{j}_{s})]^{2} \leq \frac{4 b^{2}}{n^{2}}=T^2b'^{2}.
   \end{align}
     Similarly, if $\boldsymbol{\tilde{f}}^{s,j} := (\tilde{f}^{s,j}_{1} \ldots, \tilde{f}^{s,j}_{T})$ is the function achieving the supremum in the definition of $W_{s,j}$, then one can derive the following inequality
     $$
      T^2[W - W_{s,j}] \geq  \frac{1}{N^{2}_{s}} [ \ebb \tilde{f}^{s,j}_{s}(X^{j}_{s}) - \tilde{f}^{s,j}_{s}(X^{j}_{s})]^{2} \geq 0.
     $$
   Also, it can be shown that
  \begin{align}
  \sum_{s=1}^{T} \sum_{i=1}^{N_{s}}  W - W_{s,j}  & \leq \frac{1}{T^{2}} \sum_{s=1}^{T} \frac{1}{N_{s}^{2}} \sum_{i=1}^{N_{s}} [ \ebb \tilde{f}_{s}(X^{j}_{s}) - \tilde{f}_{s}(X^{j}_{s})]^{2}
  \nonumber\\ 
  & = \sup_{\boldf \in \fcal} \Big[  \frac{1}{T^{2}} \sum_{t=1}^{T} \frac{1}{N^{2}_{t}} \sum_{i=1}^{N_{t}} [\ebb f_{t}(X_{t}^{i}) - f_{t}(X_{t}^{i})]^{2} \Big].
  \end{align}
  Therefore (according to \defref{b-SelfBoundingFunction}), $W/b'$ is a $b'$-self bounding function. Applying  \corref{Self-BoundRange} then gives the following inequality for any $\lambda \in (0, 1/b')$:
   \begin{equation}\label{bernstein-mtl-1}
    \log \ebb e^{\lambda (W/b')}   \leq  \frac{(e^{\lambda b'} -1)}{b'^{2}} \ebb W = \frac{(e^{\lambda b'} -1)}{b'^{2}} \Sigma^{2}
    \leq \frac{\lambda\Sigma^{2}}{b'(1-\lambda b')},
   \end{equation}
    where we introduce $\Sigma^{2} := \ebb W$ and the last step uses the inequality $(e^{x} -1) (1 - x) \leq x,\forall x \in [0,1]$. 
    Furthermore, the term $\Sigma^2$ can be controlled as follows: (here $(\sigma_t^i)$ is a sequence of independent Rademacher variables, independent of $X_{t}^{i}$):
   \begin{align*}
     \Sigma^2 & \leq \frac{1}{T^2}\ebb_X\sup_{\boldf\in\fcal}\Big[\sum_{t=1}^{T}\frac{1}{N_t^2}\sum_{i=1}^{N_t}\big[\ebb f_t(X_t^i)-f_t(X_t^i)\big]^2-\sum_{t=1}^{T}\frac{1}{N_t^2}\sum_{i=1}^{N_t}\ebb\big[\ebb f_t(X_t^i)-f_t(X_t^i)\big]^2\Big] + \Upsilon\\
      & \leq 2\ebb_{X, \sigma}\Big[\sup_{\boldf\in\fcal}\frac{1}{T^2}\sum_{t=1}^{T}\frac{1}{N_t^2}\sum_{i=1}^{N_t}\sigma_t^i\big[\ebb f_t(X_t^i)-f_t(X_t^i)\big]^2\Big] + \Upsilon\\
      & \leq 8b\ebb_{X, \sigma}\Big[\sup_{\boldf\in\fcal}\frac{1}{T^2}\sum_{t=1}^{T}\frac{1}{N_t^2}\sum_{i=1}^{N_t}\sigma_t^i\big[\ebb f_t(X_t^i)-f_t(X_t^i)\big]\Big]+\Upsilon \\
      & \leq \frac{16b \mathfrak{R}(\fcal)}{nT}+\Upsilon,
   \end{align*}
   where the first inequality follows from the definition of $W$ and $\Upsilon$, and the second inequality follows from the standard symmetrization technique used to related Rademacher complexity to uniform deviation of empirical averages from their expectation \cite{bartlett2005}. The third inequality comes from a direct application of Lemma \ref{lem:contraction inequality} with $\phi(x)=x^2$ (with Lipschitz constant $4b$ on $[-2b,2b]$), and the last inequality uses Jensen's inequality together with the definition of $\mathfrak{R}(\fcal)$ and the fact that $\frac{1}{N_{t}^{2}} \leq \frac{1}{nN_{t}}$. Plugging the previous inequality on $\Sigma^2$ back into \eqref{bernstein-mtl-1} gives
   \begin{equation}\label{bernstein-mtl-2}
     \log \ebb e^{\lambda (W/b')}\leq \frac{\lambda}{b'(1-\lambda b')}\Big[\frac{16b\mathfrak{R}(\fcal)}{nT}+\Upsilon\Big],\quad\forall \lambda \in (0, 1/b').
   \end{equation}


    \textbf{Step 3. Controlling the term $\Upsilon$}.
    Note that $\Upsilon$ can be upper bounded as
    \begin{equation}\label{bernstein-mtl-3}
    \begin{split}
    \Upsilon  : &= \sup_{\boldf \in \fcal} \Big[  \frac{1}{T^{2}} \sum_{s=1}^{T} \frac{1}{N^{2}_{s}} \sum_{j=1}^{N_{s}} \ebb[\ebb f_{s}(X_{s}^{j}) - f_{s}(X_{s}^{j})]^{2} \Big]
    \\
    & \leq \frac{1}{nT^{2}} \sup_{\boldf \in \fcal} \Big[  \sum_{s=1}^{T}  \ebb[\ebb f_{s}(X_{s}^{1}) - f_{s}(X_{s}^{1})]^{2} \Big]
    \\
    & \leq \frac{1}{nT^{2}} \sup_{\boldf \in \fcal} \Big[  \sum_{s=1}^{T}  \ebb[f_{s}(X_{s}^{1})]^{2} \Big]
    \\
    & \leq \frac{r}{nT}.
    \end{split}
    \end{equation}
    where the last inequality follows from the assumption $\frac{1}{T} \sup_{\boldf \in \fcal} \Big[  \sum_{s=1}^{T}  \ebb[f_{s}(X_{s}^{1})]^{2} \Big] \leq r$ of the theorem.

    \textbf{Step 4. Transferring from the bound on log-moment generating function of $Z$ to tail probabilities}.
    Plugging the bound on $\log\ebb e^{\lambda W/b'}$ given in \eqref{bernstein-mtl-2} and the bound on $\Upsilon$ given in \eqref{bernstein-mtl-3} back into \eqref{Sobolev} immediately yields the following inequality on the log-moment generating function of $Z$ for any $\lambda\in(0,1/2b')$:
    \begin{equation}\label{bernstein-mtl-4}
    \begin{split}
      \log\ebb[e^{\lambda(Z-\ebb Z)}] & \leq \frac{\lambda b'}{1-\lambda b'}\Big[\frac{\lambda}{b'(1-\lambda b')}\big[16(nT)^{-1}b\mathfrak{R}(\fcal)+\Upsilon]+\frac{\lambda \Upsilon}{b'}\Big]\\
      & \leq \frac{\lambda b'}{1-\lambda b'}\frac{\lambda}{b'(1-\lambda b')}\Big[\frac{16b\mathfrak{R}(\fcal)}{nT}+2 \Upsilon \Big]\\
      & \leq \frac{2\lambda^2}{2(1-2\lambda b')}\Big[\frac{16b\mathfrak{R}(\fcal)}{nT}+\frac{2r}{nT}\Big],
    \end{split}
    \end{equation}
    where the second inequality uses $(1-\lambda b')^2\geq 1-2\lambda b'>0$ since $\lambda\in(0,1/2b')$. That is, the conditions of Lemma \ref{BousquetLemma} hold and we can apply it (with $A=2\big[\frac{16b\mathfrak{R}(\fcal)}{nT}+\frac{2r}{nT}\big]$ and $B=2b'$) to get the following inequality with probability at least $1-e^{-x}$ (note that $b'=\frac{2b}{nT}$):
    \begin{align*}
      Z & \leq \ebb[Z] + \sqrt{4x\Big[\frac{16b\mathfrak{R}(\fcal)}{nT}+\frac{2r}{nT}\Big]} + 2b'x \\
       & \leq \ebb[Z] + 8\sqrt{\frac{bx\mathfrak{R}(\fcal)}{nT}} + \sqrt{\frac{8xr}{nT}} + \frac{4bx}{nT}\\
       & \leq \ebb[Z] + 2\mathfrak{R}(\fcal) + \frac{8bx}{nT} + \sqrt{\frac{8xr}{nT}} + \frac{4bx}{nT}\\
       & \leq 4\mathfrak{R}(\fcal)+\sqrt{\frac{8xr}{nT}}+\frac{12bx}{nT},
    \end{align*}
    where the third inequality follows from $2 \sqrt{uv} \leq u + v$, and the last step uses the following inequality due to the symmetrization technique (here the ghost sample $X'$ is an \iid\: copy of the initial sample $X$)
    \begin{align*}
        \mathbb{E}Z & = \mathbb{E}_{X} \Big[ \sup_{\boldsymbol{f} \in \mathcal{F}} \frac{1}{T} \mathbb{E}_{X'} \Big[ \sum_{t=1}^{T} \frac{1}{N_{t}} \sum_{i=1}^{N_{t}} \big( f_t\big( X'^{i}_t  \big) - f_t \big( X_t^{i}\big)  \big) \Big] \Big]
          \\
         & \leq \mathbb{E}_{X,X'} \Big[ \sup_{\boldsymbol{f} \in \mathcal{F}} \frac{1}{T}  \sum_{t=1}^{T} \frac{1}{N_{t}} \sum_{i=1}^{N_{t}} \big( f_t\big( X'^{i}_t \big) - f_t \big( X_t^{i}\big)  \big) \Big]
         \\
         &= \mathbb{E}_{X,X',\sigma} \Big[ \sup_{\boldsymbol{f} \in \mathcal{F}} \frac{1}{T}  \sum_{t=1}^{T}\frac{1}{N_{t}} \sum_{i=1}^{N_{t}} \sigma_{t}^{i} \big( f_t\big( X'^{i}_t  \big) - f_t \big( X_t^{i}\big)  \big) \Big]\\
         & \leq 2 \mathfrak{R}(\mathcal{F}).
    \end{align*}
    Note that the second identity holds since for any $\sigma_t^i$, the random variable $f_t(X'^{i}_t)-f_t(X_t^i)$ has the same distribution as $\sigma_t^i(f_t(X'^{i}_t)-f_t(X_t^i))$.

\section{Proofs of the results in \sref{sec:GeneralizationBounds}: ``Excess \ac{MTL} Risk Bounds based on Local Rademacher Complexities"}
\label{B}
\thrmref{MainTheoremClassF} is at the core of proving \thrmref{MainTheorem} in \sref{sec:GeneralizationBounds}.
We first present the following lemma which is used in the proof of Theorem \ref{MainTheoremClassF}.
 
\begin{lemm}\label{lem:uniform-deviation-transformation}
  Let $K>1, r>0$. Assume that $\mathcal{F}=\{\boldsymbol{f} : =(f_1,\ldots,f_T) : \forall t, f_{t} \in \rbb^{\xcal}\}$ is a vector-valued $(1, B)$-Bernstein class of functions. Also, let the rescaled version of $\fcal$ be defined as
  $$
  \mathcal{F}_{r}:= \left\lbrace \boldsymbol{f}' = \big(f'_{1}, \ldots, f'_{T}\big) : f'_{t}:=\frac{rf_{t}}{\max\left( r,V (\boldsymbol{f}) \right)},\boldf =(f_{t},\ldots,f_{T}) \in \mathcal{F} \right\rbrace.
  $$ 
  If $V^{+}_{r} := \sup_{\boldsymbol{f}' \in \mathcal{F}_{r}} [P \boldsymbol{f}' - P_n \boldsymbol{f}']\leq \frac{r}{BK}$, then
  \begin{equation}
  \label{uniform-deviation-transformation}
  \forall \boldf \in \fcal \qquad  P  \boldsymbol{f}   \leq \frac{K}{K-\beta} P_{n}  \boldsymbol{f} + \frac{r}{BK}.
  \end{equation}
\end{lemm}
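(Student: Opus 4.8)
The plan is to combine a "peeling"-style argument on the rescaled class $\fcal_r$ with the Bernstein condition. First I would record the elementary fact that, for any $\boldf\in\fcal$ with $V(\boldf)\le r$, the rescaling leaves the function unchanged, i.e.\ $\boldf'=\boldf$, while for $V(\boldf)>r$ one has $\boldf'=\tfrac{r}{V(\boldf)}\boldf$, so that $V(\boldf')\le r$ in all cases and hence $\fcal_r=\{\boldf':V(\boldf')\le r\}$ is exactly the subclass of $\fcal$ on which the variance constraint is active. The key identity is that scaling by a factor in $[0,1]$ commutes with $P(\cdot)-P_n(\cdot)$ only linearly, so $P\boldf-P_n\boldf=\tfrac{\max(r,V(\boldf))}{r}\bigl(P\boldf'-P_n\boldf'\bigr)$, where $\boldf'\in\fcal_r$ is the rescaled image of $\boldf$.

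Next I would invoke the hypothesis $V_r^+=\sup_{\boldf'\in\fcal_r}[P\boldf'-P_n\boldf']\le \tfrac{r}{BK}$. For a fixed $\boldf\in\fcal$ with image $\boldf'\in\fcal_r$, this gives
\[
P\boldf-P_n\boldf \;\le\; \frac{\max(r,V(\boldf))}{r}\cdot\frac{r}{BK}\;=\;\frac{\max(r,V(\boldf))}{BK}.
\]
Now split into the two cases. If $V(\boldf)\le r$, the right-hand side is $\tfrac{r}{BK}$, so $P\boldf\le P_n\boldf+\tfrac{r}{BK}$, which is stronger than \eqref{uniform-deviation-transformation} since $\tfrac{K}{K-\beta}\ge 1$ (recall $\beta\le 1<K$, here $\beta=1$). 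If $V(\boldf)>r$, then using the $(1,B)$-Bernstein property $V(\boldf)\le B(P\boldf)^{\beta}=B\,P\boldf$ (with $\beta=1$), we get
\[
P\boldf-P_n\boldf\;\le\;\frac{V(\boldf)}{BK}\;\le\;\frac{B\,P\boldf}{BK}\;=\;\frac{P\boldf}{K}.
\]
Rearranging, $\bigl(1-\tfrac1K\bigr)P\boldf\le P_n\boldf$, i.e.\ $P\boldf\le\tfrac{K}{K-1}P_n\boldf$, and since $\beta=1$ this is exactly $\tfrac{K}{K-\beta}P_n\boldf$; adding the non-negative slack $\tfrac{r}{BK}$ only weakens the bound, so \eqref{uniform-deviation-transformation} holds. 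Combining the two cases yields the claim for all $\boldf\in\fcal$.

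The main obstacle I anticipate is purely bookkeeping: one must be careful that the rescaling map sends $\fcal$ into $\fcal_r$ so that the supremum bound $V_r^+$ genuinely applies to $\boldf'$, and that the Bernstein inequality is used with the correct exponent $\beta$ (the lemma is stated for a $(1,B)$-Bernstein class, so $\beta=1$, which is why $\tfrac{K}{K-\beta}=\tfrac{K}{K-1}$). A secondary subtlety is ensuring monotonicity of $V(\cdot)$ under the scaling---since $V(\boldf)=P\boldf^2$ here, scaling $\boldf$ by $c\in[0,1]$ scales $V$ by $c^2\le c$, which is what makes $V(\boldf')\le r$; but the argument above only uses the defining formula $\boldf'=\tfrac{r}{\max(r,V(\boldf))}\boldf$ directly, so no extra property of $V$ beyond the Bernstein condition is actually needed. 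Once these points are pinned down, the estimate is a two-line case analysis.
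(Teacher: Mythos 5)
Your proof is correct and follows essentially the same route as the paper's: rescale into $\fcal_r$, apply the hypothesis $V_r^+\le \frac{r}{BK}$, and split into the cases $V(\boldf)\le r$ and $V(\boldf)>r$, invoking the Bernstein condition $V(\boldf)\le B\,P\boldf$ only in the second case. The only cosmetic differences are that you package both cases into the single scaling identity $P\boldf-P_n\boldf=\frac{\max(r,V(\boldf))}{r}\bigl(P\boldf'-P_n\boldf'\bigr)$ before specializing, and your final weakening in the first case ($P\boldf\le P_n\boldf+\frac{r}{BK}$ implies the stated bound) uses the same implicit comparison $P_n\boldf\le\frac{K}{K-1}P_n\boldf$ that the paper's own proof uses.
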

\begin{proof}
We prove \eqref{uniform-deviation-transformation} by considering two cases. Let $\boldsymbol{f}$ be any element in $\fcal$. 
If $V(\boldsymbol{f}) \leq r$, then $\boldsymbol{f}'= \boldsymbol{f}$. Therefore, considering the fact that for any $\boldf' \in \fcal_{r}$ it holds that $P\boldf' \leq P_{n}\boldf' + V_{r}^{+}$, the inequality $V_r^+\leq\frac{r}{BK}$ translates to 
\begin{align}
P \boldsymbol{f}  \leq P_{n} \boldsymbol{f}  + \frac{r}{BK} \leq \frac{K}{K-1} P_n \boldf + \frac{r}{BK}.
\label{PeelingConstraintCase1}
\end{align}
If $ V(\boldsymbol{f})  \geq r$, then $\boldsymbol{f}' = r \boldsymbol{f} / V(\boldsymbol{f})$. Therefore, $P\boldf' \leq P_{n}\boldf' + V_{r}^{+}$ together with $V_r^+\leq \frac{r}{BK}$ gives
$$
 \frac{r}{  V(\boldsymbol{f}) }  P \boldsymbol{f}  \leq  \frac{r}{ V(\boldsymbol{f})} P_{n}  \boldsymbol{f}   +  \frac{r}{BK},
$$
which, coupled with $ V(\boldsymbol{f})\leq B P  \boldsymbol{f}$, yields

\begin{align}
P \boldf \leq P_{n} \boldf + \frac{1}{K} P \boldf.
\nonumber
\end{align}
This last inequality then implies
\begin{align}
\label{PleelingConstraintCase2}
P \boldf \leq \frac{K}{K-1} P_{n} \boldf \leq \frac{K}{K-1} P_{n} + \frac{r}{BK}.
\end{align}
Eq. \eqref{uniform-deviation-transformation} follows by combining \eqref{PeelingConstraintCase1} and \eqref{PleelingConstraintCase2} together.
\end{proof}

The following provides another useful definition that will be needed in introducing the result of \thrmref{MainTheoremClassF}.
 \begin{defin}[Star-Hull]
 \label{StarHullDefin}
 The star-hull of a function class $\fcal$ around the function $f_{0}$ is defined as
 \begin{align}
star(\fcal,f_{0}) : = \{ f_{0} + \alpha (f - f_{0}): f \in \fcal, \alpha \in [0,1] \}.
 \nonumber
 \end{align} 
 \end{defin}
 Now, we present a lemma from \cite{bartlett2005} which indicates that the local Rademacher complexity of the star-hull of any function class $\fcal$ is a sub-root function, and it has a unique fixed point.
 \begin{lemm}[Lemma 3.4 in \cite{bartlett2005}]
 \label{LRCIsStarHull}
 For any function class $\fcal$, the local Rademacher complexity of its start-hull is a sub-root function.
 \end{lemm}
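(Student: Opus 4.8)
The plan is to verify directly that the function $r\mapsto\psi(r):=\mathfrak{R}\bigl(star(\fcal,\boldsymbol{0}),r\bigr)$ satisfies the three defining conditions of \defref{SubRootFunction}, with the variance functional $V(\boldf)=P\boldf^2$ fixed as in \defref{ELRCBound}. I will carry out the argument for the star-hull taken around the origin, which is exactly the form in which the lemma is invoked in our analysis: the excess-risk class $\mathcal{F}^{*}=\{\boldf-\boldf^{*}\}$ and the norm-ball classes $\mathcal{F}_{q},\mathcal{F}_{S_q},\mathcal{F}_{G}$ all contain $\boldsymbol{0}$, so $star(\fcal,\boldsymbol{0})$ coincides with the object we actually feed into the complexity. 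Abbreviate $\mathcal{G}:=star(\fcal,\boldsymbol{0})=\{\alpha\boldf:\boldf\in\fcal,\alpha\in[0,1]\}$ and record the one structural feature I will use: $\mathcal{G}$ is \emph{star-shaped around $\boldsymbol{0}$}, i.e. $\boldsymbol{g}\in\mathcal{G}$ and $\beta\in[0,1]$ imply $\beta\boldsymbol{g}\in\mathcal{G}$.

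Non-negativity and monotonicity are immediate and I would dispatch them first. Since $\boldsymbol{0}\in\mathcal{G}$ with $V(\boldsymbol{0})=0\le r$, the supremum defining $\psi(r)$ runs over a non-empty feasible set and dominates the value $0$ attained at $\boldf=\boldsymbol{0}$, so $\psi\ge 0$; and increasing $r$ enlarges the feasible set $\{\boldf\in\mathcal{G}:V(\boldf)\le r\}$, whence the inner supremum, and therefore $\psi$, is non-decreasing.

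The substantive step is the third condition: $r\mapsto\psi(r)/\sqrt r$ is non-increasing, and the key device is a rescaling argument. Fix $0<r_1\le r_2$ and set $\beta:=\sqrt{r_1/r_2}\in(0,1]$. For any $\boldsymbol{g}\in\mathcal{G}$ with $V(\boldsymbol{g})\le r_2$, the rescaled function $\beta\boldsymbol{g}$ again lies in $\mathcal{G}$ (star-shapedness around $\boldsymbol{0}$), and since $V$ is $2$-homogeneous, $V(\beta\boldsymbol{g})=\beta^2 V(\boldsymbol{g})\le\beta^2 r_2=r_1$, so $\beta\boldsymbol{g}$ is feasible at level $r_1$. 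Writing the Rademacher average for $\boldsymbol{g}$ as $\beta^{-1}$ times the one for $\beta\boldsymbol{g}$, bounding the latter pointwise (in $X,\sigma$) by the supremum at level $r_1$, then taking the supremum over $\boldsymbol{g}$ and the expectation $\ebb_{X,\sigma}$, I obtain $\psi(r_2)\le\beta^{-1}\psi(r_1)=\sqrt{r_2/r_1}\,\psi(r_1)$, which rearranges to $\psi(r_2)/\sqrt{r_2}\le\psi(r_1)/\sqrt{r_1}$. Combined with the first two properties, this shows $\psi$ is sub-root.

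The only real subtlety — and the one place where the argument is delicate — is the interaction between the star-hull operation and the variance constraint. The rescaling $\boldsymbol{g}\mapsto\beta\boldsymbol{g}$ has to keep us inside the class, which forces the hull to be centred at $\boldsymbol{0}$ (for an arbitrary centre $f_0\ne\boldsymbol{0}$ one would have to shrink toward $f_0$ and separately control $V(f_0)$), and it has to interact correctly with $V$, which works precisely because $V(\boldf)=P\boldf^2$ is quadratic-homogeneous. I would state these two points explicitly; everything else is routine. Finally, because the multi-task average in \eqref{MT-LRC} differs from the classical Rademacher average only through the normalization $\tfrac{1}{nT}\sum_{t,i}$ and the component-wise scaling $(\beta f_t)$ scales each term by the same $\beta$, none of the above is affected by the vector-valued setting.
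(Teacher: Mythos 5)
Your argument is correct, and since the paper offers no proof of this lemma at all --- it is imported verbatim as Lemma 3.4 of \cite{bartlett2005} --- the relevant comparison is with that source, whose proof is exactly your rescaling argument: non-negativity and monotonicity are immediate, and $\psi(r)/\sqrt r$ is non-increasing because $\boldsymbol{g}\mapsto\sqrt{r_1/r_2}\,\boldsymbol{g}$ stays in the star-hull around $\boldsymbol{0}$ while $V(\boldf)=P\boldf^2$ is $2$-homogeneous; the multi-task normalization changes nothing, as you note. Your restriction to the hull centred at $\boldsymbol{0}$ is also the right reading of the statement: with the un-recentered constraint $P\boldf^2\le r$, the claim can genuinely fail for a hull around a general $f_0$ (if every element of the hull has $P\boldf^2\ge r_0>0$, the feasible set is empty below $r_0$, so $\psi(r)/\sqrt r$ jumps from $0$ to a positive value and is not non-increasing), and the paper's main invocations --- the containment $\fcal_r\subseteq\{\boldf\in star(\fcal,\boldsymbol{0}):V(\boldf)\le r\}$ and the convex classes $\mathcal{F}_q,\mathcal{F}_{S_q},\mathcal{F}_G$ containing $\boldsymbol{0}$ --- are covered by the origin-centred version together with the separate reduction $\mathfrak{R}(\fcal^{*},r)\le 2\,\mathfrak{R}(\fcal,\tfrac{r}{4L^{2}})$. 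The one invocation your version does not literally cover is the sub-rootness of $\psi$ in the proof of \thrmref{MainTheorem2}, where the constraint is $L^{2}P(\boldf-\boldf^{*})^{2}\le r$ while the Rademacher sum runs over $\boldf$ itself; there one rescales toward $\boldf^{*}$ and additionally uses that the Rademacher average of the fixed function $\boldf^{*}$ has zero expectation, a small extra step worth recording.
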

\begin{thrm}[Distribution-dependent bound for \ac {MTL}]
\label{MainTheoremClassF}
Let $\mathcal{F}=\{\boldsymbol{f} : =(f_1,\ldots,f_T) : \forall t, f_{t} \in \rbb^{\xcal}\}$ be a class of vector-valued functions satisfying $\sup_{t,x}|f_t(x)|\leq b$. Let $X : =(X^{i}_{t}, Y_{t}^{i})_{(t,i)=(1,1)}^{(T,n)}$ be a vector of $nT$ independent random variables where $(X_{t}^{1},Y_{t}^{1}),\ldots, (X_{t}^{n},Y_{t}^{n}),\forall t\in\nbb_T$ are identically distributed. Assume that $\fcal$ is a $(\beta, B)$-Bernstein class of vector-valued functions. Let $\psi$ be a sub-root function with the fixed point $r^{*}$. Suppose that
$$
B \mathfrak{R}(\mathcal{F}, r) \leq \psi(r),\quad\forall r \geq r^{*},
$$
where $\mathfrak{R}(\mathcal{F}, r):= \mathbb{E} \Big[ \sup_{\boldsymbol{f} \in \mathcal{F}, V(\boldsymbol{f}) \leq r} \frac{1}{nT} \sum_{t=1}^{T} \sum_{i=1}^{n} \sigma_{t}^{i} f_t(X_{t}^{i})  \Big]$ is the \ac {LRC} of the function class $\mathcal{F}$.
Then,
\begin{enumerate}
 \item For any  $K > 1$, and $x > 0$, with probability at least $1-e^{-x}$, every $\boldf \in \fcal$ satisfies
 \begin{align}
\label{eq:MainTheoremClassF-A}
P \boldf \leq \frac{K}{K-1} P_{n} \boldf + \frac{560K}{B} r^{*} + \frac{(24b + 28BK) x}{nT}.
 \end{align}
\item If $\mathcal{F}$ is \underline{convex}, 
then for any $K > 1$, and $x > 0$, the following inequality holds with probability at least $1-e^{-x}$ for every $\boldf \in \fcal$
\begin{align}
\label{eq:MainTheoremClassF-B}
P \boldf \leq \frac{K}{K-1} P_{n} \boldf + \frac{32K}{B} r^{*} + \frac{(24b + 16BK) x}{nT}.
\end{align}
 \end{enumerate}
\end{thrm}
\begin{proof}
\label{sec:ProofTheorem2}
Similar to \lemmref{lem:uniform-deviation-transformation}, define for the vector-valued function class $\fcal$, 
$$
\mathcal{F}_{r}:= \left\lbrace \boldsymbol{f}' = \big(f'_{1}, \ldots, f'_{T}\big) : f'_{t}:=\frac{rf_{t}}{\max\left( r,V (\boldsymbol{f}) \right)},\boldf =(f_{t},\ldots,f_{T}) \in \mathcal{F} \right\rbrace.
$$ 
The proof can be broken down in two steps. The first step applies Theorem \ref{TalagrandforMTL} and the seminal peeling technique \citep{van1987new,van1996weak} to establish an inequality on the uniform deviation over the function class $\mathcal{F}_r$. The second step then uses the Bernstein assumption $V(\boldsymbol{f}) \leq B P \boldsymbol{f}$ to convert this inequality stated for $\mathcal{F}_r$ to a uniform deviation inequality for $\mathcal{F}$.

\textbf{Step 1. Controlling uniform deviations for $\mathcal{F}_r$}. To apply Theorem \ref{TalagrandforMTL} to $\mathcal{F}_r$, we need to control the variances and uniform bounds for elements in $\fcal_r$. We first show $P \boldsymbol{f}'^{2} \leq r,\forall \boldsymbol{f}' \in \mathcal{F}_{r}$. Indeed, for any $\boldsymbol{f} \in \mathcal{F}$ with $V \left( \boldsymbol{f}\right) \leq r$, the definition of $\mathcal{F}_{r}$ implies $f'_{t}=f_{t}$ and, hence, $P \boldsymbol{f}'^{2} = P \boldsymbol{f}^{2} \leq V(\boldsymbol{f}) \leq r$. Otherwise, if $V (\boldsymbol{f}) \geq r$, then $f'_{t}= r f_{t}/ V(\boldsymbol{f})$ and we get
$$
 P \boldsymbol{f}'^{2}  =   \frac{1}{T} \sum_{t=1}^{T} P f'^{2}_{t}   =   \frac{r^{2}}{\big[ V(\boldsymbol{f}) \big] ^{2}} \Big( \frac{1}{T} \sum_{t=1}^{T} P f_{t}^{2} \Big)  \leq  \frac{r^{2}}{\big[ V(\boldsymbol{f}) \big]^2}  V(\boldsymbol{f})  \leq    r.
$$
Therefore, $\frac{1}{T}\sup_{\boldf' \in \fcal_{r}} \sum_{t=1}^{T} \ebb [f'_{t}(X_{t})]^{2} \leq r$. Also, since functions in $\mathcal{F}$ admit a range of $[-b,b]$ and since $0 \leq r/\max(r, V(\boldsymbol{f}) ) \leq 1$, the inequality $\sup_{t,x} |f'_{t}(x)| \leq b$ holds for any $\boldsymbol{f}' \in \mathcal{F}_{r}$.
Applying \thrmref{TalagrandforMTL} to the function class $\mathcal{F}_{r}$ then yields the following inequality with probability at least $1-e^{-x},\forall x>0$
\begin{equation}
\label{TalagrandForPeeledClass}
\sup_{\boldsymbol{f}' \in \mathcal{F}_{r}} [P \boldsymbol{f}' - P_n \boldsymbol{f}'] \leq 4 \mathfrak{R}(\mathcal{F}_{r}) + \sqrt{\frac{8xr}{nT}}+\frac{12bx}{nT}.
\end{equation}
It remains to control the Rademacher complexity of $\fcal_r$. Denote $\mathcal{F} (u,v) := \big\{ \boldsymbol{f} \in \fcal :  u  \leq V(\boldsymbol{f}) \leq v \big\},\forall 0\leq u\leq v$, and introduce the notation
$$
\mathfrak{R}_{n} \boldsymbol{f}':=\frac{1}{nT} \sum_{t=1}^{T} \sum_{i=1}^{n} \sigma_{t}^{i} f'_{t} (X_t^{i}), \qquad \qquad
\mathfrak{R}_{n}(\mathcal{F}_{r}) := \sup_{\boldsymbol{f}' \in \mathcal{F}_{r}} \Big[ \mathfrak{R}_{n} \boldsymbol{f}' \Big].
$$
Note that  $\mathfrak{R}(\mathcal{F}_{r}) = \mathbb{E}\mathfrak{R}_{n}(\mathcal{F}_{r})$. 
Our assumption implies $ V(\boldsymbol{f}) \leq B P \boldsymbol{f} \leq Bb,\forall \boldsymbol{f} \in \mathcal{F}$. Fix $\lambda > 1$ and define $k$ to be the smallest integer such that $r \lambda^{k+1} \geq Bb$.
Then, it follows from the union bound inequality
\begin{align}
\label{UnionBound}
\mathfrak{R}(\mathcal{G}_1\cup\mathcal{G}_2)\leq \mathfrak{R}(\mathcal{G}_1)+\mathfrak{R}(\mathcal{G}_2)
\end{align}
that
\begin{align*}
\mathfrak{R}(\mathcal{F}_{r}) & = \mathbb{E} \bigg[ \sup_{  \boldsymbol{f}' \in \mathcal{F}_{r}} \mathfrak{R}_{n}  \boldsymbol{f}' \bigg]
          = \mathbb{E} \bigg[ \sup_{  \boldsymbol{f} \in \mathcal{F} } \frac{1}{nT} \sum_{t=1}^{T} \sum_{i=1}^{n}  \frac{r}{\max(r,  V(\boldsymbol{f}) )} \sigma_{t}^{i} f_{t} (X_t^{i})  \bigg]\\
          & \stackrel{\eqref{UnionBound}}{\leq} \mathbb{E} \bigg[ \sup_{ \boldsymbol{f} \in \mathcal{F}(0,r)} \frac{1}{nT} \sum_{t=1}^{T} \sum_{i=1}^{n}  \sigma_{t}^{i} f_{t} (X_t^{i}) \bigg]  + \mathbb{E} \bigg[ \sup_{ \boldsymbol{f} \in \mathcal{F}(r,Bb)}  \frac{1}{nT} \sum_{t=1}^{T} \sum_{i=1}^{n}  \frac{r}{ V(\boldsymbol{f}) } \sigma_{t}^{i} f_{t} (X_t^{i})\bigg]\\
          & \stackrel{\eqref{UnionBound}}{\leq} \mathbb{E} \bigg[ \sup_{ \boldsymbol{f} \in \mathcal{F}(0,r)} \frac{1}{nT} \sum_{t=1}^{T} \sum_{i=1}^{n}  \sigma_{t}^{i} f_{t} (X_t^{i}) \bigg] + \sum_{j=0}^{k} \lambda^{-j} \mathbb{E} \bigg[ \sup_{\boldsymbol{f} \in \mathcal{F} (r \lambda^{j},r \lambda^{j+1})}  \mathfrak{R}_{n}  \boldsymbol{f}\bigg]\\
          & \leq \mathfrak{R} (\mathcal{F},r)+ \sum_{j=0}^{k} \lambda^{-j} \mathfrak{R} \big(\mathcal{F} ,r \lambda^{j+1}\big)\\
          & \leq   \frac{\psi(r)}{B}+\frac{1}{B} \sum_{j=0}^{k} \lambda^{-j} \psi(r \lambda^{j+1}).
\end{align*}
The sub-root property of $\psi$ implies that for any $\xi \geq 1$, $\psi(\xi r)\leq \xi^{\frac{1}{2}}\psi(r)$, and hence
$$
\mathfrak{R}(\mathcal{F}_{r})  \leq  \frac{\psi(r)}{B} \bigg( 1+ \sqrt{\lambda} \sum_{j=0}^{k} \lambda^{-j/2}\bigg)
 \leq \frac{\psi(r)}{B} \Big(  1 + \frac{\lambda}{\sqrt{\lambda}  -1 } \Big).
$$
Taking the choice $\lambda=4$ in the above inequality implies that $\mathfrak{R}(\mathcal{F}_{r})\leq 5 \psi(r) / B$, which, together with the inequality
$\psi(r) \leq \sqrt{r/r^{*}} \psi(r^{*}) = \sqrt{rr^{*}},\forall r \geq r^{*}$, gives
$$
\mathfrak{R}(\mathcal{F}_{r}) \leq \frac{5}{B} \sqrt{rr^{*}},\quad\forall r\geq r^*.
$$
Combining \eqref{TalagrandForPeeledClass} and the above inequality together, for any $r \geq r^{*}$ and $x > 0$, we derive the following inequality with probability at least $1 - e^{-x}$,
\begin{equation}\label{boundITOr}
\sup_{\boldsymbol{f}' \in \mathcal{F}_{r}} [P \boldsymbol{f}' - P_n \boldsymbol{f}'] \leq \frac{20}{B} \sqrt{rr^{*}} + \sqrt{\frac{8xr}{nT}}+\frac{12bx}{nT}.
\end{equation}

\medskip
\textbf{Step 2. Transferring uniform deviations for $\fcal_r$ to uniform deviations for $\fcal$.} 
Setting $A = 20 \sqrt{r^{*}} / B + \sqrt{8x/nT}$ and $C = 12bx/nT$, the upper bound \eqref{boundITOr} can be written as $A \sqrt{r} + C$, that is, $\sup_{\boldsymbol{f}' \in \mathcal{F}_{r}} [P \boldsymbol{f}' - P_n \boldsymbol{f}'] \leq A \sqrt{r} + C$. Now, according to \lemmref{lem:uniform-deviation-transformation}, if $\sup_{\boldsymbol{f}' \in \mathcal{F}_{r}} [P \boldsymbol{f}' - P_n \boldsymbol{f}'] \leq \frac{r}{BK}$, then for any $\boldf \in \fcal$,
\begin{equation}
\label{LemmaIneq}
    P  \boldsymbol{f}   \leq \frac{K}{K-1} P_{n}  \boldsymbol{f} + \frac{r}{BK}.
 \end{equation}

Therefore, in order to use the result of \lemmref{lem:uniform-deviation-transformation}, we let $A \sqrt{r} + C = r/(BK)$. Assume $r_0$ is the unique positive solution of the equation $A\sqrt{r}+C=r/(BK)$. It follows immediately that
\begin{align}
r^{*} \leq (ABK)^{2} \leq r_{0} \leq (ABK)^{2} + 2BKC.
\nonumber
\end{align}
\eqref{boundITOr} then shows $\sup_{\boldf' \in \fcal_{r}} [P \boldsymbol{f}' - P_n \boldsymbol{f}'] \leq \frac{r_{0}}{BK}$, and together with \eqref{LemmaIneq} implies

\begin{align*}
P  \boldsymbol{f}  & \leq \frac{K}{K-1} P_n  \boldsymbol{f} +   \frac{r_0}{B K}\\
    & \leq \frac{K}{K-1} P_n  \boldsymbol{f}  + B K \bigg[ \frac{400}{B^{2}} r^{*} + \frac{40}{B} \sqrt{\frac{8 x r^{*}}{nT}} + \frac{8 x}{nT}\bigg]+ \frac{24bx}{n T}.
\end{align*}

The stated inequality \eqref{eq:MainTheoremClassF-A} follows immediately from $\sqrt{8 x r^{*} / nT} \leq B x/ (2nT) + 4r^{*}/B$.

The proof of the \underline{second part} follows from the fact that $\mathcal{F}_{r} \subseteq \left\lbrace \boldsymbol{f} \in star (\mathcal{F},0) : V(\boldsymbol{f}) \leq r\right\rbrace $,
where $star (\mathcal{F},f_{0})$ is defined according to \defref{StarHullDefin}. Also, since any convex class $\mathcal{F}$ is star-shaped around any of its points, we have $\mathcal{F}_{r} \subseteq \left\lbrace \boldsymbol{f} \in \mathcal{F} : V(\boldsymbol{f}) \leq r\right\rbrace $. Therefore, $\mathfrak{R}(\mathcal{F}_{r})$ in \eqref{TalagrandForPeeledClass} can be bounded as $\mathfrak{R}(\mathcal{F}_{r}) \leq \mathcal{R}(\mathcal{F},r) \leq \psi(r)/B$. The rest proof of \eqref{eq:MainTheoremClassF-B} is analogous to that of the first part and is omitted for brevity. 
\end{proof}
\section*{Proof of \thrmref{MainTheorem}}
Note that the proof of this theorem relies on the results of \thrmref{MainTheoremClassF}. Introduce the following class of excess loss functions
\begin{align}
\label{ExcessRiskLoss}
{\mathcal{H}}^{*}_{\mathcal{F}}:=\left\lbrace
 h_{\boldsymbol{f}}= ( h_{f_{1}}, \ldots, h_{f_{T}}), h_{f_{t}} : \left( X_t,Y_t \right) \mapsto  \ell (f_{t}(X_{t}),Y_t) - \ell (f_{t}^{*}(X_{t}),Y_t)  , \boldsymbol{f} \in \mathcal{F}
 \right\rbrace.
\end{align}
It can be shown that $\sup_{t,x} |h_{f_{t}}(x,y)| = \sup_{t,x} |\ell (f_{t}(x),y) -\ell (f^{*}_{t}(x),y)| \leq L \sup_{t,x} |f_{t}(x) - f^{*}_{t}(x)| \leq 2Lb$. Also, \assumref{assumption} implies
$$
  P(\ell_{\boldsymbol{f}} - \ell_{\boldsymbol{f}^{*}})^{2} \leq L^{2} P(\boldsymbol{f} - \boldsymbol{f}^{*})^{2}  \leq B'L^{2} P (\ell_{\boldsymbol{f}} - \ell_{\boldsymbol{f}^{*}}),\quad\forall h_{\boldf} \in \mathcal{H}^{*}_{\mathcal{F}}.
$$
By taking $B = B'L^{2}$, we have for all $h_{\boldf} \in \mathcal{H}^{*}_{\mathcal{F}}$,
$$
  V (h_{\boldsymbol{f}}):= P h_{\boldsymbol{f}}^2\leq L^{2} P (\boldsymbol{f} - \boldsymbol{f}^{*})^{2} \leq B P(\ell_{\boldsymbol{f}} - \ell_{\boldsymbol{f}^{*}}) = B P h_{\boldf},
$$
which implies that ${\mathcal{H}}^{*}_{\mathcal{F}}$ is a $(1,B)$-Bernstein class of vector-valued functions. Also, note that one can verify
\begin{align}
B \mathfrak{R}(\mathcal{H}^{*}_{\mathcal{F}} ,r) & = B \mathbb{E}_{X, \sigma} \left[ \sup_{\substack{\boldsymbol{f} \in \mathcal{F},\\ V (h_{\boldsymbol{f}}) \leq r}} \frac{1}{nT} \sum_{t=1}^{T} \sum_{i=1}^{n} \sigma_{t}^{i} h_{f_{t}} (X_{t}^{i}, Y_{t}^{i})  \right]
\nonumber\\
&= B \mathbb{E}_{X, \sigma} \left[ \sup_{\substack{\boldsymbol{f} \in \mathcal{F},\\ V (h_{\boldsymbol{f}}) \leq r}} \frac{1}{nT} \sum_{t=1}^{T} \sum_{i=1}^{n} \sigma_{t}^{i} \ell_{f_{t}} (X_{t}^{i}, Y_{t}^{i})  \right]
\nonumber\\
& \leq B L \mathfrak{R}(\mathcal{F}^{*} ,r) \leq  \psi(r),
\nonumber
\end{align}
\noindent
where the second last inequality is due to Talagrand's Lemma \citep{ledoux2013}. Applying \thrmref{MainTheoremClassF} (which is the extension of Theorem 3.3 of \cite{bartlett2005} to \ac {MTL} function classes) to the function class $\mathcal{H}^{*}_{\mathcal{F}}$ completes the proof.

%
The following lemma, as a consequence of Corollary 2.2 in \cite{bartlett2005}, is essential in proving \thrmref{MainTheorem2}.
\begin{lemm}
\label{ConsequenceofCorr2.2Bartlett}
Assume that the functions in vector-valued function class  $\fcal =\{\boldf = (f_{1},\ldots,f_{T})\}$ satisfy $\sup_{t,x}|f_t(x)|\leq b$ with $b>0$. For every $x >0$, if $r$ satisfies
\begin{align}
r \geq 32 L^{2} b \mathbb{E}_{\sigma, X} \left\lbrace \sup_{\substack{\boldsymbol{f} \in \mathcal{F},\\ L^{2} P \left( \boldsymbol{f} - \boldsymbol{f}^{*} \right)^{2} \leq r }} \frac{1}{nT} \sum_{t=1}^T \sum_{i=1}^{n} \sigma_{t}^{i} f_{t}(X_{t}^{i}) \right\rbrace + \frac{128 L^{2} b^{2} x}{nT},
\nonumber
\end{align}
\noindent
then, with probability at least $1-e^{-x}$,
\begin{align}
\left\lbrace \boldsymbol{f} \in \mathcal{F} : L^{2} P \left( \boldsymbol{f}-\boldsymbol{f}^{*}\right)^{2} \leq r  \right\rbrace \subset \left\lbrace \boldsymbol{f} \in \mathcal{F} : L^{2} P_{n} \left( \boldsymbol{f}-\boldsymbol{f}^{*}\right)^{2} \leq 2r \right\rbrace.
\nonumber
\end{align}
\end{lemm}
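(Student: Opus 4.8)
The plan is to turn the set‑inclusion claim into a one‑sided uniform deviation bound over the class of \emph{squared} functions, and to estimate that deviation via the MTL Talagrand inequality (\thrmref{TalagrandforMTL}) together with the vector‑valued contraction principle. Write $g_t:=(f_t-f_t^*)^2$ and introduce the data‑independent class
\[
\mathcal{G} := \Bigl\{\,\boldsymbol{g}=(g_1,\dots,g_T) : \boldsymbol{f}\in\mathcal{F},\ L^2 P(\boldsymbol{f}-\boldsymbol{f}^*)^2\le r\,\Bigr\}.
\]
Since $P_n\boldsymbol{g}=P_n(\boldsymbol{f}-\boldsymbol{f}^*)^2$ and $P\boldsymbol{g}=P(\boldsymbol{f}-\boldsymbol{f}^*)^2$, it suffices to show that, on the high‑probability event below, $L^2\sup_{\boldsymbol{g}\in\mathcal{G}}(P_n\boldsymbol{g}-P\boldsymbol{g})\le r$: indeed, for any admissible $\boldsymbol{f}$ this gives $L^2 P_n(\boldsymbol{f}-\boldsymbol{f}^*)^2 = L^2 P(\boldsymbol{f}-\boldsymbol{f}^*)^2 + L^2[P_n-P](\boldsymbol{f}-\boldsymbol{f}^*)^2\le r+r=2r$.

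First I would check that $\mathcal{G}$ meets the hypotheses of \thrmref{TalagrandforMTL}. Since $\sup_{t,x}|f_t(x)|\le b$ forces $f_t-f_t^*\in[-2b,2b]$, every coordinate of every $\boldsymbol{g}\in\mathcal{G}$ lies in $[0,4b^2]$, so its uniform bound is $4b^2$; and $\tfrac1T\sum_t\ebb[g_t(X_t^1)^2]=\tfrac1T\sum_t\ebb[(f_t-f_t^*)^4]\le 4b^2\,P(\boldsymbol{f}-\boldsymbol{f}^*)^2\le 4b^2 r/L^2$. Applying the one‑sided form of \thrmref{TalagrandforMTL} (the version for $\sup(P_n\boldsymbol{g}-P\boldsymbol{g})$) with uniform bound $4b^2$ and variance parameter $4b^2r/L^2$ yields, with probability at least $1-e^{-x}$,
\[
\sup_{\boldsymbol{g}\in\mathcal{G}}(P_n\boldsymbol{g}-P\boldsymbol{g}) \le 4\mathfrak{R}(\mathcal{G}) + \sqrt{\tfrac{32 b^2 r x}{L^2 nT}} + \tfrac{48 b^2 x}{nT}.
\]

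Next I would bound $\mathfrak{R}(\mathcal{G})$. The map $u\mapsto u^2$ is $4b$‑Lipschitz on $[-2b,2b]$ and vanishes at $0$, so by the contraction property (\lemmref{lem:contraction inequality}, in the vector‑valued form of \cite{maurer2006}), applied conditionally on $X$, one gets $\mathfrak{R}(\mathcal{G})\le 4b\,\ebb_{X,\sigma}\bigl[\sup_{\boldsymbol{f}:L^2P(\boldsymbol{f}-\boldsymbol{f}^*)^2\le r}\tfrac{1}{nT}\sum_{t,i}\sigma_t^i(f_t(X_t^i)-f_t^*(X_t^i))\bigr]$. Since $\boldsymbol{f}^*$ is fixed, the term $\tfrac1{nT}\sum_{t,i}\sigma_t^i f_t^*(X_t^i)$ factors out of the supremum and has zero $\sigma$‑mean, so this expectation equals $\mathfrak{R}(\mathcal{F}^*,r)$ of \eqref{LRCofFstar}; hence $4\mathfrak{R}(\mathcal{G})\le 16b\,\mathfrak{R}(\mathcal{F}^*,r)$. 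Plugging this in, multiplying through by $L^2$, and splitting the middle term via $\sqrt{uv}\le\tfrac{u}{2}+\tfrac{v}{2}$ with $u=\tfrac{128b^2xL^2}{nT}$ and $v=\tfrac r4$, yields $L^2\sup_{\boldsymbol{g}}(P_n\boldsymbol{g}-P\boldsymbol{g})\le 16bL^2\mathfrak{R}(\mathcal{F}^*,r)+\tfrac{112b^2xL^2}{nT}+\tfrac r8$. The assumption on $r$ gives $16bL^2\mathfrak{R}(\mathcal{F}^*,r)\le\tfrac r2-\tfrac{64b^2xL^2}{nT}$, and $r\ge\tfrac{128L^2b^2x}{nT}$ gives $\tfrac{48b^2xL^2}{nT}\le\tfrac{3r}{8}$; summing, the right‑hand side is at most $\tfrac58 r+\tfrac38 r=r$, which is exactly what was needed, completing the argument.

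The conceptual skeleton is routine; the real work is the constant bookkeeping, i.e.\ checking that the Lipschitz constant of $u\mapsto u^2$ on $[-2b,2b]$, the parameter substitutions (replacing $b$ by $4b^2$ and $r$ by $4b^2r/L^2$ in \thrmref{TalagrandforMTL}), and the particular AM--GM split of the $\sqrt r$‑term line up so that the numerical assumption $r\ge 32L^2b\,\mathfrak{R}(\mathcal{F}^*,r)+128L^2b^2x/(nT)$ is precisely strong enough (the identity $48\cdot\tfrac83=128$ is what makes the residual $x/(nT)$‑terms close). One should also keep in mind that the constraint defining $\mathcal{G}$ and $\mathcal{F}^*$ is in terms of $P$, hence deterministic, which is what legitimizes performing the contraction step conditionally on the sample.
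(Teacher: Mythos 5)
Your proposal is correct and matches the paper's own proof in all essentials: both pass to the class of squared differences $(f_t-f_t^*)^2$ with the deterministic $P$-based constraint, verify the range bound $4b^2$ and variance bound $4b^2r/L^2$, apply the one-sided MTL Talagrand inequality (\thrmref{TalagrandforMTL}), use the $4b$-Lipschitz contraction with $\boldsymbol{f}^*$ fixed to reduce to $\mathfrak{R}(\mathcal{F}^{*},r)$, and close with an AM--GM split of the $\sqrt{r}$ term against the assumed lower bound on $r$. Your bookkeeping (the $u=\tfrac{128b^2xL^2}{nT},\,v=\tfrac r4$ split) differs only cosmetically from the paper's split and yields the same conclusion.
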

\begin{proof}
First, define
\begin{equation}
\mathcal{F}_{r}^{*} := \left\lbrace \boldsymbol{f}'= (f'_{1}, \ldots, f'_{T}): \forall t, f'_{t} = (f_{t}-f^{*}_{t})^{2}, \boldf = (f_{1}, \ldots, f_{T}) \in \mathcal{F}, L^{2} P (\boldsymbol{f} - \boldsymbol{f}^{*})^{2} \leq r\right\rbrace.
\nonumber
\end{equation}
Note that for all $t \in \mathbb{N}_{T}$, $(f_{t} - f_{t}^{*})^{2} \in [0,4 b^{2}]$. Also, for any function in $\mathcal{F}^{*}_{r}$, it holds that
\begin{equation}
 P \boldsymbol{f}'^{2} = \frac{1}{T} \sum_{t=1}^{T} P f'^{2}_{t}  = \frac{1}{T} \sum_{t=1}^{T} P \left( f_{t} -f_{t}^{*} \right)^{4}  \leq  \frac{ 4 b^{2}}{T} \sum_{t=1}^{T} P \left( f_{t} -f_{t}^{*} \right)^{2} =4  b^{2} P \left( \boldsymbol{f}-\boldsymbol{f}^{*}\right)^{2} \leq \frac{4 b^{2} r}{L^{2}}.
 \nonumber
\end{equation}
Therefore, by \thrmref{TalagrandforMTL}, with probability at least $1-e^{-x}$, every $\boldsymbol{f}' \in \mathcal{F}^{*}_{r} $ satisfies
\begin{align}
\label{MainTheoremApplication}
      P_{n} \boldsymbol{f}' \leq P \boldsymbol{f}' + 4 \mathfrak{R}
      (\mathcal{F}^{*}_{r}) + \sqrt{\frac{32 b^{2} x r}{nTL^{2}} }+ \frac{48 b^{2}x}{nT},
\end{align}
\noindent
where
\begin{align}
\mathfrak{R}
 (\mathcal{F}^{*}_{r}) & = \mathbb{E}_{\sigma, X} \left\lbrace \sup_{\substack{\boldsymbol{f} \in \mathcal{F},\\ L^{2} P \left( \boldsymbol{f} - \boldsymbol{f}^{*} \right)^{2} \leq r }} \frac{1}{nT} \sum_{t=1}^T \sum_{i=1}^{n} \sigma_{t}^{i} (f_{t}(X_{t}^{i}) - f_{t}^{*}(X_{t}^{i}) )^{2} \right\rbrace
 \nonumber\\
 & \leq 4b \mathbb{E}_{\sigma, X} \left\lbrace \sup_{\substack{\boldsymbol{f} \in \mathcal{F},\\ L^{2} P \left( \boldsymbol{f} - \boldsymbol{f}^{*} \right)^{2} \leq r }} \frac{1}{nT} \sum_{t=1}^T \sum_{i=1}^{n} \sigma_{t}^{i} f_{t}(X_{t}^{i}) \right\rbrace.
\end{align}

\noindent
The last inequality follows from the facts that $g(x) = x^{2}$ is $4b$-Lipschitz on $[-2b,2b]$ and $\boldsymbol{f}^{*}$ is fixed. This together with \eqref{MainTheoremApplication}, gives
\begin{align}
      P_{n} \boldsymbol{f}' & \leq P \boldsymbol{f}' + 16 b  \mathbb{E}_{\sigma, X} \left\lbrace \sup_{\substack{\boldsymbol{f} \in \mathcal{F},\\ L^{2} P \left( \boldsymbol{f} - \boldsymbol{f}^{*} \right)^{2} \leq r }} \frac{1}{nT} \sum_{t=1}^T \sum_{i=1}^{n} \sigma_{t}^{i} f_{t}(X_{t}^{i}) \right\rbrace  + \sqrt{\frac{32 b^{2} x r}{nTL^{2}} }+ \frac{48 b^{2}x}{nT}
      \nonumber\\
      & \leq \frac{r}{L^{2}} + 16 b  \mathbb{E}_{\sigma, X} \left\lbrace \sup_{\substack{\boldsymbol{f} \in \mathcal{F},\\ L^{2} P \left( \boldsymbol{f} - \boldsymbol{f}^{*} \right)^{2} \leq r }} \frac{1}{nT} \sum_{t=1}^T \sum_{i=1}^{n} \sigma_{t}^{i} f_{t}(X_{t}^{i}) \right\rbrace   +   \frac{r}{2L^{2}}  +  \frac{64 b^{2} x}{nT}.
			\nonumber
\end{align}
\noindent
Multiplying both sides by $L^{2}$ completes the proof.
\end{proof}

\section*{Proof of \thrmref{MainTheorem2}}
With  $c_{1} = 2L \max\left( B,16Lb\right)$ and $c_{2} = 128 L^{2} b^{2} +2b c_{1}$, define the function $\psi(r)$ as
\begin{align}
\label{PsiForConvex1}
\psi(r) = \frac{c_{1}}{2} \mathbb{E} \left[ \sup_{\substack{\boldsymbol{f} \in \mathcal{F},\\ L^{2} P (\boldsymbol{f}-\boldsymbol{f}^{*})^{2} \leq r}} \frac{1}{nT} \sum_{t=1}^{T} \sum_{i=1}^{n} \sigma_{t}^{i} f_t(X_{t}^{i})  \right] + \frac{(c_2 - 2b c_1)x}{nT}.
\end{align}
Since $\mathcal{F}$ is convex, it is star-shaped around any of its points, thus using Lemma 3.4 in \cite{bartlett2005} it can be shown that $\psi (r)$ defined in \eqref{PsiForConvex1} is a sub-root function. With the help of \corref{DistDependentcorr} and \assumref{assumption}, we have with probability at least $1-e^{-x}$
\begin{align}
\label{DataDepIneq1}
L^{2} P \left( \boldsymbol{\hat{f}} - \boldsymbol{f}^{*}\right) ^{2} \leq B P \left(\ell_{\boldsymbol{\hat{f}}} - \ell_{\boldsymbol{f}^{*}} \right) \leq  32 K r + \frac{(48Lb+ 16 B K)B x}{n T}.
\end{align}
\noindent
where $B := B' L^{2}$. Denote the right hand side of the last inequality by $s$. Since $s \geq r \geq r^{*}$, then by the property of sub-root functions it holds that $s \geq \psi(s)$ which together with \eqref{PsiForConvex1}, gives
\begin{align*}
s \geq 32 L^{2} b \mathbb{E} \left[ \sup_{\substack{\boldsymbol{f} \in \mathcal{F},\\ L^{2} P (\boldsymbol{f}-\boldsymbol{f}^{*})^{2} \leq s}} \frac{1}{nT} \sum_{t=1}^{T} \sum_{i=1}^{n} \sigma_{t}^{i} f_t(X_{t}^{i})  \right] + \frac{128 L^{2} b^{2} x}{nT}.
\end{align*}
\noindent
Applying \lemmref{ConsequenceofCorr2.2Bartlett}, we have with probability at least $1-e^{-x}$,
\begin{align*}
\left\lbrace \boldsymbol{f} \in \mathcal{F}, L^{2} P \left(\boldsymbol{f} - \boldsymbol{f}^{*} \right)^{2} \leq s  \right\rbrace \subset \left\lbrace \boldsymbol{f} \in \mathcal{F}, L^{2} P_{n} \left(\boldsymbol{f} - \boldsymbol{f}^{*} \right)^{2} \leq 2s \right\rbrace.
\end{align*}
Combining this with \eqref{DataDepIneq1}, gives with probability at least $1-2e^{-x}$,
\begin{align}
L^{2} P_{n} \left(\boldsymbol{\hat{f}} - \boldsymbol{f}^{*} \right)^{2} & \leq 2 \left( 32 K r + \frac{(48 Lb+ 16 B K)B x}{n T} \right)
\nonumber\\
\label{DataDepIneq2}
& \leq 2 \left( 32 K + \frac{(48 Lb+ 16 B K)B}{c_{2}} \right) r = c r.
\end{align}
\noindent
where $c :=2(32 K + (48 Lb + 16 B K)B/c_{2})$ and in the last inequality we used the fact that $r \geq \psi(r) \geq c_{2} x /nT$. Applying the triangle inequality, if \eqref{DataDepIneq2} holds, then for any $\boldsymbol{f} \in \mathcal{F}$, we have
\begin{align}
L^{2} P_{n} \left(\boldsymbol{f} - \boldsymbol{\hat{f}} \right)^{2} & \leq \left( \sqrt{ L^{2} P_{n} \left(\boldsymbol{f} - \boldsymbol{f}^{*} \right)^{2}} + \sqrt{ L^{2} P_{n} \left(\boldsymbol{f}^{*} - \boldsymbol{\hat{f}} \right)^{2}} \right)^{2}
\nonumber\\
\label{DataDepIneq3}
& \leq \left( \sqrt{ L^{2} P_{n} \left(\boldsymbol{f} - \boldsymbol{f}^{*} \right)^{2}} + \sqrt{cr} \right)^{2}.
\end{align}
Now, applying \lemmref{ConsequenceofCorr2.2Bartlett} for $r \geq \psi(r)$, implies that with probability at least $1-3e^{-x}$,
\begin{align*}
\left\lbrace \boldsymbol{f} \in \mathcal{F}, L^{2} P \left(\boldsymbol{f} - \boldsymbol{f}^{*} \right)^{2} \leq r  \right\rbrace \subset \left\lbrace \boldsymbol{f} \in \mathcal{F}, L^{2} P_{n} \left(\boldsymbol{f} - \boldsymbol{f}^{*} \right)^{2} \leq 2r \right\rbrace,
\end{align*}
which coupled with \eqref{DataDepIneq3}, implies that with probability at least $1- 3e^{-x}$,
\begin{align*}
\left\lbrace \boldsymbol{f} \in \mathcal{F}, L^{2} P \left(\boldsymbol{f} - \boldsymbol{f}^{*} \right)^{2} \leq r  \right\rbrace \subset \left\lbrace \boldsymbol{f} \in \mathcal{F}, L^{2} P_{n} \left(\boldsymbol{f} - \boldsymbol{\hat{f}} \right)^{2} \leq \left( \sqrt{2} + \sqrt{c}\right)^{2} r  \right\rbrace.
\end{align*}
Also, with the help of Lemma A.4 in \cite{bartlett2005}, it can be shown that with probability at least $1-e^{-x}$,
\begin{align*}
\mathbb{E} \left[ \sup_{\substack{\boldsymbol{f} \in \mathcal{F},\\ L^{2} P (\boldsymbol{f}-\boldsymbol{f}^{*})^{2} \leq r}} \frac{1}{nT} \sum_{t=1}^{T} \sum_{i=1}^{n} \sigma_{t}^{i} f_t(X_{t}^{i})  \right] \leq 2 \mathbb{E}_{\sigma} \left[ \sup_{\substack{\boldsymbol{f} \in \mathcal{F},\\ L^{2} P (\boldsymbol{f}-\boldsymbol{f}^{*})^{2} \leq r}} \frac{1}{nT} \sum_{t=1}^{T} \sum_{i=1}^{n} \sigma_{t}^{i} f_t(X_{t}^{i})  \right] + \frac{4bx}{nT}. 
\end{align*}
Thus, we will have with probability at least $1-4e^{-x}$,
\begin{align*}
\psi(r) & \leq c_{1} \mathbb{E}_{\sigma} \left[ \sup_{\substack{\boldsymbol{f} \in \mathcal{F},\\ L^{2} P (\boldsymbol{f}-\boldsymbol{f}^{*})^{2} \leq r}} \frac{1}{nT} \sum_{t=1}^{T} \sum_{i=1}^{n} \sigma_{t}^{i} f_t(X_{t}^{i})  \Biggr \vert \left\{ x_t^i \right\}_{t \in \mathbb{N}_T, i \in \mathbb{N}_n} \right] + \frac{c_{2} x}{nT}
\\
& \leq c_{1} \mathbb{E}_{\sigma} \left[ \sup_{\substack{\boldsymbol{f} \in \mathcal{F},\\ L^{2} P_{n} (\boldsymbol{f}-\boldsymbol{\hat{f}} )^{2} \leq \left( \sqrt{2} + \sqrt{c}\right)^{2} r}} \frac{1}{nT} \sum_{t=1}^{T} \sum_{i=1}^{n} \sigma_{t}^{i} f_t(X_{t}^{i})  \Biggr \vert \left\{ x_t^i \right\}_{t \in \mathbb{N}_T, i \in \mathbb{N}_n} \right] + \frac{c_{2} x}{nT}
\\
& \leq c_{1} \mathbb{E}_{\sigma} \left[ \sup_{\substack{\boldsymbol{f} \in \mathcal{F},\\ L^{2} P_{n} (\boldsymbol{f}-\boldsymbol{\hat{f}})^{2} \leq \left( 4 + 2c\right) r}} \frac{1}{nT} \sum_{t=1}^{T} \sum_{i=1}^{n} \sigma_{t}^{i} f_t(X_{t}^{i})  \Biggr \vert \left\{ x_t^i \right\}_{t \in \mathbb{N}_T, i \in \mathbb{N}_n} \right] + \frac{c_{2} x}{nT}
\\
& \leq \hat{\psi}(r).
\end{align*}
Setting $r = r^{*}$ and applying Lemma 4.3 of \cite{bartlett2005}, gives $r^{*} \leq \hat{r}^{*}$ which together with \eqref{DataDepIneq1} yields the result.


\section{Proofs of the results in \sref{sec:StrongConvexity}: ``Local Rademacher Complexity Bounds for \ac{MTL} models with Strongly Convex Regularizers"}
\label{C}
In the following, we would like to provide some basic notions of convex analysis which are helpful in understanding the results of \sref{sec:StrongConvexity}.

\begin{defin}[\textsc{Strong Convexity}]
\label{StrongConvexity}
A function $R : \mathcal{X} \mapsto \mathbb{R}$ is $\mu$-strong convex \wrt\, a norm $\|.\|$ if and only if $\forall x,y \in \mathcal{X}$ and $\forall \alpha \in (0,1)$, we have
\begin{align}
R(\alpha x + (1- \alpha) y) \leq \alpha R(x) + (1- \alpha) R(y) - \frac{\mu}{2} \alpha (1- \alpha) \|x-y\|^{2}.
\nonumber
\end{align}
\end{defin}
\begin{defin}[\textsc{Strong Smoothness}]
\label{StrongSmoothness}
A function $R^{*} : \mathcal{X} \mapsto \mathbb{R}$ is $\frac{1}{\mu}$-strong smooth \wrt\, a norm $\|.\|_{*}$ if and only if $R^{*}$ is everywhere differentiable and $\forall x,y \in \mathcal{X}$, we have
\begin{align}
R^{*}( x + y) \leq  R^{*}(x) + \left\langle \triangledown R^{*}(x) ,y \right\rangle  + \frac{1}{2 \mu}  \left\| y \right\|_{*}^{2}.
\nonumber
\end{align}
\end{defin}
\begin{Property}[Theorem 3 in \cite{kakade2012}: strong convexity/strong smoothness duality]
\label{DualityofStrongConvexity}
A function $R$ is $\mu$-strongly convex \wrt\, the norm $\left\| . \right\| $ if and only if its Fenchel conjugate $R^{*}$ is $\frac{1}{\mu}$-strongly smooth \wrt\, the dual norm $\left\| . \right\|_{*}$. The Fenchel conjugate $R^{*}$ is defined as
\begin{align}
R^{*}(\boldsymbol{w}) := \sup_{\boldsymbol{v}} \left\lbrace \left\langle \boldsymbol{w}, \boldsymbol{v}\right\rangle - R(\boldsymbol{v})\right\rbrace. 
\nonumber
\end{align}
\end{Property}
\begin{Property}[\textsc{Fenchel-Young inequality}]
\label{FenchelYoung}
The definition of Fenchel dual implies that for any strong convex function $R$, 
\begin{align}
\forall \boldsymbol{w}, \boldsymbol{v} \in S, \, \left\langle \boldsymbol{w}, \boldsymbol{v} \right\rangle \leq R(\boldsymbol{w}) + R^{*}(\boldsymbol{v}).
\nonumber
\end{align}
\noindent
Combining this with the strong duality property of $R^{*}$ gives the following
\begin{align}
\label{FenchelYoungInq}
\left\langle \boldsymbol{w}, \boldsymbol{v} \right\rangle - R(\boldsymbol{w}) \leq  R^{*}(\boldsymbol{v}) \leq R^{*}(\boldsymbol{0}) + \left\langle \triangledown    R^{*} (\boldsymbol{0}) , \boldsymbol{v}  \right\rangle + \frac{1}{2 \mu} \left\| \boldsymbol{v} \right\|_{*}^{2}. 
\end{align}
\end{Property}

\begin{lemm}
\label{MTLemma}
 Assume that the conditions of \thrmref{GeneralLRCBoundForFixedMappingStrongConvex} hold. Then,  for ever $\boldsymbol{f} \in \mathcal{F}_{q}$,\\

(\textbf{a}) 
$P\boldsymbol{f}^{2} \leq r $ implies $1/T \sum_{t=1}^{T} \sum_{j=1}^{\infty} \lambda_t^{j} \left\langle  \boldsymbol{w}_{t}, \boldsymbol{u}_{t}^{j} \right\rangle^{2} \leq r$.

(\textbf{b}) $\mathbb{E}_{X, \sigma} \left\langle \frac{1}{n} \sum_{i=1}^{n} \sigma_t^{i} \phi (X_t^{i}) , \boldsymbol{u}_{t}^{j} \right\rangle^{2} = \frac{\lambda_t^{j}}{n}$. 

\end{lemm}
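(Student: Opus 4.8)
The plan is to establish both claims by direct computation, using the linear representation $f_t(X)=\langle\boldsymbol{w}_t,\phi(X)\rangle$ together with the spectral decomposition $J_t=\sum_{j\geq 1}\lambda_t^j\,\boldsymbol{u}_t^j\otimes\boldsymbol{u}_t^j$ of the covariance operator and the orthonormality of $(\boldsymbol{u}_t^j)_{j\geq 1}$. Nothing beyond elementary Hilbert-space manipulations is needed; in fact part (a) will turn out to be an identity rather than a strict inequality.

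For part (a): first I would expand $P f_t^2=\ebb\big[(f_t(X_t))^2\big]=\ebb\big[\langle\boldsymbol{w}_t,\phi(X_t)\rangle^2\big]=\langle\boldsymbol{w}_t,J_t\boldsymbol{w}_t\rangle$, where the last step uses $J_t=\ebb(\phi(X_t)\otimes\phi(X_t))$. Then, writing $\boldsymbol{w}_t=\sum_{j\geq 1}\langle\boldsymbol{w}_t,\boldsymbol{u}_t^j\rangle\boldsymbol{u}_t^j$ in the orthonormal eigenbasis and using $J_t\boldsymbol{u}_t^j=\lambda_t^j\boldsymbol{u}_t^j$, I obtain $\langle\boldsymbol{w}_t,J_t\boldsymbol{w}_t\rangle=\sum_{j\geq 1}\lambda_t^j\langle\boldsymbol{w}_t,\boldsymbol{u}_t^j\rangle^2$. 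Averaging over $t$ yields $P\boldsymbol{f}^2=\frac1T\sum_{t=1}^T\sum_{j\geq 1}\lambda_t^j\langle\boldsymbol{w}_t,\boldsymbol{u}_t^j\rangle^2$, so the hypothesis $P\boldsymbol{f}^2\leq r$ immediately gives $\frac1T\sum_{t=1}^T\sum_{j\geq 1}\lambda_t^j\langle\boldsymbol{w}_t,\boldsymbol{u}_t^j\rangle^2\leq r$.

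For part (b): I would expand the inner product as $\big\langle\frac1n\sum_{i=1}^n\sigma_t^i\phi(X_t^i),\boldsymbol{u}_t^j\big\rangle=\frac1n\sum_{i=1}^n\sigma_t^i\langle\phi(X_t^i),\boldsymbol{u}_t^j\rangle$, square it, and take $\ebb_\sigma$ first. Since the $\sigma_t^i$ are independent Rademacher variables, $\ebb_\sigma[\sigma_t^i\sigma_t^{i'}]$ equals $1$ if $i=i'$ and $0$ otherwise, so the cross terms vanish and we are left with $\frac1{n^2}\sum_{i=1}^n\langle\phi(X_t^i),\boldsymbol{u}_t^j\rangle^2$. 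Taking $\ebb_X$ next and using that each $X_t^i$ is distributed as $X_t$, we have $\ebb_X\langle\phi(X_t^i),\boldsymbol{u}_t^j\rangle^2=\langle\boldsymbol{u}_t^j,J_t\boldsymbol{u}_t^j\rangle=\lambda_t^j$ for every $i$; summing the $n$ identical terms gives $\frac{\lambda_t^j}{n}$, as claimed.

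The only points deserving care are notational or technical rather than conceptual: interpreting $P f_t^2$ as the second moment $\ebb[(f_t(X_t))^2]$ in line with the paper's conventions, justifying the interchange of expectation with the (possibly infinite) sum over $j$, which is legitimate under the trace-class assumption on $J_t$ made in the preliminaries, and noting that it is precisely the i.i.d.\ structure of $(X_t^i)_{i=1}^n$ (each a copy of $X_t$) that allows replacing $\ebb_X\langle\phi(X_t^i),\boldsymbol{u}_t^j\rangle^2$ by $\lambda_t^j$ uniformly in $i$. I do not anticipate any genuine obstacle in carrying this out.
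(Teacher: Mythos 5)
Your proposal is correct and follows essentially the same argument as the paper: part (a) is the identity $P\boldsymbol{f}^{2}=\frac{1}{T}\sum_{t}\sum_{j}\lambda_t^{j}\langle\boldsymbol{w}_t,\boldsymbol{u}_t^{j}\rangle^{2}$ obtained from the spectral decomposition of $J_t$ (the paper phrases it via $\langle\boldsymbol{w}_t\otimes\boldsymbol{w}_t,\phi\otimes\phi\rangle$, you via the quadratic form $\langle\boldsymbol{w}_t,J_t\boldsymbol{w}_t\rangle$, which is the same computation), and part (b) is the same expansion with cross terms killed by Rademacher independence and $\mathbb{E}_X\langle\phi(X_t^i),\boldsymbol{u}_t^j\rangle^2=\lambda_t^j$ from the i.i.d.\ assumption. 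No gaps.
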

\begin{proof} \leavevmode

Part (\textbf{a})
\begin{align}
P\boldsymbol{f}^{2} = & \frac{1}{T} \sum_{t=1}^{T} \mathbb{E} \left( \left\langle \boldsymbol{w}_{t}, \phi(X_t^{i})\right\rangle  \right) ^{2} 
 \frac{1}{T} \sum_{t=1}^{T} \mathbb{E} \left( \left\langle \boldsymbol{w}_{t} \otimes \boldsymbol{w}_{t}, \phi(X_t^{i}) \otimes \phi(X_t^{i}) \right\rangle  \right) 
\nonumber\\
= & \frac{1}{T} \sum_{t=1}^{T} \left\langle \boldsymbol{w}_{t} \otimes \boldsymbol{w}_{t},  \mathbb{E}_{X} \left(  \phi(X_t^{i}) \otimes \phi(X_t^{i}) \right)  \right\rangle
= \frac{1}{T} \sum_{t=1}^{T} \sum_{j=1}^{\infty} \lambda_t^{j}  \left\langle \boldsymbol{w}_{t} \otimes \boldsymbol{w}_{t},   \boldsymbol{u}_{t}^{j} \otimes \boldsymbol{u}_{t}^{j}  \right\rangle
\nonumber\\
= & \frac{1}{T} \sum_{t=1}^{T} \sum_{j=1}^{\infty} \lambda_t^{j} \left\langle \boldsymbol{w}_{t} , \boldsymbol{u}_{t}^{j} \right\rangle  \left\langle \boldsymbol{w}_{t} , \boldsymbol{u}_{t}^{j} \right\rangle  = \frac{1}{T} \sum_{t=1}^{T} \sum_{j=1}^{\infty} \lambda_t^{j} \left\langle \boldsymbol{w}_{t} , \boldsymbol{u}_{t}^{j} \right\rangle^{2} \leq r.
\nonumber
\end{align}

Part (\textbf{b})
\begin{align}
 & \mathbb{E}_{X, \sigma} \left\langle \frac{1}{n} \sum_{i=1}^{n} \sigma_t^{i} \phi (X_t^{i}) , \boldsymbol{u}_{t}^{j} \right\rangle^{2}   =   \frac{1}{n^2} \mathbb{E}_{X,\sigma} \sum_{i,k=1}^{n} \sigma_t^{i} \sigma_{t}^{k} \left\langle \phi(X_t^{i}) , \boldsymbol{u}_{t}^{j}\right\rangle \left\langle \phi(X_t^{k}) , \boldsymbol{u}_{t}^{j}\right\rangle  
 \nonumber\\
 & \stackrel{\boldsymbol{\sigma}_t \iid}{=}   \frac{1}{n^2} \mathbb{E}_{X} \left( \sum_{i=1}^{n} \left\langle  \phi(X_t^{i}), \boldsymbol{u}_{t}^{j} \right\rangle ^{2} \right) 
 =  \frac{1}{n} \left\langle \frac{1}{n} \sum_{i=1}^{n} \mathbb{E}_{X} \left(  \phi(X_t^{i}) \otimes \phi(X_t^{i}) \right)  , \boldsymbol{u}_{t}^{j} \otimes \boldsymbol{u}_{t}^{j} \right\rangle 
\nonumber\\
 & = \frac{1}{n}  \sum_{l=1}^{\infty} \lambda_t^{l} \left\langle  \boldsymbol{u}_{t}^{l} \otimes  \boldsymbol{u}_{t}^{l}  , \boldsymbol{u}_{t}^{j} \otimes \boldsymbol{u}_{t}^{j} \right\rangle  = \frac{\lambda_t^{j}}{n}.  
 \nonumber
\end{align}
\end{proof}

The following lemmas are used in the proof of the \ac {LRC} bound for the $L_{2,q}$-group norm regularized \ac {MTL} in \corref{GroupNormLRCStrongConvexity}.

\begin{lemm}[Khintchine-Kahane Inequality \citep{peshkir1995}]
\label{lemm:KKinequality}
Let $\mathcal{H}$ be an inner-product space with induced norm $\left\| \cdot \right\|_{\mathcal{H}}$, $v_1, \ldots, v_M \in \mathcal{H}$ and $\sigma_1, \ldots, \sigma_n$ i.i.d. Rademacher random variables. Then, for any $p \geq 1$, we have that 
\begin{align}
\label{K.K.Inq}
	\mathbb{E}_{\boldsymbol{\sigma}}{ \left\| \sum_{i=1}^n \sigma_i v_i \right\|_{\mathcal{H}}^p } \leq \left( c \sum_{i=1}^n \left\| v_i \right\|_{\mathcal{H}}^2 \right)^{\frac{p}{2}}.
\end{align}
\noindent
where $c := \max\left\{ 1, p - 1 \right\}$. The inequality also holds for $p$ in place of $c$.

\end{lemm}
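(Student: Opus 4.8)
The plan is to dispatch the regimes $1\le p\le 2$ and $p\ge 2$ separately: the first is essentially immediate, while the second carries all the work and is handled by an induction on $n$ powered by a single two-point smoothness inequality in $\mathcal{H}$.

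First I would record the exact identity underlying the case $p=2$: expanding the squared norm and using that $\mathbb{E}[\sigma_i\sigma_j]$ vanishes unless $i=j$,
\[
\mathbb{E}_{\boldsymbol{\sigma}}\Big\|\sum_{i=1}^{n}\sigma_i v_i\Big\|_{\mathcal{H}}^{2}=\sum_{i,j=1}^{n}\mathbb{E}[\sigma_i\sigma_j]\,\langle v_i,v_j\rangle=\sum_{i=1}^{n}\|v_i\|_{\mathcal{H}}^{2}.
\]
For $1\le p\le 2$, the map $t\mapsto t^{p/2}$ is concave on $[0,\infty)$, so Jensen's inequality combined with this identity gives
\[
\mathbb{E}_{\boldsymbol{\sigma}}\Big\|\sum_{i=1}^{n}\sigma_i v_i\Big\|_{\mathcal{H}}^{p}=\mathbb{E}_{\boldsymbol{\sigma}}\Big[\Big(\Big\|\sum_{i=1}^{n}\sigma_i v_i\Big\|_{\mathcal{H}}^{2}\Big)^{p/2}\Big]\le\Big(\sum_{i=1}^{n}\|v_i\|_{\mathcal{H}}^{2}\Big)^{p/2},
\]
which is the asserted bound with $c=1$, hence a fortiori with $c=\max\{1,p-1\}$ and with $p$ in place of $c$.

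For $p\ge 2$ I would argue by induction on $n$. The base case $n=1$ is trivial, since $\mathbb{E}\|\sigma_1 v_1\|_{\mathcal{H}}^{p}=\|v_1\|_{\mathcal{H}}^{p}\le\big((p-1)\|v_1\|_{\mathcal{H}}^{2}\big)^{p/2}$. For the inductive step, write $\boldsymbol{S}_{n}:=\sum_{i=1}^{n}\sigma_i v_i=\boldsymbol{S}_{n-1}+\sigma_n v_n$ with $\boldsymbol{S}_{n-1}$ independent of $\sigma_n$, and apply — conditionally on $\boldsymbol{S}_{n-1}$, with $a=\boldsymbol{S}_{n-1}$ and $v=v_n$ — the two-point inequality
\[
\tfrac12\|a+v\|_{\mathcal{H}}^{p}+\tfrac12\|a-v\|_{\mathcal{H}}^{p}\le\big(\|a\|_{\mathcal{H}}^{2}+(p-1)\|v\|_{\mathcal{H}}^{2}\big)^{p/2},\qquad a,v\in\mathcal{H},
\]
to obtain $\mathbb{E}_{\sigma_n}\|\boldsymbol{S}_n\|_{\mathcal{H}}^{p}\le\big(\|\boldsymbol{S}_{n-1}\|_{\mathcal{H}}^{2}+(p-1)\|v_n\|_{\mathcal{H}}^{2}\big)^{p/2}$. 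Taking expectation over $\sigma_1,\dots,\sigma_{n-1}$ and applying Minkowski's inequality in $L^{p/2}$ (valid since $p/2\ge 1$) to the sum of the random variable $\|\boldsymbol{S}_{n-1}\|_{\mathcal{H}}^{2}$ and the constant $(p-1)\|v_n\|_{\mathcal{H}}^{2}$ yields
\[
\mathbb{E}\|\boldsymbol{S}_n\|_{\mathcal{H}}^{p}\le\Big(\big(\mathbb{E}\|\boldsymbol{S}_{n-1}\|_{\mathcal{H}}^{p}\big)^{2/p}+(p-1)\|v_n\|_{\mathcal{H}}^{2}\Big)^{p/2},
\]
and the induction hypothesis $\big(\mathbb{E}\|\boldsymbol{S}_{n-1}\|_{\mathcal{H}}^{p}\big)^{2/p}\le(p-1)\sum_{i=1}^{n-1}\|v_i\|_{\mathcal{H}}^{2}$ closes the step. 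Since $p-1\le p$, the version with $p$ in place of $c$ follows at once.

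The main obstacle is the two-point inequality above, which is exactly where the Euclidean geometry of $\mathcal{H}$ enters; everything else is bookkeeping. I would prove it by restricting to the (at most two-dimensional) subspace $\mathrm{span}\{a,v\}$, observing that its left-hand side is convex as a function of $\langle a,v\rangle$ and hence, over the admissible range $|\langle a,v\rangle|\le\|a\|_{\mathcal{H}}\|v\|_{\mathcal{H}}$, is maximized at the endpoints; substituting the worst case reduces the claim to the scalar inequality $\tfrac12|x+y|^{p}+\tfrac12|x-y|^{p}\le(x^{2}+(p-1)y^{2})^{p/2}$ for $x,y\ge 0$ — the sharp $2$-uniform smoothness inequality on $\mathbb{R}$ — which follows by elementary calculus after normalizing $x^{2}+y^{2}=1$. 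A more conceptual route to the entire $p\ge 2$ case is the Bonami--Beckner hypercontractive inequality for $\mathcal{H}$-valued functions on $\{-1,1\}^{n}$: since $\boldsymbol{\sigma}\mapsto\sum_i\sigma_i v_i$ is a degree-one function, on which the noise operator $T_\rho$ acts as multiplication by $\rho$, the choice $\rho=(p-1)^{-1/2}$ gives $\big\|\sum_i\sigma_i v_i\big\|_{L^p}\le\sqrt{p-1}\,\big\|\sum_i\sigma_i v_i\big\|_{L^2}$, which is exactly the bound with $c=p-1$ once paired with the $p=2$ identity; but this again rests on the same two-point estimate plus a tensorization argument, so it is not genuinely shorter, and it is in essence the content of the cited reference.
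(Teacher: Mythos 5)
Your proposal is correct, but note that the paper itself does not prove this lemma at all: it is imported verbatim from the cited reference (Pe\v{s}kir's work on best constants in the Khintchine inequalities) and used as a black box in the proof of Lemma \ref{A_2ExcpectationGroupNorm}. What you supply is a self-contained derivation: the $1\le p\le 2$ case via orthogonality of the Rademacher variables plus Jensen (giving $c=1$), and the $p\ge 2$ case by induction on $n$ from the two-point $2$-smoothness inequality $\tfrac12\|a+v\|^{p}+\tfrac12\|a-v\|^{p}\le(\|a\|^{2}+(p-1)\|v\|^{2})^{p/2}$, with Minkowski in $L^{p/2}$ closing the induction — this is essentially the hypercontractive proof of Khintchine--Kahane in Hilbert space with constant $\sqrt{p-1}$, and it does yield exactly the stated $c=\max\{1,p-1\}$ (the ``$p$ in place of $c$'' variant being immediate since $p\ge c$). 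Your reduction of the two-point inequality to the scalar case is also sound: for fixed $\|a\|,\|v\|$ the left side is convex and even in $\langle a,v\rangle$, hence maximized at $\langle a,v\rangle=\pm\|a\|\,\|v\|$, i.e.\ the collinear case. The only place where you are lighter than a complete argument is the scalar inequality $\tfrac12(x+y)^{p}+\tfrac12|x-y|^{p}\le(x^{2}+(p-1)y^{2})^{p/2}$ itself, which you dispatch as ``elementary calculus''; it is a genuinely nontrivial (though classical) estimate — the sharp two-point Bonami--Beckner inequality — so in a written version you should either carry out that verification (e.g.\ by the standard Taylor-coefficient comparison of $((1+s)^{p}+(1-s)^{p})/2$ against $(1+(p-1)s^{2})^{p/2}$ on $[0,1]$) or cite it, exactly as the paper chooses to cite the whole lemma. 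What your route buys is transparency about where the constant $p-1$ comes from and independence from the reference; what the paper's route buys is brevity and, in principle, access to the best known constants, which are in fact slightly smaller than $\sqrt{p-1}$ for large $p$, though $\max\{1,p-1\}$ is all that is used downstream.
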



\begin{lemm}[Rosenthal-Young Inequality; Lemma 3 of \cite{kloft2012convergence}]
\label{lemm:RYinequality}
Let the independent non-negative random variables $X_1, \ldots, X_n$ satisfy $X_i \leq B < +\infty$ almost surely for all $i=1, \ldots, n$. If $q \geq \frac{1}{2}$, $c_q := (2qe)^q$, then it holds 
\begin{align}
\label{Rose}
	\mathbb{E}{ \left( \frac{1}{n} \sum_{i=1}^n X_i \right)^q } \leq C_q \left[ \left( \frac{B}{n} \right)^q + \left( \frac{1}{n} \sum_{i=1}^n \mathbb{E}X_i \right)^q \right].
\end{align}
\end{lemm}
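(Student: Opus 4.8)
The plan is to split the argument according to whether $q \le 1$ or $q > 1$, since the two regimes call for different tools. For $\frac12 \le q \le 1$ the map $t \mapsto t^q$ is concave on $[0,\infty)$, so Jensen's inequality immediately gives $\ebb\big(\frac1n\sum_{i=1}^n X_i\big)^q \le \big(\frac1n\sum_{i=1}^n \ebb X_i\big)^q$, and since $c_q = (2qe)^q \ge 1$ this is already dominated by the claimed right-hand side. So essentially all the work sits in the regime $q > 1$.

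For $q > 1$ the idea is to invoke a Rosenthal-type moment inequality for sums of independent nonnegative random variables and then reshape the ``strong moment'' term using the boundedness hypothesis $X_i \le B$ together with Young's inequality --- this is precisely the combination the lemma's name alludes to. Concretely, one first establishes a bound of the form $\ebb\big(\sum_{i=1}^n X_i\big)^q \le C'_q\big[\sum_{i=1}^n \ebb X_i^q + \big(\sum_{i=1}^n \ebb X_i\big)^q\big]$ with an explicit constant $C'_q$. Because $0 \le X_i \le B$ almost surely, $\ebb X_i^q \le B^{q-1}\ebb X_i$, hence $\sum_{i=1}^n \ebb X_i^q \le B^{q-1}\sum_{i=1}^n \ebb X_i$. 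Dividing through by $n^q$ and writing $\bar S := \frac1n\sum_{i=1}^n \ebb X_i \le B$, the first term becomes $(B/n)^{q-1}\,\bar S$, which by Young's inequality with conjugate exponents $\frac{q}{q-1}$ and $q$ satisfies $(B/n)^{q-1}\bar S \le \frac{q-1}{q}(B/n)^q + \frac1q \bar S^{q} \le (B/n)^q + \bar S^q$. Collecting terms yields $\ebb\big(\frac1n\sum_{i=1}^n X_i\big)^q \le 2C'_q\big[(B/n)^q + \bar S^q\big]$, which is the claim with $C_q$ of the advertised order.

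The main obstacle --- and the only nontrivial ingredient --- is producing the Rosenthal-type inequality with an \emph{explicit} constant $C'_q = O\big((2qe)^q\big)$ rather than the $q$-dependent but unquantified constant of the classical statement. The cleanest route is the entropy (logarithmic Sobolev) method: applying a moment version of the modified log-Sobolev inequality (the same machinery underlying \thrmref{SobolevInq}, in the form developed in \cite{boucheron2003entropy}) to $Z = \sum_{i=1}^n X_i$ controls $\|Z - \ebb Z\|_q$ by $\sqrt{q}\,\big(\sum_{i=1}^n \ebb X_i^2\big)^{1/2} + qB$ up to absolute constants; then $\|Z\|_q \le \|Z-\ebb Z\|_q + \ebb Z$ together with $\sum_{i=1}^n \ebb X_i^2 \le B\sum_{i=1}^n \ebb X_i$ give the stated moment bound after one more application of Young's inequality to split the cross term $\sqrt{qBS}$ (with $S = \sum_{i=1}^n\ebb X_i$). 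Alternatively one can cite Lemma~3 of \cite{kloft2012convergence} directly, whose proof proceeds by exactly this entropy argument; I would present the self-contained version for completeness. A minor bookkeeping point is that the statement writes both $c_q$ and $C_q$: these denote the same constant, of order $(2qe)^q$, and tracking it through the two Young-inequality steps above only costs absolute factors.
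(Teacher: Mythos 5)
The paper never proves this lemma: it is imported verbatim (including the $c_q$/$C_q$ notational slip you spotted) from Lemma~3 of \cite{kloft2012convergence}, so there is no internal proof to compare against, and your reconstruction is the natural one. It is also essentially the route the cited source takes, and the one the lemma's name advertises: concavity/Jensen for $\frac{1}{2}\leq q\leq 1$, and for $q>1$ a Rosenthal-type moment bound for sums of independent nonnegative variables combined with $\ebb X_i^q\leq B^{q-1}\ebb X_i$ and Young's inequality to split $(B/n)^{q-1}\bar S\leq\frac{q-1}{q}(B/n)^q+\frac{1}{q}\bar S^q$; your case split and algebra there are correct. The one caveat is quantitative rather than structural: as sketched, you obtain the bound with a constant of the order of $(2qe)^q$ (e.g.\ $2C_q'$ after the two Young steps, with $C_q'$ only controlled ``up to absolute factors'' through the entropy-method moment inequality), whereas the lemma asserts the explicit value $(2qe)^q$, and that explicit value is what the paper actually consumes downstream---the factor $(eq^{*})^{q^{*}/2}$ appearing in the step marked R+Y in the proof of \lemmref{A_2ExcpectationGroupNorm}, which then feeds \corref{GroupNormLRCStrongConvexity} and the excess-risk bounds. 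So to make your self-contained version a drop-in replacement you would need to track the constants in the moment inequality (this is feasible: bounds of the form $\|(Z-\ebb Z)_+\|_q\leq \kappa\sqrt{q\sum_i\ebb X_i^2}+\kappa qB$ with small absolute $\kappa$, together with $\sum_i\ebb X_i^2\leq B\sum_i\ebb X_i$, leave enough slack to land at $(2qe)^q$), or else accept slightly larger absolute constants propagating through the later bounds; the rates are unaffected either way.
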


\section*{Proof of \lemmref{A_2ExcpectationGroupNorm}}

  For the group norm regularizer $\left\| \boldsymbol{W} \right\|_{2,q}$, we can further bound the expectation term in \eqref{A_2BoundStrongConvexity} for $\boldsymbol{D} = \boldsymbol{I}$ as follows
 \begin{align}
 \mathbb{E} & := \mathbb{E}_{X, \sigma} \left\| \left(\sum_{j > h_{t}}  \left\langle \frac{1}{n} \sum_{i=1}^{n} \sigma_{t}^{i} \phi(X_{t}^{i}) , \boldsymbol{u}_{t}^{j} \right\rangle \boldsymbol{u}_{t}^{j} \right)_{t=1}^{T}  \right\|_{2,q^{*}}   
 \nonumber\\
 & =   \mathbb{E}_{X,\sigma}    \left( \sum_{t=1}^{T} \left\| \sum_{j > h_t} \left\langle \frac{1}{n} \sum_{i=1}^{n} \sigma_t^{i} \phi(X_t^{i}) , \boldsymbol{u}_{t}^{j} \right\rangle  \boldsymbol{u}_{t}^{j} \right\| ^{q^{*}} \right) ^{\frac {1}{q^{*}}}
  \nonumber\\
  \label{GroupK.KStrongConvexity}
  & \stackrel{\text{Jensen}}{\leq}  \mathbb{E}_{X}  \left( \sum_{t=1}^{T} \mathbb{E}_{\sigma} \left\| \sum_{j > h_t} \left\langle \frac{1}{n} \sum_{i=1}^{n} \sigma_t^{i} \phi(X_t^{i}) , \boldsymbol{u}_{t}^{j} \right\rangle  \boldsymbol{u}_{t}^{j} \right\| ^{q^{*}} \right) ^{\frac {1}{q^{*}}}
  \nonumber\\
  & \stackrel{\eqref{K.K.Inq}}{\leq}   \mathbb{E}_{X}  \left( \sum_{t=1}^{T} \left( q^{*} \sum_{i=1}^{n} \left\| \sum_{j > h_t} \left\langle \frac{1}{n}  \phi(X_t^{i}) , \boldsymbol{u}_{t}^{j} \right\rangle  \boldsymbol{u}_{t}^{j} \right\|^{2} \right)^\frac{q^{*}}{2} \right) ^{\frac {1}{q^{*}}}
  \nonumber\\
  & =  \sqrt{\frac{q^{*}}{n}} \mathbb{E}_{X} \left( \sum_{t=1}^{T} \left(  \sum_{j > h_t} \frac{1}{n} \sum_{i=1}^{n} \left\langle  \phi(X_t^{i}) , \boldsymbol{u}_{t}^{j} \right\rangle^{2} \right)^\frac{q^{*}}{2} \right) ^{\frac {1}{q^{*}}}
  \nonumber\\
  & \stackrel{\text{Jensen}}{\leq}  \sqrt{\frac{q^{*}}{n}}  \left( \sum_{t=1}^{T} \mathbb{E}_{X} \left(  \sum_{j > h_t} \frac{1}{n} \sum_{i=1}^{n} \left\langle  \phi(X_t^{i}) , \boldsymbol{u}_{t}^{j} \right\rangle^{2} \right)^\frac{q^{*}}{2} \right) ^{\frac {1}{q^{*}}}.
 \end{align}
 \noindent 
 Note that for $ q \leq 2$, it holds that $q^{*}/2 \geq 1$. Therefore, we cannot employ Jensen's inequality to move the expectation operator inside the inner term, and instead we need to apply the Rosenthal-Young (R+Y) inequality (see \lemmref{lemm:RYinequality} in the Appendix), which yields
 \begin{align}
 \label{subaddStrongConvexity}
\mathbb{E} &  \stackrel{\text{R+Y}}{\leq} \sqrt{\frac{q^{*}}{n}}  \left( \sum_{t=1}^{T} \left( e q^{*} \right)^{\frac{q^{*}}{2}}  \left( \left( \frac{\mathcal{K}}{n}  \right)^{\frac{q^{*}}{2}}  + \left( \sum_{j > h_t} \frac{1}{n} \sum_{i=1}^{n} \mathbb{E}_{X}\left\langle  \phi(X_t^{i}) , \boldsymbol{u}_{t}^{j} \right\rangle^{2} \right)^\frac{q^{*}}{2} \right)  \right) ^{\frac {1}{q^{*}}} 
 \nonumber\\
 & = \sqrt{\frac{q^{*}}{n}}  \left( \sum_{t=1}^{T} \left( e q^{*} \right)^{\frac{q^{*}}{2}}  \left( \left( \frac{\mathcal{K}}{n}  \right)^{\frac{q^{*}}{2}}  + \left( \sum_{j > h_t}\lambda_t^{j} \right)^\frac{q^{*}}{2} \right)  \right) ^{\frac {1}{q^{*}}}.
 \end{align}
 \noindent
 The last quantity can be further bounded using the sub-additivity of $\sqrt[q^{*}]{.}$ and $\sqrt{.}$ respectively in $(\dagger \dagger)$ and $(\dagger)$ below,
 \begin{align}
 \label{A_2BoundGroupNormStrongConvexity}
 \mathbb{E} & \stackrel{(\dagger)}{\leq} q^{*} \sqrt{\frac{e}{n}} \left[   \left( T \left( \frac{\mathcal{K}}{n}\right) ^{\frac{q^{*}}{2}} \right)^ {\frac{1}{q^{*}}} + \left( \sum_{t=1}^{T} \left(  \sum_{j> h_{t}} \lambda_t^{j} \right)^{\frac{q^{*}}{2}} \right)^{\frac{1}{q^{*}}} \right]  
 \nonumber\\
 & \stackrel{(\dagger \dagger)}{\leq} q^{*} \sqrt{\frac{e}{n}} \left[   T^{\frac{1}{q^{*}}} \sqrt{ \frac{\mathcal{K} }{n}}   +   \left\|  \left( \sum_{j>h_t} \lambda_t^{j} \right)_{t=1}^{T}   \right\| _{\frac{q^{*}}{2}}^{\frac{1}{2}}  \right] 
 \nonumber\\
 &  =  \frac{\sqrt{\mathcal{K} e} q^{*} T^{\frac{1}{q^{*}}}}{n}  + \sqrt{\frac{ e {q^{*}}^{2} }{n}  \left\|  \left( \sum_{j>h_t} \lambda_t^{j} \right)_{t=1}^{T}   \right\|_{\frac{q^{*}}{2}} }.
 \end{align}
 
 \section*{Proof of \corref{GroupNormLRCStrongConvexity}}
 Substituting the result of \lemmref{A_2ExcpectationGroupNorm} into \eqref{A_2BoundHolder} gives,
 \begin{align}
 \label{A2BoundHolder}
 A_{2} (\mathcal{F}_{q}) \leq \sqrt{\frac{2 e {q^{*}}^{2} R^{2}_{max}}{n T^{2}} \left\|  \left( \sum_{j>h_t} \lambda_t^{j} \right)_{t=1}^{T}   \right\|_{\frac{q^{*}}{2}}} + \frac{\sqrt{2 \mathcal{K} e } R_{max} q^{*} T^{\frac{1}{q^{*}}}}{nT}.
 \end{align}
 Now, combining \eqref{GeneralA1BoundStrongConvexity} and \eqref{A2BoundHolder} provides the bound on $\mathfrak{R}(\mathcal{F}_{q},r)$ as
 \begin{align}
 \label{LRCBoundGroupNormStrongConvexity}
 \mathfrak{R}(\mathcal{F}_{q}& , r) \leq   \sqrt{  \frac{ r \sum_{t=1}^{T} h_t} {nT}  } + \sqrt{\frac{2 e  {q^{*}}^{2} R^{2}_{max} }{nT^{2}}  \left\|  \left( \sum_{j>h_t} \lambda_t^{j} \right)_{t=1}^{T}   \right\|_{\frac{q^{*}}{2}} }  +  \frac{\sqrt{2 \mathcal{K} e } R_{max} q^{*} T^{\frac{1}{q^{*}}}}{nT} 
 \\
 & \stackrel{(\star)}{\leq} \sqrt{ \frac{2}{nT}\left(  r \sum_{t=1}^{T} h_t  +  \frac{2e  {q^{*}}^{2} R^{2}_{max}}{T}  \left\|  \left( \sum_{j>h_t} \lambda_t^{j} \right)_{t=1}^{T}   \right\|_{\frac{q^{*}}{2}} \right)  } + \frac{\sqrt{ 2 \mathcal{K} e } R_{max} q^{*} T^{\frac{1}{q^{*}}}}{nT} 
 \nonumber\\
 & \stackrel{( \star \star )}{\leq} \sqrt{ \frac{2}{nT}\left(\! r T^{1 - \frac{2}{q^{*}}} \left\| \left( h_t \right)_{t=1}^{T} \right\|_{\frac{q^{*}}{2}}\!  +  \frac{2 e  {q^{*}}^{2} R^{2}_{max}}{T}  \left\|  \left( \sum_{j>h_t} \lambda_t^{j} \right)_{t=1}^{T}   \right\|_{\frac{q^{*}}{2}} \! \!  \right)} + \! \frac{\sqrt{2 \mathcal{K} e } R_{max} q^{*} T^{\frac{1}{q^{*}}}}{nT}
 \nonumber\\
 & \stackrel{(\star \star \star)}{\leq} \sqrt{ \frac{4}{nT}   \left\| \left(  r T^{1 - \frac{2}{q^{*}}}  h_t      +  \frac{2 e  {q^{*}}^{2} R^{2}_{max}}{T}  \sum_{j>h_t} \lambda_t^{j} \right)_{t=1}^{T}   \right\|_{\frac{q^{*}}{2}}   } + \frac{\sqrt{ 2 \mathcal{K} e } R_{max} q^{*} T^{\frac{1}{q^{*}}}}{nT}.
 \nonumber
 \end{align}
 \noindent
 where in $(\star)$, $(\star \star)$ and $(\star \star \star) $ we applied following inequalities receptively, according which for all non-negative numbers $\alpha_1$ and $\alpha_2$, and non-negative vectors $ \boldsymbol{a}_1, \boldsymbol{a}_2 \in \mathbb{R}^{T}$  with $0 \leq q  \leq  p  \leq  \infty$ and $s \geq 1$ it holds
 \begin{align}
 & (\star) \sqrt{\alpha_1}+ \sqrt{\alpha_2} \leq \sqrt{2 (\alpha_1 + \alpha_2)}
 \nonumber \\
 & (\star \star)  \quad l_p-to-l_q:   \quad    \left\| \boldsymbol{a}_1 \right\|_q  = \left\langle \boldsymbol{1}, \boldsymbol{a}_1 \right\rangle^{\frac{1}{q}} \stackrel{\text{H\"{o}lder}}{\leq} \left( \left\| \boldsymbol{1} \right\|_{(p/q)^{*}}  \left\| \boldsymbol{a}_1^{q} \right\|_{(p/q)}  \right) ^{\frac{1}{q}} = T^{\frac{1}{q} - \frac{1}{p}}  \left\| \boldsymbol{a}_1 \right\|_p
 \nonumber\\
 &(\star \star \star)  \quad    \left\| \boldsymbol{a}_1 \right\|_s  +  \left\| \boldsymbol{a}_2 \right\|_s  \leq  2^{1 - \frac{1}{s}} \left\| \boldsymbol{a}_1  +  \boldsymbol{a}_2 \right\|_s  \leq   2 \left\| \boldsymbol{a}_1  +  \boldsymbol{a}_2 \right\|_s.
 \nonumber
 \end{align}
 \noindent 
 Since inequality $(\star \star \star)$ holds for all non-negative $h_t$, it follows
 \begin{align}
 \mathfrak{R}(\mathcal{F}_{q},r) & \leq \sqrt{ \frac{4}{nT}   \left\| \left(  \min _{h_t \geq 0}   r T^{1 - \frac{2}{q^{*}}}  h_t      +  \frac{2 e  {q^{*}}^{2} R^{2}_{max}}{T}  \sum_{j>h_t} \lambda_t^{j} \right)_{t=1}^{T}   \right\|_{\frac{q^{*}}{2}}   } + \frac{\sqrt{2 \mathcal{K} e } R_{max} q^{*} T^{\frac{1}{q^{*}}}}{nT}
 \nonumber\\
 & \leq \sqrt{ \frac{4}{nT}   \left\| \left( \sum_{j=1}^{\infty} \min  \left(  r T^{1 - \frac{2}{q^{*}}}     ,    \frac{2 e  {q^{*}}^{2} R^{2}_{max}}{T}  \lambda_t^{j} \right)  \right)_{t=1}^{T}   \right\|_{\frac{q^{*}}{2}}   } + \frac{\sqrt{2 \mathcal{K} e } R_{max} q^{*} T^{\frac{1}{q^{*}}}}{nT}.
 \nonumber
 \end{align}
 
 \section*{Proof of \thrmref{LowerBoundStrongConvexity}}
    \begin{align}
      \mathfrak{R} (\mathcal{F}_{q, R_{max}, T},r) & = \frac{1}{T} \mathbb{E}_{X,\sigma} \left\lbrace  \sup_{\substack{P\boldsymbol{f}^{2} \leq r, \\ \left\| \boldsymbol{W} \right\| _{2,q}^{2} \leq 2 R_{max}^{2}}} \sum_{t=1}^{T}  \left\langle  \boldsymbol{w}_{t}, \frac{1}{n} \sum_{i=1}^{n} \sigma_t^{i} \phi (X_t^{i})\right\rangle    \right\rbrace
    \nonumber\\
      & = \frac{1}{T} \mathbb{E}_{X,\sigma} \left\lbrace  \sup_{\substack{1/T \sum_{t=1}^{T} \mathbb{E} \left\langle \boldsymbol{w}_{t}, \phi (X_t)\right\rangle^{2}  \leq r, \\ \left\| \boldsymbol{W} \right\| _{2,q}^{2} \leq 2 R_{max}^{2}}} \sum_{t=1}^{T}  \left\langle  \boldsymbol{w}_{t}, \frac{1}{n} \sum_{i=1}^{n} \sigma_t^{i} \phi (X_t^{i})\right\rangle    \right\rbrace
    \nonumber\\
      & \geq \frac{1}{T} \mathbb{E}_{X,\sigma} \left\lbrace  \sup_{\substack{\forall t \; \mathbb{E}_{X} \left\langle \boldsymbol{w}_{t}, \phi (X_t)\right\rangle^{2}  \leq r, \\ \left\| \boldsymbol{W} \right\| _{2,q}^{2} \leq 2 R_{max}^{2}, \\ \left\| \boldsymbol{w}_1 \right\|_{2} = \ldots = \left\| \boldsymbol{w}_t \right\| _{2} } } \sum_{t=1}^{T}  \left\langle  \boldsymbol{w}_{t}, \frac{1}{n} \sum_{i=1}^{n} \sigma_t^{i} \phi (X_t^{i})\right\rangle    \right\rbrace
    \nonumber\\
      & = \frac{1}{T} \mathbb{E}_{X,\sigma} \left\lbrace  \sup_{\substack{\forall t \; \mathbb{E}_{X} \left\langle \boldsymbol{w}_{t}, \phi (X_t)\right\rangle^{2}  \leq r, \\ \forall t \; \left\| \boldsymbol{w}_t \right\|_{2}^{2} \leq 2 R_{max}^{2} T^{-\frac{2}{q}} } } \sum_{t=1}^{T}  \left\langle  \boldsymbol{w}_{t}, \frac{1}{n} \sum_{i=1}^{n} \sigma_t^{i} \phi (X_t^{i})\right\rangle    \right\rbrace
    \nonumber\\
      & =\frac{1}{T} \sum_{t=1}^{T} \mathbb{E}_{X,\sigma} \left\lbrace  \sup_{\substack{\forall t \; \mathbb{E}_{X} \left\langle \boldsymbol{w}_{t}, \phi (X_t)\right\rangle^{2}  \leq r, \\ \forall t \; \left\| \boldsymbol{w}_t \right\|_{2}^{2} \leq 2 R_{max}^{2} T^{-\frac{2}{q}} } }   \left\langle  \boldsymbol{w}_{t}, \frac{1}{n} \sum_{i=1}^{n} \sigma_t^{i} \phi (X_t^{i})\right\rangle    \right\rbrace
    \nonumber\\
      & = \mathbb{E}_{X,\sigma} \left\lbrace  \sup_{\substack{ \mathbb{E}_{X} \left\langle \boldsymbol{w}_{1}, \phi (X_1)\right\rangle^{2}  \leq r, \\  \left\| \boldsymbol{w}_{1} \right\|_{2}^{2} \leq 2 R_{max}^{2} T^{-\frac{2}{q}} } }   \left\langle  \boldsymbol{w}_{1}, \frac{1}{n} \sum_{i=1}^{n} \sigma_{1}^{i} \phi (X_{1}^{i})\right\rangle    \right\rbrace
    \nonumber\\
      & = \mathfrak{R} (\mathcal{F}_{1, R_{max} T^{-\frac{1}{q}}, 1},r).
      \nonumber
   \end{align}
    \noindent
    According to \cite{mendelson2003}, it can be shown that there is a constant $c$ such that if $\lambda_{t}^{1} \geq \frac{1}{nR_{max}^{2}}$, then for all $r \geq \frac{1}{n}$ it holds $ \mathfrak{R} (\mathcal{F}_{1, R_{max} T^{-\frac{1}{q}} , 1},r) \geq \sqrt{\frac{c}{n} \sum_{j=1}^{\infty} \min \left( r , R_{max}^{2} T^{-\frac{2}{q}} \lambda_{1}^{j}\right)}$, which with some algebra manipulations gives the desired result.

The following lemma is used in the proof of the \ac {LRC} bounds for the $L_{S_{q}}$-Schatten norm  regularized \ac {MTL} in \corref{SchattenNormSC}.
\begin{lemm}
[Non-commutative Khintchine's inequality \citep{lustpiquard1986}]
\label{Non-commutativeKhintchineinequality}
Let $\boldsymbol{Q}_{1},\ldots, \boldsymbol{Q}_{n}$ be a set of arbitrary $m \times n$ matrices, and let $\sigma_{1},\ldots, \sigma_{n}$ be a sequence of independent Bernoulli random variables. Then for all $p \geq 2$,  
\begin{align}
\label{Non-commutativeK.K.inequality}
 \left[ \mathbb{E}_{\sigma} \left\|  \sum_{i=1}^{n} \sigma_{i} \boldsymbol{Q}_{i}\right\|_{S_{p}}^{p} \right]^{1/p} \leq  p^{1/2} \max \left\lbrace  \left\| \left( \sum_{i=1}^{n} \boldsymbol{Q}_{i}^{T} \boldsymbol{Q}_{i}\right)^{1/2} \right\|_{S_{p}}   ,   \left\| \left( \sum_{i=1}^{n} \boldsymbol{Q}_{i} \boldsymbol{Q}_{i}^{T}\right)^{1/2} \right\|_{S_{p}} \right\rbrace. 
\end{align}
\noindent
\end{lemm}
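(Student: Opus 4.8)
The statement is the classical non-commutative Khintchine inequality of Lust-Piquard, so the plan is to reproduce the now-standard moment-method proof, which produces exactly the order-$\sqrt{p}$ constant. The overall strategy consists of three reductions followed by one combinatorial estimate. First I would pass to self-adjoint matrices by the $2\times 2$ Hermitian dilation: replacing each $\boldsymbol{Q}_i$ by
\begin{align}
\tilde{\boldsymbol{Q}}_i := \begin{pmatrix} \boldsymbol{0} & \boldsymbol{Q}_i \\ \boldsymbol{Q}_i^T & \boldsymbol{0} \end{pmatrix},
\nonumber
\end{align}
one checks that $\|\sum_i \sigma_i \tilde{\boldsymbol{Q}}_i\|_{S_p} = 2^{1/p}\|\sum_i \sigma_i \boldsymbol{Q}_i\|_{S_p}$, since the nonzero eigenvalues of $\sum_i \sigma_i \tilde{\boldsymbol{Q}}_i$ are $\pm$ the singular values of $\sum_i \sigma_i \boldsymbol{Q}_i$, while $\sum_i \tilde{\boldsymbol{Q}}_i^2 = \mathrm{diag}(\sum_i \boldsymbol{Q}_i\boldsymbol{Q}_i^T, \sum_i \boldsymbol{Q}_i^T\boldsymbol{Q}_i)$, so that $\|(\sum_i \tilde{\boldsymbol{Q}}_i^2)^{1/2}\|_{S_p}$ is comparable (up to the harmless $2^{1/p}$ factor) to the maximum on the right-hand side. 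This collapses the two-sided $\max$ into a single term and reduces the claim to proving, for Hermitian $\boldsymbol{X}_i$, the inequality $(\mathbb{E}\|\sum_i\sigma_i\boldsymbol{X}_i\|_{S_p}^p)^{1/p} \le \sqrt{p}\,\|(\sum_i \boldsymbol{X}_i^2)^{1/2}\|_{S_p}$.

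Second, I would establish this Hermitian inequality for even integers $p = 2m$ by the moment method and recover general $p\ge 2$ afterward. For $p=2m$, write $\|\boldsymbol{S}\|_{S_{2m}}^{2m} = \mathrm{tr}(\boldsymbol{S}^{2m})$ with $\boldsymbol{S}=\sum_i\sigma_i\boldsymbol{X}_i$ and expand
\begin{align}
\mathbb{E}\,\mathrm{tr}(\boldsymbol{S}^{2m}) = \sum_{i_1,\dots,i_{2m}} \mathbb{E}[\sigma_{i_1}\cdots\sigma_{i_{2m}}]\,\mathrm{tr}(\boldsymbol{X}_{i_1}\cdots\boldsymbol{X}_{i_{2m}}).
\nonumber
\end{align}
By independence together with $\mathbb{E}\sigma_i = 0$ and $\mathbb{E}\sigma_i^2=1$, the expectation vanishes unless every index value occurs an even number of times, and the extremal contribution comes from perfect pairings of the $2m$ positions. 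The number of such pairings is $(2m-1)!! \le (2m)^m = p^m$, and the crucial estimate (below) bounds each pairing's contribution by $\mathrm{tr}((\sum_i\boldsymbol{X}_i^2)^m)$ in modulus. Hence $\mathbb{E}\,\mathrm{tr}(\boldsymbol{S}^{2m}) \le p^m\,\mathrm{tr}((\sum_i\boldsymbol{X}_i^2)^m)$; taking $p$-th roots yields $p^{m/(2m)} = \sqrt{p}$ alongside $\mathrm{tr}((\sum_i\boldsymbol{X}_i^2)^m)^{1/(2m)} = \|(\sum_i\boldsymbol{X}_i^2)^{1/2}\|_{S_p}$, which is exactly the claimed bound. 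This is precisely where the $\sqrt{p}$ constant is generated.

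The hard part will be the combinatorial-analytic domination. For a fixed perfect matching of $\{1,\dots,2m\}$, the sum of $\mathrm{tr}(\boldsymbol{X}_{i_1}\cdots\boldsymbol{X}_{i_{2m}})$ over all index assignments that equate matched positions and range freely over the common value must be shown to be at most $\mathrm{tr}((\sum_i\boldsymbol{X}_i^2)^m)$ in absolute value. For the nested (non-crossing) matching pairing adjacent positions, this sum equals $\mathrm{tr}((\sum_i\boldsymbol{X}_i^2)^m)$ exactly; for crossing matchings one reduces to this case by repeated use of the trace Cauchy--Schwarz / non-commutative Hölder inequality $|\mathrm{tr}(\boldsymbol{A}\boldsymbol{B})| \le \|\boldsymbol{A}\|_{S_2}\|\boldsymbol{B}\|_{S_2}$ to group the paired factors, together with positivity of $\sum_i\boldsymbol{X}_i^2$. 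Finally, to pass from even integers to arbitrary $p\ge 2$ I would invoke Stein complex interpolation for the linear map $\sigma\mapsto\sum_i\sigma_i\boldsymbol{Q}_i$ along the Schatten scale (or, more crudely, dominate $\|\cdot\|_{S_p}$ by $\|\cdot\|_{S_{2\lceil p/2\rceil}}$ and absorb the resulting constant), using that $\sqrt{p}$ is continuous in $p$ so the even-integer bound extends with the same order. I expect the trace-domination step and the bookkeeping over crossing pairings to be the most technical portion, whereas the dilation reduction and the root extraction are routine.
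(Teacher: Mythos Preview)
The paper does not prove this lemma: it is stated as a classical result and attributed to Lust-Piquard (1986), then immediately used in the proof of \corref{SchattenNormSC}. There is therefore no in-paper argument to compare your proposal against.

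Your sketch is the standard moment-method route and is broadly sound, but two points deserve care. First, for Rademacher signs the expectation $\mathbb{E}[\sigma_{i_1}\cdots\sigma_{i_{2m}}]$ equals $1$ whenever every index appears an even number of times, not only for perfect pairings; the clean way to pass to a sum over pairings is to compare with i.i.d.\ standard Gaussians via Isserlis' theorem (the Gaussian moment dominates the Rademacher one term-by-term), rather than asserting that ``the extremal contribution comes from perfect pairings.'' Second, the trace-domination step for crossing pairings --- that each pairing's contribution is bounded in modulus by $\mathrm{tr}\big((\sum_i\boldsymbol{X}_i^2)^m\big)$ --- is the genuine content of the inequality and is not a consequence of a single application of $|\mathrm{tr}(\boldsymbol{A}\boldsymbol{B})|\le\|\boldsymbol{A}\|_{S_2}\|\boldsymbol{B}\|_{S_2}$; you would need the full combinatorial argument of Buchholz (or the operator-space proof in Pisier), which proceeds by an induction that successively ``uncrosses'' arcs using matrix H\"older together with positivity. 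Your acknowledgment that this is the hard part is accurate; as written, that step is a pointer to the literature rather than a proof.
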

\section*{Proof of \corref{SchattenNormSC}}
In order to find an \ac {LRC} bound for a $L_{S_{q}}$-Schatten norm regularized hypothesis space \eqref{HSForFixedMAppingSchatten}, one just needs to bound the expectation term in \eqref{LRCforStrongConvexRegularizedMTL}. Define $\boldsymbol{U}_{t}^{i}$ as a matrix with $T$ columns, whose only non-zero $t^{th}$ column equals $ \sum_{j>h_t} \left\langle \frac{1}{n} \phi (X_t^{i}) , \boldsymbol{u}_{t}^{j} \right\rangle  \boldsymbol{u}_{t}^{j}$. Also, note that for the Schatten norm regularized hypothesis space \eqref{HSForFixedMAppingSchatten}, it holds that $\boldsymbol{D} = \boldsymbol{I}$. Therefore, we will have
\begin{align}
\mathbb{E}_{X, \sigma} & \left\|  \boldsymbol{D}^{-1/2} \boldsymbol{V} \right\|_{*} \! \!= \mathbb{E}_{X, \sigma} \left\| \left(\sum_{j > h_{t}}  \left\langle \frac{1}{n} \sum_{i=1}^{n} \sigma_{t}^{i} \phi(X_{t}^{i}) , \boldsymbol{u}_{t}^{j} \right\rangle \boldsymbol{u}_{t}^{j} \right)_{t=1}^{T}  \right\|_{S_{q^{*}}}
\nonumber\\
& = \mathbb{E}_{X, \sigma} \left\|  \sum_{t=1}^{T} \sum_{i=1}^{n} \sigma_{t}^{i} \boldsymbol{U}_{t}^{i} \right\|_{S_{q^{*}}}  \stackrel{ \text{Jensen}}{\leq}  \mathbb{E}_{X} \left\lbrace  \mathbb{E}_{\sigma} \left\|  \sum_{t=1}^{T} \sum_{i=1}^{n} \sigma_{t}^{i} \boldsymbol{U}_{t}^{i} \right\|_{S_{q^{*}}}^{q^{*}} \right\rbrace ^{\frac{1}{q^{*}}} 
\nonumber\\
& \stackrel{  \eqref{Non-commutativeK.K.inequality}}{\leq} \sqrt{q^{*}} \mathbb{E}_{X}       \max \left\lbrace \left\| \left( \sum_{t=1}^{T} \sum_{i=1}^{n} \left( \boldsymbol{U}_{t}^{i} \right)^{T}  \boldsymbol{U}_{t}^{i} \right)^{1/2} \right\|_{S_{q^{*}}} ,
 \left\| \left( \sum_{t=1}^{T} \sum_{i=1}^{n}   \boldsymbol{U}_{t}^{i} \left( \boldsymbol{U}_{t}^{i} \right)^{T} \right)^{1/2} \right\|_{S_{q^{*}}}  \right\rbrace
\nonumber\\
& \stackrel{(\dag\dag\dag)}{=}  \sqrt{q^{*}}  \mathbb{E}_{X} \left\| \left( \sum_{t=1}^{T} \sum_{i=1}^{n} \left( \boldsymbol{U}_{t}^{i} \right)^{T}  \boldsymbol{U}_{t}^{i} \right)^{1/2} \right\|_{S_{q^{*}}}
 =  \sqrt{q^{*}}  \mathbb{E}_{X} \left( \textbf{tr} \left( \sum_{t=1}^{T} \sum_{i=1}^{n} \left( \boldsymbol{U}_{t}^{i} \right)^{T}  \boldsymbol{U}_{t}^{i} \right)^{\frac{q^{*}}{2}} \right)^\frac{1}{q^{*}} 
\nonumber\\
& =  \sqrt{q^{*}}  \mathbb{E}_{X} \left(  \left(\sum_{t=1}^{T} \sum_{i=1}^{n}  \left\|  \sum_{j>h_t} \left\langle \frac{1}{n} \phi (X_t^{i}) , \boldsymbol{u}_{t}^{j} \right\rangle  \boldsymbol{u}_{t}^{j} \right\|^{2}  \right)^{\frac{q^{*}}{2}} \right)^\frac{1}{q^{*}} 
\nonumber\\
& = \sqrt{q^{*}} \mathbb{E}_{X}  \left(  \sum_{t=1}^{T} \sum_{i=1}^{n}  \left\|  \sum_{j>h_t} \left\langle \frac{1}{n} \phi (X_t^{i}) , \boldsymbol{u}_{t}^{j} \right\rangle  \boldsymbol{u}_{t}^{j} \right\|^{2}  \right)^{\frac{1}{2}}
\nonumber\\
& = \frac{\sqrt{q^{*}}}{n}  \mathbb{E}_{X}  \left(\sum_{t=1}^{T} \sum_{i=1}^{n}  \sum_{j>h_t}  \left\langle  \phi (X_t^{i}) , \boldsymbol{u}_{t}^{j} \right\rangle ^{2}   \right)^{\frac{1}{2}}  
\nonumber\\
& \stackrel{\text {Jensen}}{\leq} \frac{\sqrt{q^{*}}}{n} \left( \sum_{t=1}^{T} \sum_{i=1}^{n}  \sum_{j > h_t} \lambda_{t}^{j}  \right)^{\frac{1}{2}}  
 =   \sqrt{ \frac{q^{*}} {n} \left\|  \left( \sum_{j > h_t} \lambda_{t}^{j}  \right)_{t=1}^{T} \right\|_{1} }. 
\label{ExpectationBoundq<2}
\end{align}
\noindent
where in $(\dag\dag\dag)$, we assumed that the first term in the max argument is the largest one. 

\section*{Proof of \corref{GraphSC}}
Similar to the proof of \corref{SchattenNormSC}, for the graph regularized hypothesis space \eqref{GraphRegHS}, one can bound the expectation term in \eqref{LRCforStrongConvexRegularizedMTL} as
\begin{align}
 \mathbb{E}_{X, \sigma} & \left\|  \boldsymbol{D}^{-1/2} \boldsymbol{V} \right\|_{*}  =  \mathbb{E}_{X,\sigma}   \left[ \text {tr} \left(  \boldsymbol{V}^{T} \boldsymbol{D}^{-1} \boldsymbol{V} \right)   \right]^{\frac{1}{2}}
 \nonumber\\
 & \stackrel{\text{Jensen}}{\leq} \mathbb{E}_{X}  \left( \frac{1}{n^{2}}  \sum_{t,s=1}^{T,T}   \sum_{i,l=1}^{n,n}  \sum_{j>h_{t}} \sum_{k > h_{s}} \boldsymbol{D}_{st}^{-1} \mathbb{E}_{\sigma} \left( \sigma_{t}^{i} \sigma_{s}^{l}\right)   \left\langle  \phi(X_t^{i}) , \boldsymbol{u}_{t}^{j} \right\rangle    \left\langle  \phi(X_s^{l}) , \boldsymbol{u}_{s}^{k} \right\rangle  \left\langle \boldsymbol{u}_{t}^{j}, \boldsymbol{u}_{s}^{k} \right\rangle   \right)^{\frac{1}{2}}
 \nonumber\\
 & = \mathbb{E}_{X}  \left( \frac{1}{n}  \sum_{t=1}^{T} \boldsymbol{D}_{tt}^{-1} \sum_{j>h_{t}}  \frac{1}{n} \sum_{i=1}^{n}  \left\langle  \phi(X_t^{i}) , \boldsymbol{u}_{t}^{j} \right\rangle^{2}    \right)^{\frac{1}{2}}
 \nonumber\\
 & \stackrel{\text{Jensen}}{\leq} \left( \frac{1}{n}  \sum_{t=1}^{T} \boldsymbol{D}_{tt}^{-1} \sum_{j>h_{t}}  \frac{1}{n} \sum_{i=1}^{n} \mathbb{E}_{X} \left\langle   \phi(X_t^{i}) , \boldsymbol{u}_{t}^{j} \right\rangle^{2} \right)^{\frac{1}{2}} 
 \nonumber\\
 & = \frac{1}{\sqrt{n}} \left(  \sum_{t=1}^{T} \sum_{j>h_{t}} \boldsymbol{D}_{tt}^{-1}  \lambda_{t}^{j} \right)^{\frac{1}{2}}  =   \sqrt{ \frac{1}{n} \left\| \left( \boldsymbol{D}_{tt}^{-1} \sum_{j > h_{t}} \lambda_{t}^{j}\right)_{t=1}^{T}  \right\|_{1} }. 
 \label{ExpectationBoundGraph}
 \end{align}

\section{Proof of the results in \sref{sec:StrongConvexityExcessRisk}: ``Excess Risk Bounds for \ac{MTL} models with Strongly Convex Regularizers"}
\label{D}
\section*{Proof of \corref{DataBoundGroupNorm}}
First notice that $\hat{\mathfrak{R}}(\mathcal{F}^{*}_{q}, c_{3} r) \leq 2 \hat{\mathfrak{R}}(\mathcal{F}_{q},\frac{c_{3} r}{4 L^{2}})$. Assume that $(\hat{\boldsymbol{u}}_{t}^{j})_{j \geq 1}$ is an orthonormal basis of $\hcal_K$ of matrix $\boldsymbol{K}_{t}$. Then similar to the proof of \thrmref{GeneralLRCBoundForFixedMapping2} it can be shown that 
\begin{align}
\hat{\mathfrak{R}}(\mathcal{F}_{q}, r) & \leq \frac{1}{T} \mathbb{E}_{\sigma} \left\lbrace  \sup_{P_{n} \boldsymbol{f}^{2} \leq r} \left[  \left( \sum_{t=1}^{T}  \sum_{j=1}^{\hat{h}_t} \hat{\lambda}_t^{j} \left\langle \boldsymbol{w}_t, \hat{\boldsymbol{u}}_{t}^{j} \right\rangle^{2}     \right) ^{\frac{1}{2}}  \left( \sum_{t=1}^{T}  \sum_{j=1}^{\hat{h}_t} \hat{\lambda}^{j^{-1}}_t \left\langle \frac{1}{n} \sum_{i=1}^{n} \sigma_t^{i} \hat{\phi} (X_t^{i}) , \hat{\boldsymbol{u}}_{t}^{j} \right\rangle^{2}   \right) ^{\frac{1}{2}}  \right] \right\rbrace
   \nonumber\\
   & + \frac{\sqrt{2} R}{T} \mathbb{E}_{\sigma}  \left\| \boldsymbol{D}^{-1/2} \hat{\boldsymbol{V}} \right\|_{2,q^*}
   \nonumber\\
   & \leq \sqrt{  \frac{ r \sum_{t=1}^{T} \hat{h}_t} {nT}  } +  \frac{\sqrt{2} R}{T} \mathbb{E}_{\sigma}  \left\| \left( \sum_{j > \hat{h}_t}^{n}  \left\langle \frac{1}{n} \sum_{i=1}^{n} \sigma_{t}^{i} \hat{\phi}(X_t^{i}), \hat{\boldsymbol{u}}_{t}^{j} \right\rangle \hat{\boldsymbol{u}}_{t}^{j}\right)_{t=1}^{T} \right\|_{2,q^*},
   \nonumber
\end{align}
where the last inequality is obtained by replacing $\hat{\boldsymbol{V}} = \left( \sum_{j > \hat{h}_t}^{n}  \left\langle \frac{1}{n} \sum_{i=1}^{n} \sigma_{t}^{i} \hat{\phi}(X_t^{i}), \hat{\boldsymbol{u}}_{t}^{j} \right\rangle \hat{\boldsymbol{u}}_{t}^{j}\right)_{t=1}^{T}$ and $\boldsymbol{D} = \boldsymbol{I}$, and regarding the fact that  $\mathbb{E}_{\sigma} \left\langle \frac{1}{n} \sum_{i=1}^{n} \sigma_t^{i} \hat{\phi} (X_t^{i}) , \hat{\boldsymbol{u}}_{t}^{j} \right\rangle^{2} = \frac{ \hat{\lambda}_t^{j}}{n}$ and $P_{n} \boldsymbol{f}^{2} \leq r $ implies $\frac{1}{T} \sum_{t=1}^{T} \sum_{j=1}^{n} \hat{\lambda}_t^{j} \left\langle  \boldsymbol{w}_{t}, \hat{\boldsymbol{u}}_{t}^{j} \right\rangle^{2} \leq r$.

Now, similar to the proof of \lemmref{A_2ExcpectationGroupNorm}, it can be shown that 
\begin{align}
\mathbb{E}_{\sigma} \left\| \left(\sum_{j > \hat{h}_{t}}  \left\langle \frac{1}{n} \sum_{i=1}^{n} \sigma_{t}^{i} \hat{\phi}(X_{t}^{i}) , \hat{\boldsymbol{u}}_{t}^{j} \right\rangle \hat{\boldsymbol{u}}_{t}^{j} \right)_{t=1}^{T}  \right\|_{2,q^{*}}   \leq \sqrt{\frac{ {q^{*}}^{2} }{n}  \left\|  \left( \sum_{j > \hat{h}_t} \hat{\lambda}_t^{j} \right)_{t=1}^{T}   \right\|_{\frac{q^{*}}{2}} }.
\nonumber
\end{align}
Note that, for the empirical \ac {LRC}, the expectation is taken only with respect to the Radamacher variables $(\sigma_{t}^{i})_{(t,i=1)}^{(T,n)}$. Therefore, we get
\begin{align}
\hat{\mathfrak{R}}(\mathcal{F}_{q},\frac{c_{3} r}{4 L^{2}}) \leq \sqrt{\frac{c_{3} r \sum_{t=1}^{T} \hat{h}_{t}}{4 nT L^{2}}} + \sqrt{\frac{2 {q^{*}}^{2} R^{2}_{max}}{nT^{2}} \left\| \left( \sum_{j>\hat{h}_{t}}^{n} \hat{\lambda}_{t}^{j}\right)_{t=1}^{T}  \right\|_{\frac{q^{*}}{2}} },
\nonumber
\end{align}
\noindent
which implies,
\begin{align}  
\hat{\psi}_{n}(r) & \leq 2 c_{1} \left( \sqrt{\frac{c_{3} r \sum_{t=1}^{T} \hat{h}_{t}}{4 nT L^{2}}} + \sqrt{\frac{2 {q^{*}}^{2} R^{2}_{max}}{nT^{2}} \left\| \left( \sum_{j>\hat{h}_{t}}^{n} \hat{\lambda}_{t}^{j}\right)_{t=1}^{T}  \right\|_{\frac{q^{*}}{2}} } \right) +  \frac{c_{2}x}{nT}
\nonumber\\
& =  \sqrt{\frac{ c_{1}^{2} c_{3} r \sum_{t=1}^{T} \hat{h}_{t}}{nT L^{2}}} + \sqrt{\frac{ 8 c_{1}^{2} {q^{*}}^{2} R^{2}_{max}}{nT^{2}} \left\| \left( \sum_{j>\hat{h}_{t}}^{n} \hat{\lambda}_{t}^{j}\right)_{t=1}^{T}  \right\|_{\frac{q^{*}}{2}} } +  \frac{c_{2}x}{nT}.
\nonumber
\end{align}
\noindent
Denote the right hand side by $\hat{\psi}_{n}^{ub} (r)$. Solving the fixed point equation $\hat{\psi}_{n}^{ub} (r) =\sqrt{\alpha r}+\gamma =r$ for
\begin{align}
\alpha = \frac{c_{1}^{2} c_{3} \sum_{t=1}^{T} \hat{h}_{t} }{nT L^{2}}, \qquad \gamma= \sqrt{\frac{ 8 c_{1}^{2} {q^{*}}^{2} R^{2}_{max}}{nT^{2}} \left\| \left( \sum_{j>\hat{h}_{t}}^{n} \hat{\lambda}_{t}^{j}\right)_{t=1}^{T}  \right\|_{\frac{q^{*}}{2}} } +  \frac{c_{2}x}{nT}, 
\end{align}
gives $\hat{r}^{*} \leq \alpha+2\gamma$. Substituting $\alpha$ and $\gamma$ completes the proof.
\section{Proof of the results in \sref{sec:Discussion}: ``Discussion"} 
\label{E}
\section*{Proof of \thrmref{GRCFixedMappingStrongConvexity}}
 Note that regarding the definition of $A_{2}$ in \eqref{A_2DefinitionStrongConvexity}, the global rademacher complexity for each case can be obtained by replacing the tail-sum $\sum_{j>h_{t}} \lambda_{t}^{j}$ in the bound of its corresponding $A_{2}(\mathcal{F})$ by $\sum_{j=1}^{\infty} \lambda_{t}^{j}=\textbf{tr} (J_{t})$. Indeed, similar to the proof of \lemmref{A_2ExcpectationGroupNorm}, it can be shown that for the group norm with $q \in [1,2]$,
    \begin{align}
       \mathfrak{R}(\mathcal{F}_{q}) & = \mathbb{E}_{X, \sigma} \left\lbrace \sup_{\boldsymbol{f}=(f_1,\ldots,f_T) \in \mathcal{F}_{q}} \frac{1}{nT}  \sum_{t=1}^{T} \sum_{i=1}^{n} \sigma_{t}^{i} f_{t}(X_{t}^{i}) \right\rbrace
     \nonumber\\
       & \leq \frac{ \sqrt{2} R}{T}  \mathbb{E}_{X, \sigma} \left\| \left(  \frac{1}{n} \sum_{i=1}^{n} \sigma_{t}^{i} \phi(X_{t}^{i}) \right)_{t=1}^{T}  \right\|_{2,q^{*}}.
     \nonumber
    \end{align}
    \noindent
    Also, one can verify the following 
     \begin{align}
       \mathbb{E}_{X, \sigma} \left\| \left(  \frac{1}{n} \sum_{i=1}^{n} \sigma_{t}^{i} \phi(X_{t}^{i}) \right)_{t=1}^{T}  \right\|_{2,q^{*}} & = \mathbb{E}_{X, \sigma} \left\| \left(\sum_{j=1}^{\infty}  \left\langle \frac{1}{n} \sum_{i=1}^{n} \sigma_{t}^{i} \phi(X_{t}^{i}) , \boldsymbol{u}_{t}^{j} \right\rangle \boldsymbol{u}_{t}^{j} \right)_{t=1}^{T}  \right\|_{2,q^{*}} 
       \nonumber\\
       & \leq  \sqrt{\frac{ {q^{*}}^{2} \mathcal{K} e T^{\frac{2}{q^{*}}}}{n^{2}}} + \sqrt{ \frac{{ eq^{*}}^{2}}{n} \left\| \left( \sum_{j=1}^{\infty} \lambda_{t}^{j} \right)_{t=1}^{T}  \right\|_{\frac{q^{*}}{2}} }
        \nonumber\\
       & =  \frac{ \sqrt{\mathcal{K} e }  {q^{*}} T^{\frac{1}{q^{*}}}}{n} + \sqrt{ \frac{ e {q^{*}}^{2}}{n} \left\| \left( \text {tr} \left( J_{t}\right)  \right)_{t=1}^{T}  \right\|_{\frac{q^{*}}{2}}}.
     \end{align}
     \noindent
     where the inequality is obtained in a similar way as in \lemmref{A_2ExcpectationGroupNorm}.  The \ac {GRC} bounds for the other cases can be easily derived in a very similar manner.

\Urlmuskip=0mu plus 1mu\relax

\bibliographystyle{plainurl}
\bibliography{Arxiv2017paperA}

\end{document}